\documentclass[10pt]{article} 
\usepackage[accepted]{tmlr}

\usepackage{amsmath,amsfonts,bm,hyperref,url}

\usepackage{amsfonts}
\usepackage{amsthm,physics}
\usepackage{amsmath}
\usepackage{amscd}
\usepackage{t1enc}
\usepackage[mathscr]{eucal}
\usepackage{indentfirst}
\usepackage{graphicx}
\usepackage{graphics}
\usepackage{pict2e}
\usepackage{epic}
\numberwithin{equation}{section}
\usepackage{epstopdf}
\usepackage{amsmath,amsthm,verbatim,amssymb,amsfonts,amscd,graphicx}
\usepackage{mathtools}
\usepackage{bm}
\usepackage{soul}

\usepackage[margin=3cm]{caption}
\usepackage{amsmath}
\usepackage{amsthm}
\usepackage{amssymb}
\usepackage{amscd}
\usepackage{mathtools}

\usepackage{hyperref}
\definecolor{mydarkblue}{rgb}{0,0.08,0.45}
\hypersetup{ %
	colorlinks=true,   
	linkcolor=mydarkblue,
	citecolor=mydarkblue,
	filecolor=mydarkblue,
	urlcolor=mydarkblue}

\usepackage{subcaption}
\usepackage{stmaryrd}
\usepackage{booktabs}
\usepackage{mdframed}
\usepackage{upgreek}
\usepackage{mathtools} \mathtoolsset{showonlyrefs=true} \MakeRobust{\eqref}
\usepackage{listings}
\usepackage{color}
\usepackage{stackrel}
\usepackage{textcomp}
\usepackage{enumerate}
\usepackage{accents}
\usepackage{graphicx}
\usepackage{epstopdf}
\usepackage{placeins}
\usepackage{lscape}
\usepackage{bbm}
\usepackage{bm}
\usepackage{accents}
\usepackage[mathscr]{eucal}
\usepackage[linesnumbered,boxed,titlenumbered,ruled,nofillcomment]{algorithm2e}
\usepackage[T1]{fontenc}
\usepackage{lmodern}
\usepackage{scalerel}
\usepackage{wrapfig}
\let\classAND\AND
\let\AND\relax
\usepackage{algorithmic}

\let\AND\classAND
\AtBeginEnvironment{algorithmic}{\let\AND\algoAND}
\usepackage{float}
\usepackage{multirow}


\definecolor{lbcolor}{rgb}{0.9,0.9,0.9}
\lstset{
	backgroundcolor=\color{lbcolor},
	tabsize=4,
	rulecolor=,
	language=matlab,
	basicstyle=\ttfamily,
	upquote=true,
	aboveskip={1.5\baselineskip},
	columns=fixed,
	showstringspaces=false,
	extendedchars=true,
	breaklines=true,
	prebreak = \raisebox{0ex}[0ex][0ex]{\ensuremath{\hookleftarrow}},
	frame=single,
	showtabs=false,
	showspaces=false,
	showstringspaces=false,
	identifierstyle=\ttfamily,
	keywordstyle=\color[rgb]{0,0,1},
	commentstyle=\color[rgb]{0.133,0.545,0.133},
	stringstyle=\color[rgb]{0.627,0.126,0.941},
	numbers=left,
}


\setcounter{MaxMatrixCols}{20}

\theoremstyle{plain}
\newtheorem{theorem}{Theorem} \numberwithin{theorem}{section}

\newtheorem{lemma}[theorem]{Lemma}

\theoremstyle{definition}
\newtheorem{remark}[theorem]{Remark}

\newtheorem{assumption}[theorem]{Assumption}















\DeclareMathOperator*{\argmin}{arg\,min}




\definecolor{mypink1}{rgb}{0.858, 0.188, 0.478}
\definecolor{mypink2}{RGB}{219, 48, 122}
\definecolor{mypink3}{cmyk}{0, 0.7808, 0.4429, 0.1412}
\definecolor{mygray}{gray}{0.6}

\newlength{\leftstackrelawd}
\newlength{\leftstackrelbwd}
\def\leftstackrel#1#2{\settowidth{\leftstackrelawd}%
{${{}^{#1}}$}\settowidth{\leftstackrelbwd}{$#2$}%
\addtolength{\leftstackrelawd}{-\leftstackrelbwd}%
\leavevmode\ifthenelse{\lengthtest{\leftstackrelawd>0pt}}%
{\kern-.5\leftstackrelawd}{}\mathrel{\mathop{#2}\limits^{#1}}}

\def\R{{\mathbb R}}

\def\E{{\mathbb E}}
\def\R{{\mathbb R}}



\title{Stochastic Subspace Descent Accelerated via Bi-fidelity Line Search}


\author{\name Nuojin Cheng \email nuojin.cheng@colorado.edu \\
      \addr Department of Applied Mathematics\\
      University of Colorado Boulder
      \AND
      \name Alireza Doostan \email alireza.doostan@colorado.edu \\
      \addr Smead Aerospace Engineering Sciences Department\\
      University of Colorado Boulder
      \AND
      \name Stephen Becker \email stephen.becker@colorado.edu\\
      \addr Department of Applied Mathematics \\
      University of Colorado Boulder}



\DeclareMathOperator*{\argmax}{argmax} 

\newcommand{\HF}{\textup{HF}}
\newcommand{\LF}{\textup{LF}}
\newcommand{\alphaMax}{\alpha_\textup{max}}
\newcommand{\Lip}{L}
\newcommand{\interceptSurrogate}{\tilde{\psi}_k}
\newcommand{\HFSurrogate}{\tilde{\varphi}_k}


\begin{document}

\maketitle





\begin{abstract}

    Efficient optimization remains a fundamental challenge across numerous scientific and engineering domains, particularly when objective function evaluations are computationally expensive and gradient information is inaccessible. While zeroth-order optimization methods address the lack of gradients, their performance often suffers due to the high cost of repeated function queries. This work introduces a bi-fidelity line search scheme tailored for zeroth-order optimization. Our method constructs a temporary surrogate model by strategically combining inexpensive low-fidelity (LF) evaluations with accurate high-fidelity (HF) evaluations of the objective function. This surrogate enables an efficient backtracking line search for step size selection, significantly reducing the number of costly HF queries required. We provide theoretical convergence guarantees for this scheme under standard assumptions. Furthermore, we integrate this bi-fidelity strategy into the stochastic subspace descent framework, proposing the bi-fidelity stochastic subspace descent (BF-SSD) algorithm. A comprehensive empirical evaluation of BF-SSD is conducted across four distinct problems: a synthetic optimization benchmark, dual-form kernel ridge regression, black-box adversarial attacks on machine learning models, and transformer-based black-box language model fine-tuning. The numerical results consistently demonstrate that BF-SSD achieves superior optimization performance, particularly in terms of solution quality obtained per HF function evaluation, when compared against relevant baseline methods. This study highlights the efficacy of integrating bi-fidelity strategies within zeroth-order optimization frameworks, positioning BF-SSD as a promising and computationally efficient approach for tackling large-scale, high-dimensional optimization problems encountered in various real-world applications.

\end{abstract}


\section{Introduction}
\label{sec:intro} 

\noindent In this work, we are interested in the unconstrained optimization problem
\begin{equation}\label{eq:main_problem}
    \bm x^\ast \in \argmin_{\bm x \in \R^D} f(\bm x),
\end{equation}
where the objective function $f: \mathbb{R}^D \to \mathbb{R}$ is assumed to be L-smooth (i.e., its gradient $\nabla f$ is L-Lipschitz continuous). Crucially, we operate in a black-box setting where direct access to the gradient $\nabla f$ is unavailable or computationally infeasible to obtain with low complexity (e.g., via closed-form expressions or automatic differentiation), thereby precluding the direct application of standard first-order or higher-order optimization techniques. Furthermore, we focus on high-dimensional scenarios where D is large (e.g., $D \gtrsim 100$), posing significant challenges related to computational cost and scalability for many traditional derivative-free optimization methods.

The primary focus of this work is on selecting an appropriate step size (a.k.a learning rate) $\alpha_k > 0$ for iterative descent schemes of the form:
\begin{equation}\label{eq:iterative_scheme}
    \bm x_{k+1} = \bm x_k - \alpha_k \bm v_k,
\end{equation}
where $\bm v_k \in \R^D$ represents an estimate of the true gradient $\nabla f(\bm x_k)$ or another suitable descent direction at iteration $k$. Selecting an appropriate step size $\alpha_k$ dynamically can significantly improve the convergence performance of the optimization process. This is illustrated in Figure~\ref{fig:worst-linesearch}, where an example function is optimized using different methods with and without a step size tuning scheme. However, common practices in machine learning often involve either using a fixed step size throughout the optimization or employing a predefined adaptive step size scheduling, e.g., \cite{duchi2011adaptive}. While convenient to implement, these approaches often neglect the intrinsic local geometry and characteristics of the objective function.

In contrast, classical line search methods, including exact line search and backtracking algorithms, typically yield better step sizes by incorporating information from additional objective function evaluations within each iteration. However, the cost of these additional evaluations can render such methods impractical, particularly when the computational budget is limited -- a common situation in black-box optimization or when evaluating 
$f$ (or its gradients) is expensive.

To address this limitation, we propose a novel approach to tune the step size by leveraging multi-fidelity evaluations of the objective function. Here, multi-fidelity modeling involves utilizing two or more levels of objective function representations: the high-fidelity (HF) objective $f$, which provides accurate but expensive evaluations, and one or more low-fidelity (LF) objectives that serve as cheaper, albeit less accurate, approximations of $f$. We emphasize that this multi-fidelity setup (often bi-fidelity, involving one HF and one LF model) does not necessarily need to originate from the problem's inherent structure. Even for a problem initially defined with a single fidelity, a practitioner can potentially construct a multi-fidelity extension (e.g., by creating simplified surrogate models) to accelerate the optimization procedure, particularly through more informed step size selection.

\begin{figure}[ht]
    \centering
    \includegraphics[width=0.6\textwidth]{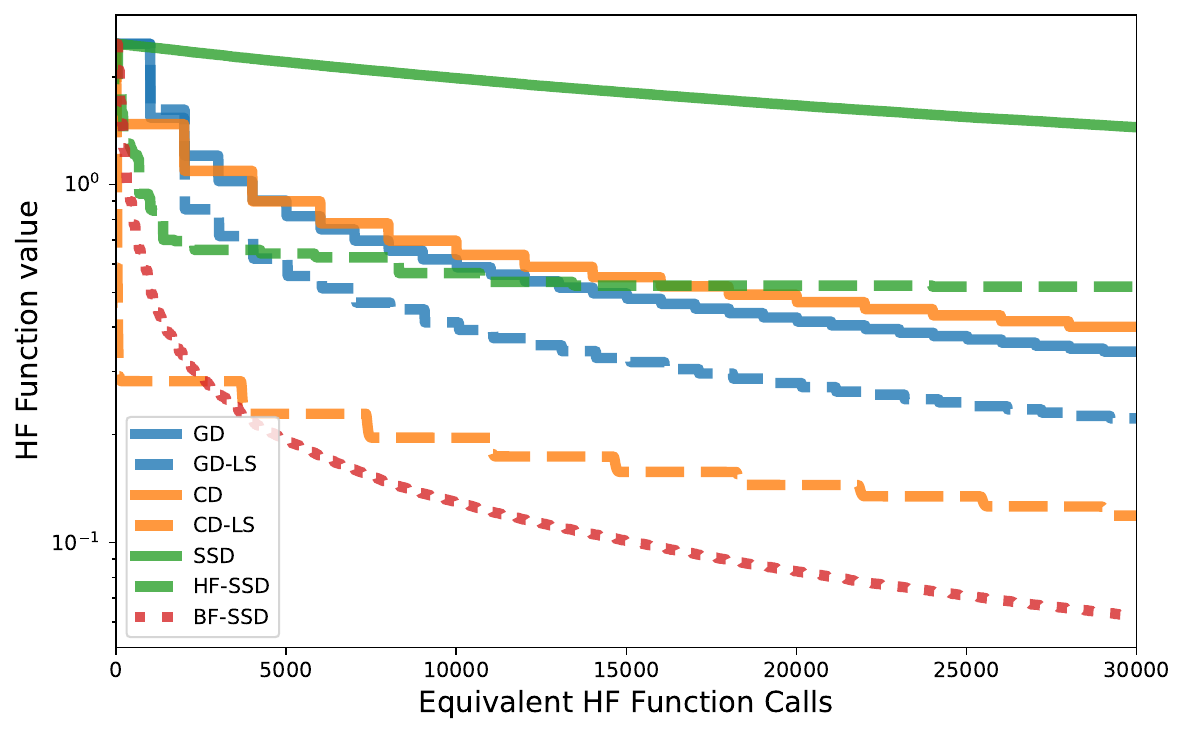}
    \caption{Gradient descent (GD), coordinate descent (CD), and stochastic subspace descent (SSD), along with their respective backtracking line search (LS) variants for step size tuning, as well as the proposed Bi-fidelity SSD (BF-SSD), are evaluated on the ``worst function in the world'' example, detailed in Section~\ref{ssec:worst-function}.}
    \label{fig:worst-linesearch}
\end{figure}

For simplicity, we focus on the bi-fidelity case, where only two fidelity levels are considered. 
The HF objective, $f^\HF$, is treated as the ground-truth objective function, so we treat $f^\HF$ and the $f$ from Equation~\eqref{eq:main_problem} synonymously.
We construct simple bi-fidelity surrogates \emph{after} obtaining the gradient estimation $\bm v_k$. Specifically, given the LF objective $f^\LF: \R^D \to \R$, the current position $\bm x_k$, $\bm v_k$, an initial step size $\alphaMax$, and a budget $n_k$ for HF evaluations at this step, the \emph{local} 1D surrogate of the HF objective $\varphi_k(\alpha) \coloneqq f^\HF(\bm x_k - \alpha \bm v_k): \R \to \R$ is constructed as
\begin{equation}
\label{eq:build-surrogate}
    \HFSurrogate(\alpha; n_k) = \rho f^\LF(\bm x_k - \alpha \bm v_k) + \interceptSurrogate(\alpha; n_k), \quad \alpha \in [0, \alphaMax].
\end{equation}
Here, $\rho$ is a scalar, and $\interceptSurrogate(\cdot; n_k): \R \to \R$ is a piecewise linear function constructed using $n_k$ HF evaluations. Once the surrogate $\HFSurrogate$ is constructed, the step size is selected using backtracking line search by (approximately) solving
\begin{equation}
    \alpha_k = \argmin_{\alpha\in [0, \alphaMax]} \HFSurrogate(\alpha; n_k).
\end{equation}

Assuming the scalar $\rho$ is properly chosen so that the difference 
\begin{equation}
    d(\bm x) \coloneqq f^\HF(\bm x) - \rho f^\LF(\bm x)
\end{equation}
is Lipschitz continuous, we show that the convergence of this descent method is guaranteed, and $K_\epsilon = \mathcal{O}(\Lip / \epsilon)$ iterations are needed to ensure that $\min_k \lVert \nabla f^\HF(\bm x_k) \rVert^2$ is $\epsilon$-small. Moreover, when the HF and LF functions are well-aligned, i.e., the Lipschitz constant $W$ of $d(\bm x)$ is small, the required number of HF function evaluations $N_\epsilon = \mathcal{O}(W\Lip^2 / \epsilon + D\Lip / \epsilon)$ is not large. 

For implementation, we focus on high-dimensional zeroth-order optimization problems, using the stochastic subspace descent (SSD) method \citep{kozak2021stochastic} combined with the proposed step size tuning strategy, and call the resulting method bi-fidelity stochastic subspace descent (BF-SSD). BF-SSD demonstrates strong empirical performance across various tasks and holds great potential for future applications.

\subsection{Related Work}
\label{ssec:related}

\paragraph{Line Search for Optimization}
Line search is a widely used method for determining step sizes in optimization algorithms.
Line searches can be either exact, meaning that $\alpha$ is chosen to exactly or almost exactly minimize $f^\HF(\bm x_k - \alpha \bm v_k)$, or inexact.  Exact line searches are computationally expensive, so other than in special cases, they are rarely used in practice. Common inexact line search methods include backtracking line search~\citep{nocedal1999numerical}, the Polyak step size~\citep{polyak1987introduction}, spectral methods such as~\citep{BarzilaiBorwein1988}, and learning rate scheduling~\citep{duchi2011adaptive}. Among these, backtracking line search is particularly popular due to its simplicity and explainable design, often employing stopping criteria like the Armijo and Wolfe conditions~\citep{nocedal1999numerical}. However, backtracking line search increases the overall computational costs considerably due to the numerous function evaluations required at each iteration. 
One way to mitigate this issue is by constructing surrogate models to guide step size selection. For example, \citet{yue2013accelerating} and \citet{grundvig2023line} used reduced-order models to approximate the objective function during line search, while \citet{Mahsereci2017b} and \citet{cartis2018global} employed a probabilistic Gaussian model for step size selection. \citet{paquette2020stochastic} provided a theoretical analysis of line search in stochastic optimization. A recent work by~\citet{Nguyen_Scheinberg_2025stochasticStepSize} extends the backtracking line search framework to the stochastic ISTA/FISTA method.
These approaches do not account for the multi-fidelity structure of objective functions, which is the focus of this work.

 %

\paragraph{Derivative-Free and Zeroth-Order Optimization}
Derivative-free optimization refers to a family of optimization techniques that rely solely on function evaluations, without requiring gradient information, to find the optimum of an objective function. This category includes methods such as Bayesian optimization~\citep{shahriari2015bayesian}, direct search~\citep{kolda2003direct}, trust region methods~\citep{conn2000trust}, genetic algorithms~\citep{srinivas1994genetic}, and zeroth-order optimization~\citep{liu2020primer}. Among these, zeroth-order methods stand out for their scalability to high-dimensional problems and reliable convergence properties. Following \citet{liu2020primer}, we refer to zeroth-order algorithms as the type of algorithms that approximate gradients using finite difference techniques and subsequently apply strategies similar to first-order methods. These methods have shown great promise in various machine learning applications where objective functions are smooth but lack accessible or easy-to-compute derivatives. Recent advances include their use in solving black-box adversarial attacks~\citep{chen2017zoo, chen2023deepzero} and fine-tuning large models, such as MeZO, S-MeZO, SubZO, etc.~\citep{sun2022bbtv2, sun2022black, malladi2023fine, liu2024sparse, yu2024subzero, zhang2024revisiting}, with minimal memory overhead.
A recent work~\citep{brilli2024worst} discloses the worst-case bound for derivative-free optimization with a line search-style method. 

\paragraph{Randomized Zeroth-Order Optimization for High-Dimensional Problems}
In high-dimensional zeroth-order optimization problems, estimating gradients via finite differences can be computationally prohibitive. To address this, randomized algorithms have been proposed to reduce the cost of gradient estimation. The simultaneous perturbation stochastic approximation (SPSA)~\citep{spall1992multivariate,spall1998implementation} uses Rademacher random vectors for gradient estimation, while Gaussian smoothing methods~\citep{nesterov2017random} employ Gaussian random vectors. These algorithms typically provide gradient estimates projected onto one-dimensional subspaces. However, for certain problems, it is worth the increased number of function calls to improve gradient estimates. SSD~\citep{kozak2021stochastic} explores this idea by projecting the gradient onto a random subspace of dimension $\ell$ for any $1\le \ell \le D$, providing a more generalized framework for randomized zeroth-order optimization.

\paragraph{Multi-Fidelity Modeling and Optimization}
Multi-fidelity is a well-established concept in engineering and scientific computing for reducing computational costs. It has been widely applied across various domains, including aerodynamic design~\citep{zhang2021multi}, structural optimization~\citep{ng2014multifidelity, de2020bi}, data sampling~\citep{cheng2024subsampling}, and uncertainty quantification~\citep{peherstorfer2018survey, cheng2024bi, de2022neural, de2023bi, cheng2025langevin, cheng2025multi}. As one sub-branch, multi-fidelity optimization has been employed in hyperparameter tuning~\citep{wu2020practical}, accelerating Bayesian optimization~\citep{kandasamy2016gaussian, takeno2020multi}, and reinforcement learning~\citep{cutler2014reinforcement} within machine learning. 
However, despite its relevance in settings where function evaluations are costly, its application in zeroth-order optimization remains largely unexplored~\citep{demontbrun2024certified} and has not been applied to any randomized zeroth-order method.

\subsection{Contributions}
\label{ssec:contribution}

In this work, we propose a multi-fidelity line-search scheme. Unlike previous approaches that utilize static surrogate (e.g., reduced-order) models, which remain fixed throughout the optimization process~\citep{yue2013accelerating, grundvig2023line}, our method constructs a temporary surrogate model in each iteration, specifically \emph{after} the gradient (or search direction) has been estimated. This allows us to focus on building a \emph{one-dimensional surrogate} along the search direction, a significantly simpler task compared to constructing expensive $D$-dimensional surrogates for the objective function $f$. By leveraging a computationally cheaper LF model, we construct a simple, local, linear surrogate using only a small number, $n_k$, of HF evaluations per iteration. Assuming certain conditions between the LF and HF models hold, this 1D surrogate is then used to efficiently identify a suitable step size.


Specifically, this work makes the following contributions:

\begin{enumerate}
    \item We develop the BF-SSD algorithm, a stochastic zeroth-order optimization method with a bi-fidelity line search that allows for choosing the approximation quality of the gradient by tuning $\ell$ (reducing to deterministic gradient descent when $\ell=D$).
    \item When the error of the gradient estimate is negligible (e.g., $\ell$ is sufficiently large), we give specific conditions on the relation between the HF and LF functions that will guarantee convergence to a stationary point (or a global minimizer when $f$ is convex).
    \item We highlight that many machine learning problems naturally have a corresponding LF model that can be used to construct a surrogate model, improving optimization efficiency. Despite its high potential, this strategy has not received sufficient attention in prior work.
    \item We compare BF-SSD with other zeroth-order optimization methods on one synthetic function and the following three real-world applications:
    \begin{itemize}
        \item Kernel ridge regression with a Nystr\"om-based LF approximation;
        \item Black-box image-based adversarial attacks with an LF model trained via knowledge distillation;
        \item Soft prompting of language models using a smaller training set to construct the bi-fidelity line search.
    \end{itemize}
\end{enumerate}

The rest of the paper is organized as follows. Section~\ref{sec:bf-ls} introduces the proposed bi-fidelity line search method and provides convergence results. Section~\ref{sec:implementation} details the implementation of the proposed method with SSD. Section~\ref{sec:experiments} presents the experimental results, and Section~\ref{sec:conclusion} concludes the paper.

    

\section{Line Search on Bi-fidelity Surrogate}
\label{sec:bf-ls}

In this section, we discuss the proposed algorithm and present the main theoretical results derived in this work.
Unless specified otherwise, $\lVert\cdot\rVert$ denotes the Euclidean norm for vectors and the spectral norm (i.e., the induced 2-norm) for matrices.
For simplicity and to maintain focus on our primary contributions during the theoretical analysis, we assume in the proofs that $\bm v_k$ provides an accurate estimate of the high-fidelity gradient, specifically $\bm v_k \approx \nabla f^{\HF}(\bm x_k)$.
This assumption is employed solely for the theoretical development presented herein and does not hold for the practical implementation discussed in Section~\ref{sec:implementation} or the numerical results presented in Section~\ref{sec:experiments}.


\subsection{Algorithm}
\label{ssec:algorithm}

\noindent First, we define the algorithm, which consists of three steps for each iteration $k$:
\begin{enumerate}
    \item Given the current position $\bm x_k\in\R^D$, gradient $\bm v_k\in\R^D$, and initial step size $\alphaMax\in\R$, sample $n_k$ equi-spaced HF evaluations in $[0, \alphaMax]$ and build the surrogate $\HFSurrogate:\R\to\R$ following Equation~\eqref{eq:build-surrogate} (see Algo.~\ref{alg:bf-sc} for details);
    \item Given Armijo condition parameters $c\in(0, 1)$, $\beta\leq 1/2$, and initial step size $\alphaMax\geq c/(\Lip + c\Lip)$, conduct bi-fidelity adjusted Armijo backtracking so that 
    \begin{equation}
    \label{eq:adjusted-armijo-new}
    \begin{aligned}
        \alpha_k &= \max_{m \in \mathbb{N}} c^m \alphaMax \\
        \text{s.t.}\quad \HFSurrogate(c^m \alphaMax; n_k) &\leq f^\HF(\bm x_k) - \beta c^m \alphaMax \lVert \bm v_k \rVert^2.
    \end{aligned}
    \end{equation}
    See Algo.~\ref{alg:bf-bt} for details.
    \item Evaluate $f^\HF$ at the new point and continue the iterations.
\end{enumerate}

\subsection{Convergence Results}
\label{ssec:convergence}

\noindent For convergence, we make the following assumptions:

\begin{assumption}
\label{asp:lambda-smooth}
    The objective function $f^\HF:\R^D\to\R$ attains its minimum $f^\ast$ and $\nabla f^\HF$ is $\Lip$-Lipschitz continuous; i.e., there exists $\Lip\in\R$ such that
    \begin{equation}
        \lVert \nabla f^\HF(\bm x) - \nabla f^\HF(\bm y)\rVert \leq \Lip \lVert \bm x - \bm y\rVert, \quad  \forall \bm x, \bm y\in\R^D.
    \end{equation}
\end{assumption}
Note that Assumption~\ref{asp:lambda-smooth} is standard for analysis of zeroth- and first-order methods. The constant $L$ must be known to the algorithm since it is used to set $\alphaMax$.

\begin{assumption}
\label{asp:lipschitz}
    The difference between $f^\HF$ and $f^\LF$ is assumed to be smooth with a bounded Lipschitz constant. Specifically, we assume there exists $W, \rho\in \R$ such that
    \begin{equation}
        \lVert \left(f^\HF(\bm x) - \rho f^\LF(\bm x)\right) - \left(f^\HF(\bm y) - \rho f^\LF(\bm y)\right) \rVert \leq W\lVert \bm x - \bm y\rVert, \quad\forall \bm x, \bm y\in\R^D.
    \end{equation}
\end{assumption}
The Assumption~\ref{asp:lipschitz} allows for $f^\LF$ to be \emph{uncalibrated}, meaning that we do not require $f^\LF(\bm x)\approx f^\HF(\bm x)$ since discrepancies can be reduced by building the surrogate $\tilde{\varphi}_k$.  

Our next assumption, Assumption~\ref{asp:nk-bound}, is a sufficient condition that will be used in Lemma~\ref{lm:surrogate-bound} to show that the surrogate is accurate.

\begin{assumption}
\label{asp:nk-bound}
    For each iteration $k$, we assume the number of HF evaluations, $n_k$, for building the surrogate $\tilde{\varphi}_k$ is sufficiently large such that \begin{equation}
    \label{eq:nk-bound}
        n_k \ge \frac{W\Lip(1+c)\alphaMax}{c\beta\lVert\bm v_k\rVert^2}, \quad \text{i.e.,}\quad 
        n_k =\Omega \left(\frac{WL}{\lVert \bm v_k\rVert^2}\right),
    \end{equation}
    where $\Omega(\cdot)$ denotes a lower bound up to a constant difference.
\end{assumption}

Using $\bm v_k = \nabla f(\bm x_k)$ and with the above assumptions satisfied and sufficiently large initial step size $\alphaMax\geq c/(c\Lip+\Lip)$, the designed bi-fidelity line search leads to the following result:
\begin{theorem}
\label{thm:convergence}
    Given an initial point $\bm x_0$, assuming actual gradients are accurately estimated, the algorithm in Section~\ref{ssec:algorithm} generates a sequence $(\bm x_k)$ such that  
    \begin{equation}
        \min_{k\in\{0, \dots, K\}} \lVert \nabla f^\HF(\bm x_k)\rVert^2 \leq \frac{2\Lip(1+c)(f^\HF(\bm x_0) - f^\ast)}{(K + 1)c\beta}.
    \end{equation}
    That is to say, $K_\epsilon = \mathcal{O}(\Lip/\epsilon)$ iterations are required to obtain $\min_{k\le K_\epsilon} \lVert \nabla f^\HF(\bm x_k)\rVert^2\leq \epsilon$.
\end{theorem}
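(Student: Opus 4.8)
The plan is to run the classical descent-lemma argument for backtracking line search, modified to account for the fact that the Armijo test in~\eqref{eq:adjusted-armijo-new} is performed on the surrogate $\HFSurrogate$ rather than on the true one-dimensional objective $\varphi_k(\alpha)=f^\HF(\bm x_k-\alpha\bm v_k)$. Concretely, with $\bm v_k=\nabla f^\HF(\bm x_k)$, I would reduce everything to a per-iteration sufficient-decrease inequality of the form
\begin{equation}
    f^\HF(\bm x_k)-f^\HF(\bm x_{k+1}) \ge \frac{c\beta}{2\Lip(1+c)}\lVert \nabla f^\HF(\bm x_k)\rVert^2 ,
\end{equation}
and then telescope. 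Summing this over $k=0,\dots,K$, using $f^\HF(\bm x_{K+1})\ge f^\ast$ and bounding the resulting sum below by $(K+1)\min_k\lVert\nabla f^\HF(\bm x_k)\rVert^2$, immediately yields the stated estimate. Thus essentially all the content lies in establishing the one-step decrease.

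The per-iteration inequality rests on three ingredients. First, I would invoke the surrogate-accuracy lemma (Lemma~\ref{lm:surrogate-bound}): since $\HFSurrogate-\varphi_k$ equals the piecewise-linear interpolation error of the map $\alpha\mapsto d(\bm x_k-\alpha\bm v_k)$, which is $W\lVert\bm v_k\rVert$-Lipschitz on $[0,\alphaMax]$ by Assumption~\ref{asp:lipschitz}, one obtains a uniform bound $\lvert\HFSurrogate(\alpha;n_k)-\varphi_k(\alpha)\rvert\le E_k$ for $\alpha\in[0,\alphaMax]$, where Assumption~\ref{asp:nk-bound} is calibrated precisely so that $E_k$ is dominated by the Armijo decrease term $\tfrac{c\beta}{2\Lip(1+c)}\lVert\bm v_k\rVert^2$. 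Second, I would lower-bound the accepted step size: using Assumption~\ref{asp:lambda-smooth} via the descent inequality $\varphi_k(\alpha)\le f^\HF(\bm x_k)-\alpha(1-\tfrac{\Lip\alpha}{2})\lVert\bm v_k\rVert^2$ together with the bound on $E_k$, I would show the surrogate Armijo condition~\eqref{eq:adjusted-armijo-new} holds for all sufficiently small $\alpha$. Consequently, whenever backtracking reduces past the first trial, the rejected value $\alpha_k/c$ must exceed that threshold, which, combined with the standing hypothesis $\alphaMax\ge c/(\Lip+c\Lip)$, gives $\alpha_k\ge c/(\Lip(1+c))$. Third, from the accepted Armijo inequality and $\varphi_k(\alpha_k)\le\HFSurrogate(\alpha_k;n_k)+E_k$ I would obtain $f^\HF(\bm x_{k+1})\le f^\HF(\bm x_k)-\beta\alpha_k\lVert\bm v_k\rVert^2+E_k$; substituting the step-size lower bound and the bound on $E_k$ yields the target decrease.

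The main obstacle is the careful bookkeeping of the surrogate error $E_k$, which enters twice and with opposite effect: once in the lower bound on $\alpha_k$ (where a too-large $E_k$ could let backtracking terminate prematurely and destroy the step-size floor) and once in the true per-step decrease (where $E_k$ erodes the guaranteed descent). The role of Assumption~\ref{asp:nk-bound} is exactly to force $E_k$ small enough, through the choice of $n_k$, that both uses go through with the same clean constant. Verifying that the interpolation-error bound from Lemma~\ref{lm:surrogate-bound} and the threshold $n_k=\Omega(W\Lip/\lVert\bm v_k\rVert^2)$ combine to leave a decrease of at least half of the ideal $\tfrac{c\beta}{\Lip(1+c)}\lVert\bm v_k\rVert^2$ is the delicate quantitative core; everything else, namely the descent inequality, the geometric backtracking bound, and the final telescoping, is routine once this estimate is in hand.
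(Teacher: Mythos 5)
Your proposal follows essentially the same route as the paper's proof: the same surrogate-error bound from Lemma~\ref{lm:surrogate-bound}, the same descent-lemma argument to show the surrogate Armijo condition holds for $\alpha\in[c/(\Lip+c\Lip),\,1/(\Lip+c\Lip)]$ (hence the step-size floor $\alpha_k\ge c/(\Lip(1+c))$ or $\alpha_k=\alphaMax$), the same two-sided use of the surrogate error to convert the accepted Armijo inequality into the true per-step decrease $f^\HF(\bm x_k)-f^\HF(\bm x_{k+1})\ge \frac{c\beta}{2\Lip(1+c)}\lVert\nabla f^\HF(\bm x_k)\rVert^2$, and the same telescoping. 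The plan and the constants are correct and match the paper's argument.
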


\begin{remark}
Theorem~\ref{thm:convergence} holds when $\bm v_k = \nabla f(\bm x_k)$. The error in approximating $\nabla f(\bm x_k)$ using finite difference methods with $\mathcal{O}(D)$ samples is typically negligible in comparison to the optimization error (see \citet{kozak2023zeroth} for a precise quantitative statement for the case of SSD). Hence, assuming we accurately estimate $\bm v_k = \nabla f(\bm x_k)$ with $\mathcal{O}(D)$ samples per step, a bound for the total number of HF evaluations for $\epsilon$-convergence of the algorithm in Section~\ref{ssec:algorithm} is
    \begin{equation}
    \label{eq:total-evaluations}
        N_\epsilon = \sum_{k=1}^{K_\epsilon} \left(n_k + \mathcal{O}(D)\right) = \mathcal{O}\left(\frac{W\Lip^2}{\epsilon^2} + \frac{D\Lip}{\epsilon}\right).
    \end{equation}
\end{remark}


\begin{remark}
\label{rmk:gd_N}
    When using \emph{zeroth-order gradient descent}, with the same assumption that we accurately estimate $\bm v_k = \nabla f(\bm x_k)$ with $\mathcal{O}(D)$ samples per step, a bound for the total number of HF evaluations for $\epsilon$-convergence is
    \begin{equation}
    \label{eq:total-evaluations-gd}
        N_\epsilon = \sum_{k=1}^{K_\epsilon} \left(\log_{c^{-1}}(\alphaMax\Lip) + \mathcal{O}(D)\right) = \mathcal{O}\left(\frac{\Lip\log(\Lip)}{\epsilon} + \frac{D\Lip}{\epsilon}\right).
    \end{equation}
\end{remark}

The proof of Remark~\ref{rmk:gd_N} follows the convergence proof of gradient descent using backtracking line search. Comparing the results in Equations~\eqref{eq:total-evaluations} and~\eqref{eq:total-evaluations-gd}, we observe that the advantage of using our bi-fidelity surrogate depends on the value of $W$. Notice that if $W$ is sufficiently small so that $W\Lip\leq \epsilon\log\Lip$, then the worst-case bound of our method is better than that of the zeroth-order gradient descent.
We emphasize that our convergence result in Equation~\eqref{eq:total-evaluations} is loose, due to the global nature of the Assumption~\ref{asp:lipschitz} and difficulty in precisely describing the quality of the LF function relative to its HF counterpart. Hence, we view our convergence analysis as a reassurance that the method does converge, and rely on numerical experiments to elucidate when the method improves over baseline methods.

\subsection{Proof of Theorem~\ref{thm:convergence}}
\label{ssec:proof}

\noindent Before the proof, we first introduce the following lemma:
\begin{lemma}
\label{lm:surrogate-bound}
    With Assumption~\ref{asp:lipschitz} and Assumption~\ref{asp:nk-bound} satisfied, for any $\alpha\in[0, \alphaMax]$, the 1D surrogate $\HFSurrogate(\alpha)$ satisfies the following bound,
    \begin{equation}
    \label{eq:surrogate-bound}
    \begin{aligned}
        \lvert \HFSurrogate(\alpha; n_k) - \varphi(\alpha)\rvert \leq \frac{\lVert \bm v_k\rVert^2}{2} \min\left\{\frac{c}{(1+c)^2\Lip}, \frac{c\beta}{(1+c)\Lip}, \beta \alphaMax\right\}=\frac{c\beta\lVert\bm v_k\rVert^2}{2(1+c)\Lip}.
    \end{aligned}
    \end{equation}
\end{lemma}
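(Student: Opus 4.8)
The plan is to recognize that the surrogate error is nothing but the error of piecewise-linear interpolation of a one-dimensional Lipschitz function, and then to invoke the mesh-refinement condition of Assumption~\ref{asp:nk-bound} to control it.

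First I would rewrite the error. Since $\HFSurrogate(\alpha;n_k) = \rho f^\LF(\bm x_k - \alpha\bm v_k) + \interceptSurrogate(\alpha;n_k)$ and $\varphi(\alpha) = f^\HF(\bm x_k - \alpha\bm v_k)$, subtracting gives
\begin{equation}
\HFSurrogate(\alpha;n_k) - \varphi(\alpha) = \interceptSurrogate(\alpha;n_k) - \big(f^\HF(\bm x_k-\alpha\bm v_k) - \rho f^\LF(\bm x_k - \alpha\bm v_k)\big) = \interceptSurrogate(\alpha;n_k) - g(\alpha),
\end{equation}
where $g(\alpha):= d(\bm x_k - \alpha\bm v_k)$ and $d = f^\HF - \rho f^\LF$. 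By construction $\interceptSurrogate$ is the piecewise-linear function that forces $\HFSurrogate$ to match the $n_k$ exact HF values at the equi-spaced nodes $\alpha_i\in[0,\alphaMax]$; equivalently, $\interceptSurrogate$ is precisely the piecewise-linear interpolant of the scalar function $g$ at those nodes. Thus the whole task collapses to bounding a one-dimensional interpolation error, which is the conceptual crux.

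Next I would quantify the regularity of $g$. Because the line $\alpha\mapsto \bm x_k - \alpha\bm v_k$ moves at speed $\lVert\bm v_k\rVert$ and $d$ is $W$-Lipschitz by Assumption~\ref{asp:lipschitz}, the composite $g$ is Lipschitz on $[0,\alphaMax]$ with constant $W\lVert\bm v_k\rVert$. The core estimate, and the step I expect to be the main (if elementary) difficulty, is the \emph{sharp} interpolation bound: the piecewise-linear interpolant of an $M$-Lipschitz scalar function on a mesh of width $h$ deviates from it by at most $Mh/2$. I would establish this on a single sub-interval $[\alpha_i,\alpha_{i+1}]$ by bounding $g(\alpha)$ from both endpoints, $g(\alpha)\le g(\alpha_i)+M(\alpha-\alpha_i)$ and $g(\alpha)\le g(\alpha_{i+1})+M(\alpha_{i+1}-\alpha)$, comparing these to the interpolant, and optimizing over the position of $\alpha$ within the interval; the extremal configuration is the symmetric tent, which yields exactly the factor $1/2$. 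Securing this constant rather than a crude factor of $2$ is what lets the final bound match the target, so this is the place to be careful.

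Finally I would substitute the mesh-refinement condition. With $M = W\lVert\bm v_k\rVert$ and $h \le \alphaMax/(n_k-1)$, the previous step gives $\lvert\HFSurrogate(\alpha;n_k)-\varphi(\alpha)\rvert \le \frac{1}{2}W\lVert\bm v_k\rVert\, h$, and Assumption~\ref{asp:nk-bound} is exactly the lower bound on $n_k$ (equivalently the upper bound on $h$) that drives this quantity down to $\frac{c\beta\lVert\bm v_k\rVert^2}{2(1+c)\Lip}$. It then only remains to verify the stated equality of the minimum with its middle term: comparing $\frac{c\beta}{(1+c)\Lip}$ with $\frac{c}{(1+c)^2\Lip}$ requires $\beta(1+c)\le 1$, which holds since $\beta\le 1/2$ and $c\in(0,1)$; comparing it with $\beta\alphaMax$ requires $\alphaMax\ge \frac{c}{(1+c)\Lip}$, which is precisely the standing lower bound imposed on $\alphaMax$. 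Both checks are routine, so the middle term is the minimizer and the claimed closed form follows.
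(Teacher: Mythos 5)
Your proposal follows essentially the same route as the paper's proof: both reduce the surrogate error to the piecewise-linear interpolation error of the scalar discrepancy $\psi(\alpha)=\varphi(\alpha)-\rho f^\LF(\bm x_k-\alpha\bm v_k)$ on an equispaced mesh, establish the sharp $Mh/2$ bound on each sub-interval (the paper does this by writing the error as the convex combination $\frac{h-\alpha}{h}(\psi(0)-\psi(\alpha))+\frac{\alpha}{h}(\psi(h)-\psi(\alpha))$ and maximizing $\alpha(h-\alpha)/h$), and then invoke Assumption~\ref{asp:nk-bound} together with $\beta\le 1/2$ and $\alphaMax\ge c/((1+c)\Lip)$ to identify the middle term of the minimum.

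The one place your bookkeeping does not land on the stated constant is the Lipschitz constant of the composite: you correctly observe that $g(\alpha)=d(\bm x_k-\alpha\bm v_k)$ is $W\lVert\bm v_k\rVert$-Lipschitz, so your interpolation bound is $\tfrac12 W\lVert\bm v_k\rVert h$, and substituting $n_k\ge W\Lip(1+c)\alphaMax/(c\beta\lVert\bm v_k\rVert^2)$ then yields $c\beta\lVert\bm v_k\rVert^{3}/(2(1+c)\Lip)$ rather than the claimed $c\beta\lVert\bm v_k\rVert^{2}/(2(1+c)\Lip)$. The paper closes this by taking the Lipschitz constant of $\psi$ to be $W$ outright, explicitly invoking ``the fact that $\bm v_k$ is a unit vector'' (consistent with the normalization step in Algorithm~\ref{alg:bf-ssd}, though not with the convention $\bm v_k=\nabla f^\HF(\bm x_k)$ used elsewhere in the analysis); under that convention $M=W$ and your bound coincides with the paper's. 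So your argument is sound and, if anything, tracks the parametrization speed more carefully than the paper does, but as written the final substitution is off by a factor of $\lVert\bm v_k\rVert$ and you should state the unit-norm convention (or $\lVert\bm v_k\rVert\le 1$) at the point where you compute $M$.
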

The proof of Lemma~\ref{lm:surrogate-bound} is in Appendix \ref{apdx:lmm-pf}. Following this lemma, the proof of Theorem~\ref{thm:convergence} is given below.

\begin{proof}

    Given Lemma~\ref{lm:surrogate-bound}, 
    we have
    \begin{equation}
        \lvert f^\HF(\bm x_{k+1}) - \HFSurrogate(\alpha_k; n_k)\rvert = \lvert\varphi(\alpha_k) - \HFSurrogate(\alpha_k; n_k)\rvert \leq \frac{\lVert \bm v_k\rVert^2}{2} \min\left\{\frac{c}{(1+c)^2\Lip}, \frac{c\beta}{(1+c)\Lip}, \beta \alphaMax\right\}
    \end{equation}
    and, using the standard descent lemma for $L$-smooth functions (guaranteed by Assumption~\ref{asp:lambda-smooth}), 
    \begin{equation}
        f^\HF(\bm x_{k+1}) \leq f^\HF(\bm x_k) - \alpha_k\lVert \bm v_k\rVert^2 + \frac{\alpha_k^2\Lip}{2}\lVert \bm v_k\rVert^2.
    \end{equation}
    Therefore, using the triangle inequality, the surrogate $\HFSurrogate$ is bounded as
    \begin{align*}
        \HFSurrogate(\alpha_k; n_k) 
        &\leq f^\HF(\bm x_{k+1}) + \lvert f^\HF(\bm x_{k+1}) - \HFSurrogate(\alpha_k; n_k)\rvert \\
        &\leq f^\HF(\bm x_k) + \left(-\alpha_k + \frac{\alpha_k^2\Lip}{2} + \frac{c}{2(1+c)^2\Lip}\right)\lVert \bm v_k \rVert^2.
    \end{align*}

    
    When the step size satisfies $\alpha_k\in[c/(\Lip+c\Lip), 1/(\Lip+c\Lip)]$, the quadratic inequality $-\alpha_k + \alpha_k^2\Lip/2 + c/(2(1+c)^2\Lip)\leq -\alpha_k/2$ holds, along with the fact that $\beta\leq 1/2$, which implies the following bi-fidelity-adjusted Armijo condition
    \begin{equation}
    \label{eqn:bf-bd-new}
    \begin{aligned}
        \HFSurrogate\left(\alpha_k; n_k\right)
        &\leq f^\HF(\bm x_k) + \left( -\alpha_k + \frac{\alpha_k^2\Lip}{2} + \frac{c}{2(1+c)^2\Lip}\right) \lVert \bm v_k\rVert^2\\
        &\leq f^\HF(\bm x_k) -\frac{\alpha_k}{2}\lVert \bm v_k\rVert^2\\
        &\leq f^\HF(\bm x_k) - \beta\alpha_k \lVert \bm v_k\rVert^2.
    \end{aligned}
    \end{equation}
    The last line in Equation~\eqref{eqn:bf-bd-new} satisfies the bi-fidelity-adjusted Armijo condition in Equation~\eqref{eq:adjusted-armijo-new}. Therefore, the bi-fidelity backtracking either terminates immediately with $\alpha_k=\alphaMax$ or else $\alpha_k\geq c/(\Lip+c\Lip)$, and implies 
    \begin{equation}
    \label{eq:bt-bd-new}
        \HFSurrogate\left(\alpha_k; n_k\right) \leq f^\HF(\bm x_k) - \beta\lVert\bm v_k\rVert^2\min\left\{\frac{c}{(1+c)\Lip}, \alphaMax\right\} = f^\HF(\bm x_k) - \frac{\beta c}{(1+c)\Lip}\lVert\bm v_k\rVert^2,
    \end{equation}
    where the last equality comes from $\alphaMax\geq c/((1+c)\Lip)$.
    Using $\lvert f^\HF(\bm x_{k+1}) - \HFSurrogate\left(\alpha_k; n_k\right)\rvert \leq  \beta c\lVert \bm v_k\rVert^2/(2(1+c)\Lip)$ (from Lemma~\ref{lm:surrogate-bound}) and Equation~\eqref{eq:bt-bd-new}, we have
    \begin{equation}
    \label{eqn:bf-pre-pl-new}
    \begin{aligned}
        f^\HF(\bm x_{k+1})  
        &\leq \HFSurrogate\left(\alpha_k; n_k\right) + \bigg\lvert f^\HF(\bm x_{k+1}) - \HFSurrogate\left(\alpha_k; n_k\right)\bigg\rvert\\
        &\leq \HFSurrogate\left(\alpha_k; n_k\right) + \frac{\beta c\lVert \bm v_k\rVert^2}{2(1+c)\Lip}\\
        &\leq f^\HF(\bm x_k) - \frac{\beta c\lVert \bm v_k\rVert^2}{2(1+c)\Lip}.
    \end{aligned}
    \end{equation}
    Equation~\eqref{eqn:bf-pre-pl-new} leads to the telescopic series
    \begin{equation}
    \begin{aligned}
        \frac{\beta c}{2(1+c)\Lip} \sum_{k=0}^K\lVert \bm v_k\rVert^2
        &\leq \sum_{k=0}^K \left(f^\HF(\bm x_k) - f^\HF(\bm x_{k+1})\right)\\
        &= f^\HF(\bm x_0) - f^\HF(\bm x_{K+1})\leq f^\HF(\bm x_0) - f^*.
    \end{aligned}
    \end{equation}
    Hence, 
    \begin{equation}
    \begin{aligned}
        (K+1)\min_{k\in\{0,\dots,K\}}\lVert \bm v_k\rVert^2 &\leq \left(\frac{\beta c}{2(1+c)\Lip}\right)^{-1}(f^\HF(\bm x_0) - f^*)\\
        &= \frac{2(1+c)\Lip}{\beta c}\left(f^\HF(\bm x_0) - f^*\right).
    \end{aligned}
    \end{equation}
    To guarantee $\min_{k\le K_\epsilon}\lVert\nabla f^\HF(\bm x_k)\rVert^2\le \epsilon$, the value of $K_\epsilon$ should be 
    \begin{equation}
    \label{eq:K-bd}
    K_\epsilon \ge 
        \frac{2(f^\HF(\bm x_0) - f^*)(1+c)\Lip}{\beta c\epsilon} = \mathcal{O}\left(\frac{\Lip}{\epsilon}\right).
    \end{equation}
\end{proof}

\begin{remark}
    Even if $f^\HF$ is non-convex, Equation~\eqref{eqn:bf-pre-pl-new} implies that the method is a descent method, meaning $f^\HF(\bm x_{k+1} ) \le f^\HF(\bm x_{k} )$. Hence, after $K$ iterations, it is natural to use $\bm x_K$ as the output. This descent property is not enjoyed by other methods, such as subgradient descent, stochastic gradient descent, or Polyak step size gradient descent.
\end{remark}
\begin{remark}
If $f^\HF$ is convex, then Theorem~\ref{thm:convergence} implies convergence to a global minimizer. Or, if $f^\HF$ satisfies the Polyak-Lojasiewicz inequality with parameter $\mu$ (which includes some non-convex functions, as well as all strongly convex functions), then Theorem~\ref{thm:convergence} in conjunction with the descent property implies
$f^\HF(\bm x_K) - f^\ast \le \frac{\Lip(1+c)}{(K + 1)\mu c\beta}(f^\HF(\bm x_0) - f^\ast)$, cf.~\citet{karimi2016linear_PL}.
\end{remark}

\subsection{Examples of Possible Low-Fidelity Functions}
\label{ssec:examples}

\noindent In practice, the LF function $f^\LF$ can be constructed in various ways. The most straightforward approach is when a multi-fidelity structure is intrinsically present in the problem. For example, in \citet[Section 5.1]{cheng2024bi}, the LF model is the exact solution to a simplified physical model that can be simulated with negligible cost, while the intended HF objective relies on relatively expensive finite element simulations. In most machine learning problems, the LF model is not explicitly given, making its construction necessary. In this section, we discuss multiple approaches for building the LF model and the associated upper bound on $W$.

\paragraph{Affine Bi-Fidelity Relationship}
The ideal case occurs when the HF model is an affine transformation of the LF model, i.e., $f^\HF(\bm x) = \rho f^\LF(\bm x) + c$. In this case, the Lipschitz constant $W$, as defined in Assumption~\ref{asp:lipschitz}, is zero, and the number of function evaluations required for convergence is proportional to the number of iterations, as $n_k=1$ is sufficient.

\paragraph{Quadratic Objective with Low-Rank LF Approximation}
Consider the case where the objective is quadratic with a positive semi-definite matrix $\bm A \in \R^{D \times D}$ and denote its rank-$r$ approximation $\widetilde{\bm A} \in \R^{D \times D}$, and assume that $\textup{rank}(\bm A) \gg \textup{rank}(\widetilde{\bm A})$. The HF objective is $f^\HF(\bm x) = \frac{1}{2} \langle \bm x, \bm A \bm x \rangle + \langle \bm x, \bm a \rangle$ for $\bm x \in\mathcal{X}$, where $\langle \cdot, \cdot \rangle$ denotes the Euclidean inner-product, and the LF objective is $f^\LF(\bm x) = \frac{1}{2} \langle \bm x, \tilde{\bm A} \bm x \rangle + \langle \bm x, \bm a \rangle$. Assuming the input space $\mathcal{X}$ is bounded by a unit ball with radius $R$, the Lipschitz constant $W$ is upper bounded as
\begin{equation}
    W \leq \sup_{\bm x} \lVert \nabla f^\HF(\bm x) - \nabla f^\LF(\bm x) \rVert = \sup_{\bm x} \lVert \bm A - \tilde{\bm A} \rVert \cdot \lVert \bm x \rVert \leq \lambda_{r+1} R,
\end{equation}
where $\lambda_{r+1}$ is the $(r+1)$-th largest eigenvalue of $\bm A$. The empirical problems in Section~\ref{ssec:worst-function} and Section~\ref{ssec:kernel-ridge} fall into this category.

This use-case satisfies the assumptions mentioned in Section~\ref{ssec:convergence}. Assumption
\ref{asp:lambda-smooth} is satisfied since the minimum is achieved ($f^\HF$ is continuous and coercive) and the Lipschitz constant of the gradient is $L=\|\bm A\|$. Assumption \ref{asp:lipschitz} is satisfied using the value of $W$ above, and Assumption \ref{asp:nk-bound} can be satisfied since it is just a parameter choice.

\paragraph{Full-Batch HF and Mini-Batch LF Objectives}
In many machine learning settings, the objective function is expressed as a sum over a large number of terms, each corresponding to the evaluation of a loss function on an individual data sample. In this case, a natural choice for the LF objective is the summation over a smaller subset of the data. Specifically, 
assuming that the HF objective sums over datapoints $i=1,\ldots,n$ and (without loss of generality, i.e., by relabeling) the LF objective sums over datapoints $i=1,\ldots,r$ for $r\ll n$, 
the HF objective is $f^\HF(\bm x) = \frac{1}{n}\sum_{i=1}^n f_i(\bm x)$, and the LF objective is $f^\LF(\bm x) = \frac{1}{r}\sum_{i=1}^r f_i(\bm x)$. Using the triangle inequality, the Lipschitz constant $W$ is upper bounded as
\begin{equation}
    W \leq \sup_{\bm x} \left\lVert \frac{1}{n}\sum_{i=1}^n \nabla f_i(\bm x) - \frac{1}{r}\sum_{i=1}^r \nabla f_i(\bm x) \right\rVert \leq \frac{n-r}{n}\left(\max_{1\leq i\leq r} \| \nabla f_i(\bm x)\| + \max_{1\leq i\leq n} \| \nabla f_i(\bm x)\|\right).
\end{equation}
The terms $\| \nabla f_i(\bm x)\|$ are bounded if each $f_i$ is Lipschitz, or equivalently, if $f_i$ is continuous and $\bm x$ is constrained to a compact set.
An empirical problem with this setting is presented in Section~\ref{ssec:tuning}. Our analysis is deterministic, so $W$ is a worst-case bound, but if $r$ is large and the LF subsamples are chosen uniformly at random, it would be reasonable to expect that, due to the law of large numbers, the average case behavior is significantly better than our bound.

This use-case also satisfies the assumptions mentioned in Section~\ref{ssec:convergence} under reasonable conditions.  If each $\nabla f_i$ is Lipschitz continuous with constant $L_i$ (i.e., this is always true if $f_i$ is continuous and $\bm x$ is constrained to a compact set), then $\nabla f^\HF$ is $L$ Lipschitz continuous with $L=\frac{1}{n}\sum_{i=1}^n L_i$ via the triangle inequality, so under the mild assumption that the minimum is achieved, Assumption \ref{asp:lambda-smooth} is satisfied.  Furthermore, Assumption \ref{asp:lipschitz} is satisfied using the value of $W$ above, and Assumption \ref{asp:nk-bound} can again be automatically satisfied since it is just a parameter choice.

\paragraph{Generic Case}
Finally, we consider the most general case, without assuming specific relationships between the HF and LF objectives. By assuming the Lipschitz continuity of both the HF and LF objectives, $W$ can be bounded as
\begin{equation}
    W = \lVert f^\HF(\bm x) - \rho f^\LF(\bm x) \rVert_L \leq \lVert f^\HF(\bm x) \rVert_L + \lvert \rho \rvert \cdot \lVert f^\LF(\bm x) \rVert_L,
\end{equation}
for any choice of $\rho$, where $\lVert \cdot \rVert_L$ denotes the Lipschitz constant. The proportionality $\rho$
should not be chosen to minimize this bound (since that leads to $\rho=0$) but can instead be chosen by any heuristic, such as the one used in control variate techniques~\citep{gorodetsky2020generalized} where $\rho=-\hat{c}/\hat{v}$ where $\hat{c}$ is an estimate of the covariance between $f^\HF$ and $f^\LF$, and $\hat{v}$ is an estimate of the variance of $f^\HF$. 

\section{Bi-Fidelity Line Search with Stochastic Subspace Descent}
\label{sec:implementation}

In this manuscript, we focus on zeroth-order optimization, utilizing stochastic subspace descent (SSD) as the implementation method. Following the algorithmic steps introduced in Section~\ref{ssec:algorithm}, combined with SSD, the entire process is divided into three main (iteratively implemented) components: gradient estimation to construct $\bm v_k$, bi-fidelity surrogate construction, and Armijo backtracking on the surrogate.

\paragraph{Gradient Estimation}
SSD employs a random projection matrix $\bm P_k \in \R^{D \times \ell}$ with $\ell \ll D$. The random matrix $\bm P_k$ satisfies the properties $\E[\bm P_k \bm P_k^\top] = \bm I_D$ and $\bm P_k^\top \bm P_k = (D/\ell)\bm I_\ell$. A common choice for $\bm P_k$ is based on the Haar measure, where $\bm P_k$ is derived from the Gram-Schmidt orthogonalization of a random Gaussian matrix. The gradient estimation is given by $\bm v_k = \bm P_k \bm g_k$, where $\bm g_k$ is the finite difference estimator of the gradient:
\begin{equation}
\label{eq:finite-difference}
    \bm g_k \coloneqq \left[ \frac{f^\HF(\bm x_k + \Delta \bm p_1) - f^\HF(\bm x_k)}{\Delta}, \frac{f^\HF(\bm x_k + \Delta \bm p_2) - f^\HF(\bm x_k)}{\Delta}, \ldots, \frac{f^\HF(\bm x_k + \Delta \bm p_\ell) - f^\HF(\bm x_k)}{\Delta} \right]^\top,
\end{equation}
where $\Delta\in\R$ is a small step size and $\bm p_i$ is the $i$-th column of $\bm P_k$. Estimating $\bm v_k$ using Equation~\eqref{eq:finite-difference} requires $\ell$ function evaluations -- a more accurate $\mathcal{O}(\Delta^2)$ approximation is also possible at the cost of $2\ell$ function evaluations if more than 8 digits of precision are needed. Up to the finite-difference error, $\bm g_k \approx \bm P_k^\top \nabla f^\HF(\bm x_k)$ so that $\bm v_k \approx \bm P_k \bm P_k^\top \nabla f^\HF(\bm x_k)$, hence $\mathbb{E}[\bm v_k] \approx \nabla f^\HF(\bm x_k)$.  It is also possible to construct the same estimator without reference to the Haar measure by rewriting $\bm v_k$ as $\bm v_k = \text{proj}_{\text{col}(\bm Q_k)}( \nabla f^\HF(\bm x_k) )$ where $\bm Q_k\in \R^{D \times \ell}$ is any random matrix with independent columns from an isotropic probability distribution (such as the standard normal).

\paragraph{Surrogate Construction}
Given the estimated gradient $\bm v_k$ and the current position $\bm x_k$, the goal of surrogate construction is to build $\tilde{\varphi}_k$, denoted as
\begin{equation}
\label{eq:varphi-def}
\HFSurrogate(\alpha) \coloneqq \rho f^\LF(\bm x_k + \alpha \bm v_k) + \interceptSurrogate(\alpha).
\end{equation}
The analysis in Section~\ref{sec:bf-ls} assumes that $\rho$ is known and fixed. However, in practice, $\rho$ is tuned for better performance. In our application, we set $\rho_k = f^\HF(\bm x_k)/f^\LF(\bm x_k)$. We model $\interceptSurrogate$ as a piecewise linear function using $n_k$ additional HF evaluations at equispaced points $\{0, \tilde{\alpha}_1, \dots, \tilde{\alpha}_{n_k}=\alphaMax\}$. Specifically,
\begin{equation}
\label{eq:piecewise-def}
\interceptSurrogate(\alpha) = \frac{h - \alpha}{h}\psi(\tilde{\alpha}_{j-1}) + \frac{\alpha}{h}\psi(\tilde{\alpha}_j), \quad \alpha \in [\tilde{\alpha}_{j-1}, \tilde{\alpha}_j], \quad j = 1, \dots, n_k,
\end{equation}
where $h = \alphaMax / n_k$ and $\psi(\alpha)=\phi(\alpha) - f^\LF(\bm x_k - \alpha\bm v_k)$. This piecewise linear interpolation is a simple, yet effective, approach for interpolating $\varphi$ in 1D and satisfies the bounds in Lemma~\ref{lm:surrogate-bound} given sufficient $n_k$. The detailed algorithm is presented in Algorithm~\ref{alg:bf-sc}. 

Figure~\ref{fig:kernel-surrogate} illustrates the bi-fidelity backtracking line search process using the example problem in Section~\ref{ssec:kernel-ridge}. The blue curve represents the bi-fidelity surrogate model $\tilde{\varphi}_k$ approximating the HF function $\varphi$ (red curve). Rather than performing the line search directly on the computationally expensive HF function (red dots), the method utilizes the surrogate $\tilde{\varphi}_k$ to estimate an optimal step size. While this surrogate is an approximation and may require more surrogate function evaluations during the search itself, it substantially reduces the computational cost of line search. In this example, the cost is decreased from 4 HF function calls (for a direct search) to only 1 HF call (to build the surrogate) combined with 6 LF function calls.

\begin{algorithm}[ht]
    \DontPrintSemicolon
    \caption{Surrogate Construction\label{alg:bf-sc}}
    \KwIn{$f^\LF, f^\HF, \bm x_k, \bm v_k, n_k\in\mathbb{N}, \alphaMax>0$}
    \KwOut{1D surrogate $\HFSurrogate$}
    \begin{algorithmic}[1]
        \STATE Define $\{(\tilde{\alpha}_j, \varphi(\tilde{\alpha}_j))\}_{j=0}^{n_k}$ as equispaced points between $0$ and $\alphaMax$ (including endpoints), and compute HF evaluations $\varphi(\tilde{\alpha}_j) \leftarrow f^\HF(\bm x_k + \tilde{\alpha}_j \bm v_k)$;
        \STATE $\rho_k \leftarrow f^\HF(\bm x_k)/f^\LF(\bm x_k)$;
        \STATE $\psi(\tilde{\alpha}_j) \leftarrow \varphi(\tilde{\alpha}_j) - \rho_k f^\LF(\bm x_k + \tilde{\alpha}_j \bm v_k), \quad j = 1, \dots, n_k$;
        \STATE Construct piecewise linear function $\interceptSurrogate$ using Equation~\eqref{eq:piecewise-def};
        \STATE Return $\HFSurrogate$ using Equation~\eqref{eq:varphi-def}.
    \end{algorithmic}
\end{algorithm}
\begin{figure}[ht]
    \centering
    \includegraphics[width=0.5\linewidth]{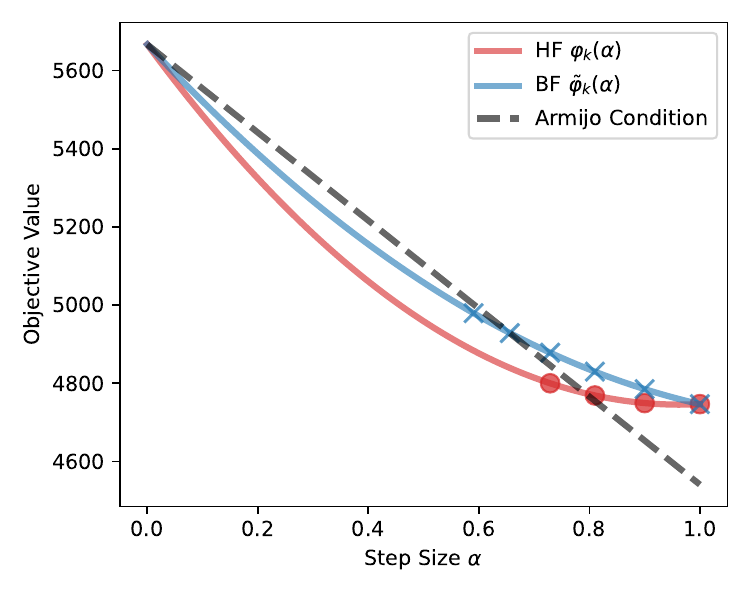}
    \caption{Illustration of the bi-fidelity backtracking line search process using the example problem in Section~\ref{ssec:kernel-ridge}. The blue curve represents the bi-fidelity surrogate model $\tilde{\varphi}_k$ 
    approximating the HF function $\varphi$ (red curve). It significantly lowers computational cost (e.g., reducing 4 HF calls to 1 HF + 6 LF calls).}
    \label{fig:kernel-surrogate}
\end{figure}

\paragraph{Armijo Backtracking on the Surrogate}
Based on the criteria in Equation~\eqref{eq:adjusted-armijo-new}, we set the maximum number of iterations for testing the Armijo condition to $M \in \mathbb{N}$. The detailed procedure is presented in Algorithm~\ref{alg:bf-bt}.
\begin{algorithm}[ht]
    \DontPrintSemicolon
    \caption{BF-Backtracking\label{alg:bf-bt}}
    \KwIn{$\HFSurrogate, \beta>0, c\in(0,1), \alphaMax>0, \bm v_k, M\in\mathbb{N}$ \tcp*{typical value of $c\approx 0.9$}}
    \KwOut{Step size $\alpha_k$}
    \begin{algorithmic}[1]
        \STATE Initialize $\alpha_k \leftarrow \alphaMax$;
        \FOR{$m = 0:M$}{
            \IF{$\HFSurrogate(\alpha_k) \leq f^\HF(\bm x_k) - \alpha_k \beta \lVert \tilde{\bm v}_k \rVert^2$} 
                \STATE Break;
            \ELSE
                \STATE $\alpha_k \leftarrow c \alpha_k$;
            \ENDIF
        }
        \ENDFOR
        \STATE Return $\alpha_k$;
    \end{algorithmic}
\end{algorithm}

\paragraph{Convergence Analysis of SSD with Line Search}
The convergence results of SSD with line search (on the exact $\varphi(\alpha)$) are presented in Appendix~\ref{apdx:sf-ssd-ls}, under three separate scenarios: strongly convex, convex, and non-convex. The proof shows that, in the SSD with line search setting, the value of $\beta$ can be set as $\ell/2D$. 

The proposed bi-fidelity line search algorithm, combined with SSD, will be referred to as bi-fidelity SSD (BF-SSD), and is summarized in Algorithm~\ref{alg:bf-ssd}. 
Our theory covers either $\ell=D$ with bi-fidelity line search (Thm.~\ref{thm:convergence}) or $1\le \ell\le D$ with HF line search (Appendix~\ref{apdx:sf-ssd-ls}) under strongly convex, convex or non-convex settings. Combining the two analyses is fairly complicated, and we defer it to a future study.  
For reference, the bi-fidelity result is in Theorem \ref{thm:convergence}
and below is one of the results from Appendix~\ref{sec:B3} for the  $1\le \ell\le D$ with HF line search case, in particular, the result covering the non-convex case:
\begin{theorem}[proof in Appendix~\ref{sec:B3}]
\label{thm:sf-nc-1}
With Assumption~\ref{asp:ssd3} holding and backtracking implemented for line search, we have
\begin{equation}
\min_{k\in\{0,\dots,K\}} \mathbb{E}[\lVert\nabla f(\bm x_k)\rVert^2] \leq \max\left\{\frac{(f(\bm x_0) - f^*)}{(K+1)\beta \alphaMax}, \frac{D\Lip (f(\bm x_0) - f^*)}{(K+1)\ell c\beta}\right\}.
\end{equation}
That is, $k=\mathcal{O}(1/(\epsilon\beta \alphaMax)+D\Lip/(\epsilon\ell c\beta))$ iterations are required to achieve $\mathbb{E}\lVert \nabla f(\bm x_k)\rVert^2\leq\epsilon$.
\end{theorem}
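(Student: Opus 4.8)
The plan is to treat Theorem~\ref{thm:sf-nc-1} as a standard non-convex stochastic-descent analysis resting on three ingredients: a per-iteration sufficient-decrease guarantee produced by the Armijo backtracking line search, the variance/projection identities of SSD that pass from the random direction $\bm v_k$ to the true gradient, and a telescoping argument followed by a min/max bookkeeping step. Recall that in the idealized regime $\bm v_k = \bm P_k\bm P_k^\top\nabla f(\bm x_k)$, so the SSD properties $\E[\bm P_k\bm P_k^\top]=\bm I_D$ and $\bm P_k^\top\bm P_k=(D/\ell)\bm I_\ell$ give, conditionally on $\bm x_k$, two facts I will lean on repeatedly: the unbiasedness $\E[\bm v_k\mid\bm x_k]=\nabla f(\bm x_k)$, and the per-realization identity $\langle\nabla f(\bm x_k),\bm v_k\rangle=(\ell/D)\lVert\bm v_k\rVert^2\ge 0$, which simultaneously certifies that $-\bm v_k$ is a descent direction and pins the directional derivative to the squared step norm.

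First I would establish a \emph{deterministic} lower bound on the accepted step size. Since the line search runs on the exact $\varphi(\alpha)=f(\bm x_k-\alpha\bm v_k)$, the descent lemma coming from the $L$-smoothness in Assumption~\ref{asp:ssd3}, combined with $\langle\nabla f(\bm x_k),\bm v_k\rangle=(\ell/D)\lVert\bm v_k\rVert^2$, shows that the Armijo sufficient-decrease test is automatically satisfied for every $\alpha$ below a threshold of order $\ell/(D\Lip)$ (this is where the admissible range of $\beta$, and in particular the choice $\beta\le\ell/(2D)$, is needed so that the quadratic-in-$\alpha$ inequality closes). Consequently the backtracking loop either accepts $\alphaMax$ outright, or halts at a step whose predecessor $\alpha_k/c$ violated Armijo and hence exceeded that threshold; either way $\alpha_k\ge\min\{\alphaMax,\,c\ell/(D\Lip)\}$ holds for \emph{every} realization of $\bm P_k$.

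Next I would combine the Armijo inequality, with sufficient decrease measured against the directional derivative $\langle\nabla f(\bm x_k),\bm v_k\rangle$, namely $f(\bm x_{k+1})\le f(\bm x_k)-\beta\alpha_k\langle\nabla f(\bm x_k),\bm v_k\rangle$, with the step-size lower bound, yielding the per-realization decrease $f(\bm x_k)-f(\bm x_{k+1})\ge\beta\min\{\alphaMax,\,c\ell/(D\Lip)\}\langle\nabla f(\bm x_k),\bm v_k\rangle$. Taking the conditional expectation over $\bm P_k$ and using $\E[\langle\nabla f(\bm x_k),\bm v_k\rangle\mid\bm x_k]=\lVert\nabla f(\bm x_k)\rVert^2$ (from unbiasedness) gives the clean one-step bound $\E[f(\bm x_k)-f(\bm x_{k+1})\mid\bm x_k]\ge\gamma\lVert\nabla f(\bm x_k)\rVert^2$ with $\gamma=\beta\min\{\alphaMax,\,c\ell/(D\Lip)\}$; here the $D/\ell$ scaling lives in the step-size threshold and thus surfaces in the second branch of the minimum. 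Applying the tower property, summing over $k=0,\dots,K$, and using $f(\bm x_{K+1})\ge f^\ast$ telescopes to $\gamma\sum_{k=0}^K\E[\lVert\nabla f(\bm x_k)\rVert^2]\le f(\bm x_0)-f^\ast$, so $(K+1)\min_k\E[\lVert\nabla f(\bm x_k)\rVert^2]\le(f(\bm x_0)-f^\ast)/\gamma$. Expanding $1/\gamma=\max\{1/(\beta\alphaMax),\,D\Lip/(\ell c\beta)\}$ then splits the right-hand side into exactly the two terms of the stated maximum.

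The main obstacle I anticipate is the entanglement of the two sources of randomness: because $\alpha_k$ is itself a function of the realized direction $\bm v_k$, one cannot naively factor $\alpha_k$ out of the expectation of $\alpha_k\langle\nabla f(\bm x_k),\bm v_k\rangle$. The resolution is precisely the realization-independent lower bound $\alpha_k\ge\min\{\alphaMax,\,c\ell/(D\Lip)\}$ from the second step, which lets me replace the random $\alpha_k$ by a deterministic constant \emph{before} integrating over $\bm P_k$. A secondary subtlety is that the Armijo threshold must be made independent of the unknown $\nabla f(\bm x_k)$ so that the same bound applies uniformly across realizations; this is exactly what the per-realization identity $\langle\nabla f(\bm x_k),\bm v_k\rangle=(\ell/D)\lVert\bm v_k\rVert^2$ supplies, and carefully tracking this $D/\ell$ factor is what produces the correct dimension dependence in the second term of the maximum.
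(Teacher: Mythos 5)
Your proposal is correct and follows essentially the same route as the paper's proof in Appendix~\ref{sec:B3}: the descent lemma plus the projection identity $\bm P_k\bm P_k^\top\bm P_k\bm P_k^\top=(D/\ell)\bm P_k\bm P_k^\top$ show the Armijo test passes for all $\alpha\le\ell/(D\Lip)$ (using $\beta\le\ell/(2D)$), giving the realization-independent bound $\alpha_k\ge\min\{\alphaMax,c\ell/(D\Lip)\}$, after which conditional expectation via $\mathbb{E}[\bm P_k\bm P_k^\top]=\bm I_D$ and telescoping yield the stated maximum. The only cosmetic difference is that you phrase the sufficient decrease against $\langle\nabla f(\bm x_k),\bm v_k\rangle$ while the paper states Armijo against $\lVert\bm v_k\rVert^2$; these differ by the factor $D/\ell$ that both arguments track correctly.
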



For practical purposes, since the parameters in Assumption~\ref{asp:nk-bound} are often unknown, we set $\rho_k$ as described above and choose $n_k = 1$ to minimize the cost of generating $\HFSurrogate(\alpha)$. This choice also demonstrates excellent empirical performance across all HF and LF pairs we have examined.
We leave it as future work for adaptively choosing $n_k$, but suggest one possible scheme inspired by similar methods used in trust-region algorithms: starting with a small $n_k$, we find the stepsize via line search on the surrogate, as usual, and then compare the actual function value of the high-fidelity function with the predicted value from the surrogate. If the predicted decrease from the surrogate was overly optimistic, we then rebuild the surrogate with a larger $n_k$ (re-using existing samples) and repeat.

\begin{algorithm}[ht]
    \DontPrintSemicolon
    \caption{Bi-Fidelity Line Search SSD Algorithm\label{alg:bf-ssd}}
    \KwIn{ $f^\HF, f^\LF, \ell,  c, M, \alphaMax, n$     \tcp*{by default, $n=1$ and $\beta=\ell/2D$}}
    \KwOut{HF minimum value}
    \begin{algorithmic}[1]
        \STATE Initialize $\bm x_{0}$ and set of HF values $\mathcal{D}=\{f^\HF(\bm x_0)\}$
        \STATE $\beta \leftarrow \ell / 2d$;
        \FOR{$k = 0:K$}{
            \STATE Sample random matrix $\bm P_k$;\label{algstp:sample}
            \STATE Approximate $\tilde{\bm v}_k \approx \bm P_k \bm P_k^T \nabla f(\bm x_k)$ using finite difference ({\color{black}$\ell$ HF evaluations});
            \STATE Normalize $\bm v_k \leftarrow \tilde{\bm v}_k / \lVert \tilde{\bm v}_k \rVert$;
            \STATE Construct $\HFSurrogate \leftarrow \textup{surrogate-construction}(f^\LF, f^\HF, \bm x_k, \bm v_k, n, \alphaMax)$ ({\color{black}$n$ HF evaluations});
            \STATE $\alpha_k \leftarrow \textup{BF-backtracking}(\HFSurrogate, \beta, c, \alphaMax, \bm v_k, M)$;
            \STATE Update $\bm x_{k+1} \leftarrow \bm x_k - \alpha_k \bm v_k$;
            \STATE Evaluate $f^\HF(\bm x_{k+1})$ and update $\mathcal{D}$;
        }
        \ENDFOR
        \STATE Return $\min \mathcal{D}$;
    \end{algorithmic}
\end{algorithm}
%

\section{Empirical Experiments}
\label{sec:experiments}

In this section, we evaluate the proposed BF-SSD Algorithm~\ref{alg:bf-ssd} on four distinct: one synthetic optimization problem discussed in Section~\ref{ssec:worst-function} and three machine learning-related problems across diverse scenarios presented in Section~\ref{ssec:zo-for-ml}. These include dual-form kernel ridge regression (Section~\ref{ssec:kernel-ridge}), black-box adversarial attacks (Section~\ref{sssec:black-box}), and transformer-based black-box language model fine-tuning (soft prompting) in Section~\ref{ssec:tuning}. We demonstrate that the BF-SSD algorithm consistently outperforms competing methods. To illustrate these advantages, we compare BF-SSD against the following baseline algorithms:

\begin{itemize}
    \item \textbf{Gradient descent (GD)}: A zeroth-order gradient descent method, where the full-batch gradient is estimated using forward differences, and a fixed step size is used.
    \item \textbf{Nesterov accelerated gradient descent (NAG)}: An accelerated zeroth-order gradient descent method by \citet{nesterov269method} that incorporates a decaying momentum term, which often leads to faster convergence. The gradient is estimated using forward differences.
    \item \textbf{Coordinate descent (CD)}: Iteratively and individually optimizes each coordinate using finite-difference estimated coordinate gradients.
    \item \textbf{Stochastic subspace descent with fixed step size (FS-SSD)}: The standard stochastic subspace descent method, which samples subspaces from the Haar measure and uses a fixed step size.
    \item \textbf{Simultaneous perturbation stochastic approximation (SPSA)}: A randomized optimization method using a Hadamard random variable to estimate the gradient, as proposed by \citet{spall1992multivariate} and with step sizes as described in \citet{spall1998implementation}; this is a time-tested, well-established zeroth-order method.
    \item \textbf{Gaussian smoothing (GS)}: A method popularized by \citet{nesterov2017random}, which is nearly equivalent to SSD with $\ell=1$, and uses a fixed step size.
    \item \textbf{High-Fidelity stochastic subspace descent (HF-SSD)}: A single-fidelity SSD method that utilizes a high-fidelity function for backtracking line search, with its convergence analysis detailed in~Appendix~\ref{apdx:sf-ssd-ls}.
    \item \textbf{Variance-reduced stochastic subspace descent (VR-SSD)}: A variance-reduced version of the SSD method inspired by SVRG~\citep{SVRG2013}, as described in the technical report \citet[Section 2.2]{kozak2019stochastic} of the SSD authors.
    \item \textbf{Bi-fidelity stochastic subspace descent (BF-SSD)}: The proposed method detailed in Section~\ref{sec:implementation}.
\end{itemize}

The performance of the optimizers is assessed based on the number of HF objective function evaluations required, accounting for LF calls (in terms of fractional equivalent HF function calls) as appropriate. While we also measured wall-clock time, the results strongly aligned with the equivalent HF call data. Therefore, for clarity, we present performance primarily in terms of equivalent HF calls.
\footnote{The code is available at  \href{https://github.com/CU-UQ/BF-Stochastic-Subspace-Descent/}{https://github.com/CU-UQ/BF-Stochastic-Subspace-Descent/}.}

\subsection{Synthetic Problem: Worst Function in the World}
\label{ssec:worst-function}
In this section, we investigate the performance of our proposed BF-SSD algorithm on the ``worst function in the world'' \citep{nesterov2013introductory}. With a fixed Lipschitz constant $\Lip > 0$, the function is
\begin{equation}
\label{eq:worst-function-def}
f(\bm x; r, \Lip) = \Lip\left(\frac{x_1^2+\sum_{i=1}^{r-1}(x_i - x_{i+1})^2 + x_r^2}{8} - \frac{x_1}{4}\right) - \frac{\Lip r}{8(r+1)},
\end{equation}
where $x_i$ denotes the $i^\textup{th}$ entry of the input $\bm x$ and $r < D$ is a constant integer defining the intrinsic dimension of the problem. The function is convex with the global minimum value $0$. The Lipschitz constant of the gradient of this function is $\Lip$. \citet{nesterov2013introductory} has shown that a wide range of iterative first-order methods perform poorly when minimizing $f(\bm x;r,\Lip)$ with initial point $\bm x_0 = \bm 0$.

\begin{figure}[ht]

    \begin{minipage}{0.45\textwidth}
        \includegraphics[width=\textwidth]{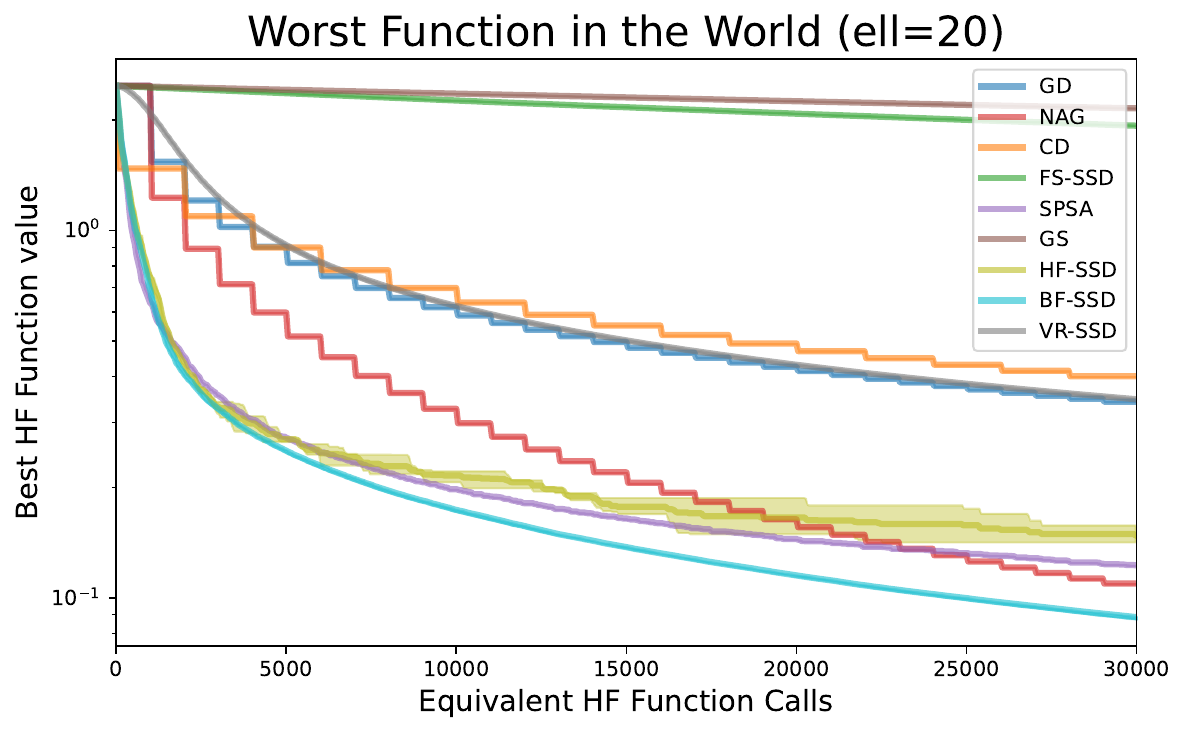}
        \subcaption{}\label{fig:worst-ell20-c0.9}
    \end{minipage}%
    \begin{minipage}{0.45\textwidth}
        \includegraphics[width=\textwidth]{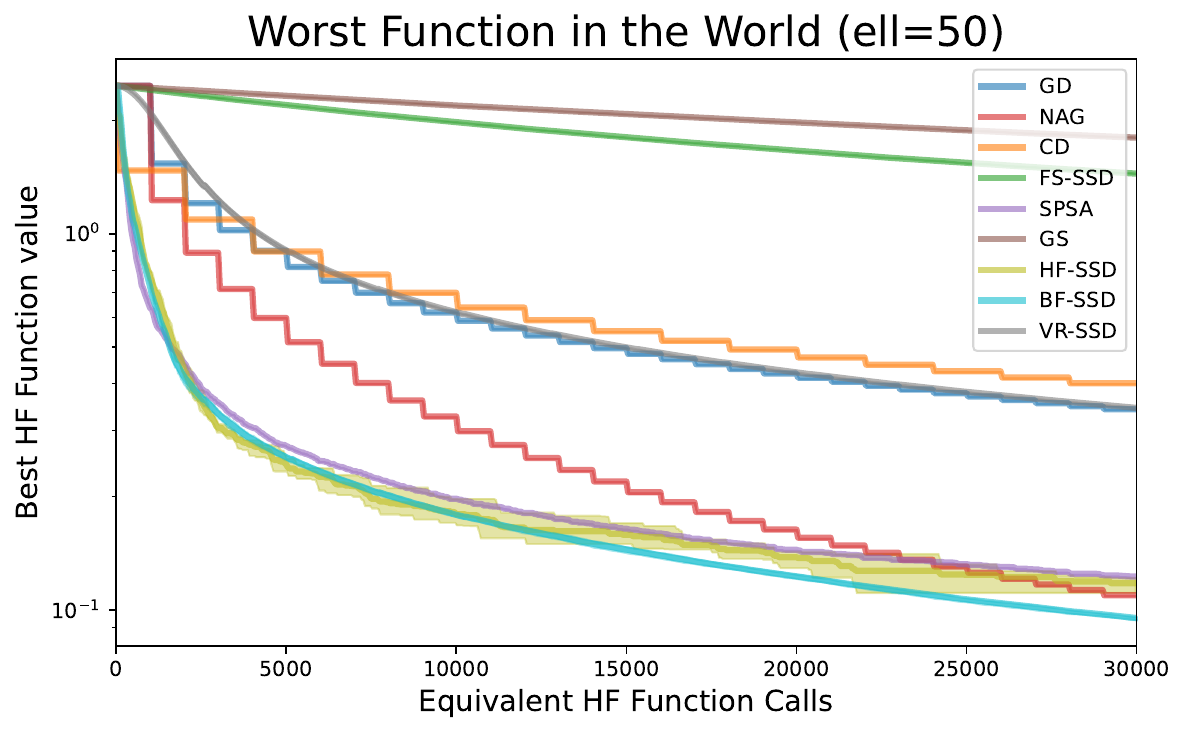}
        \subcaption{}\label{fig:worst-ell50-c0.9}
    \end{minipage}%

    \begin{minipage}{0.45\textwidth}
        \includegraphics[width=\textwidth]{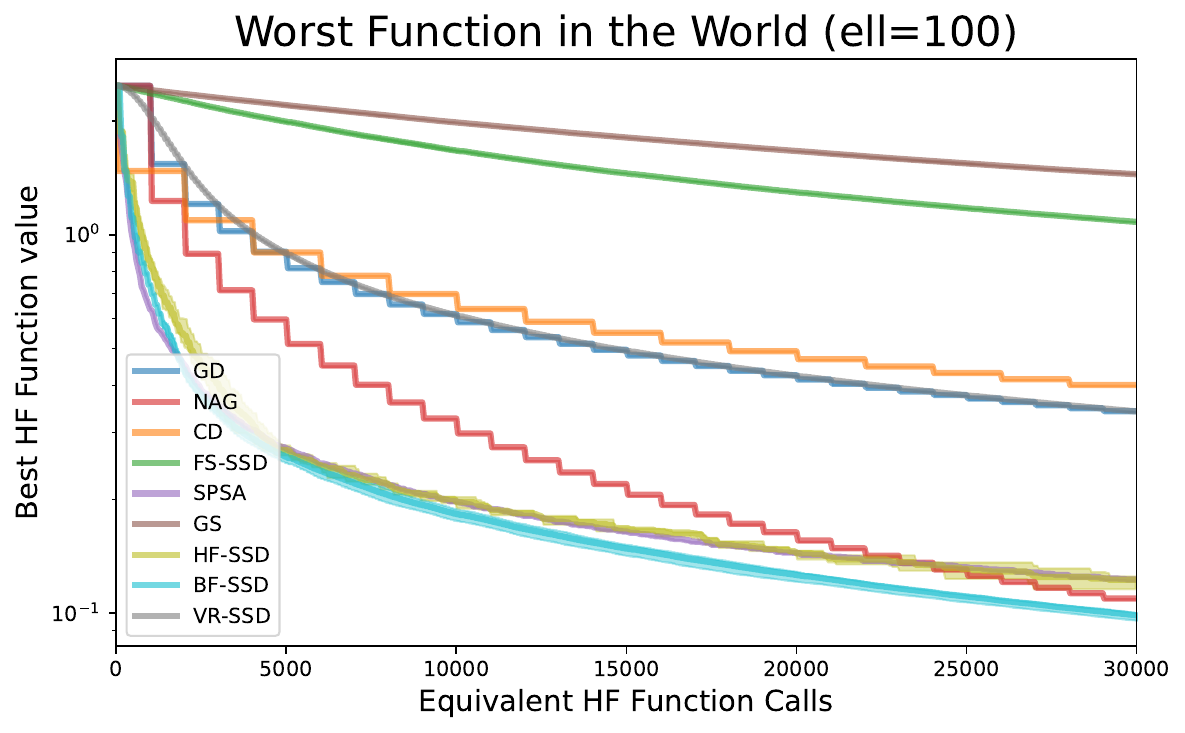}
        \subcaption{}\label{fig:worst-ell100-c0.9}
    \end{minipage}%
    \begin{minipage}{0.45\textwidth}
        \includegraphics[width=\textwidth]{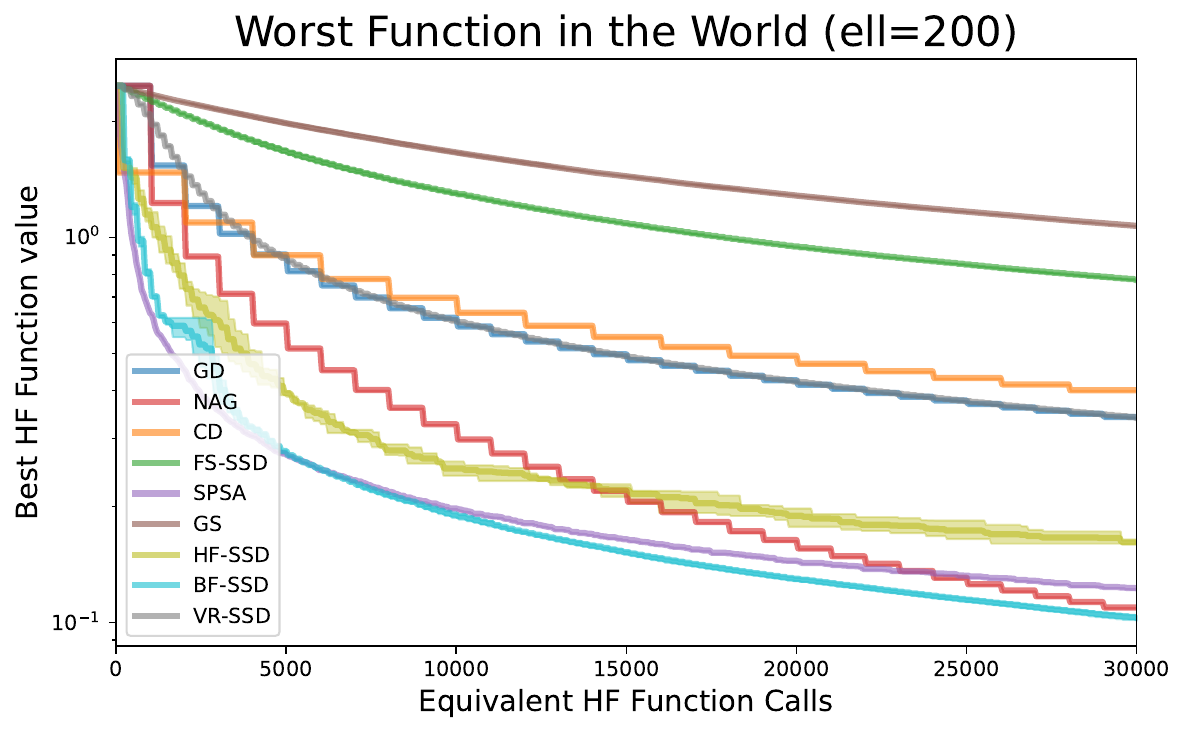}
        \subcaption{}\label{fig:worst-ell200-c0.9}
    \end{minipage}%

    \caption{Convergence performance for different optimizers. The $x$-axis is the equivalent number of HF function evaluations, including the number of LF function evaluations based on the cost ratio $r_L/r_H$. The $y$-axis is the HF function evaluation value at the current stage. We investigate the results when $\ell=20, 50, 100, 200$ with $r_L=2$, $r_H=100$. The corresponding results are presented with their titles indicating the specific choices. The shadow regions are the area between the best and the worst behavior over 10 trials.}
    \label{fig:worst}
\end{figure}

We set the dimension $D=1,\!000$, $\ell=20$, and $\Lip=20$. The intrinsic dimensions of the LF and HF functions are $r_H$ and $r_L$, respectively. We choose $r_L\ll r_H$ and assume the computational cost ratio between HF and LF evaluations is $r_H: r_L$. For Gradient Descent, we choose the standard step size of $1/\Lip = 0.05$, and for the GS and SSD-based methods, the step size is $\ell/(\Lip D)$. The backtracking parameter is $\beta = \ell/(2D)$. The hyperparameter study is conducted according to different values of $c\in\{0.8, 0.9, 0.99\}$ and $\ell\in\{5, 10, 20\}$. All the experiments are repeated 10 times, with shaded regions denoting the worst and the best performance over 10 trials. 

Figure~\ref{fig:worst} illustrates the performance of various optimizers across different values of $\ell$. Detailed results for $\ell=20$ and $c=0.99$ at $N$ from 500 to 8,000 are presented in Table~\ref{tab:worst_performance_table1}, while additional comparisons across different $\ell$ and $c$ configurations are included in Table~\ref{tab:worst_performance_table2}. These results show that BF-SSD consistently outperforms the other optimizers in most scenarios. For different SSD methods, the effect of $\ell$ on the final performance varies. Large values of $\ell$ improve the optimization results for FS-SSD and VR-SSD, while HF-SSD and BF-SSD prefer relatively smaller $\ell$, as highlighted in Table~\ref{tab:worst-ell-performance}.

\begin{table}[ht]
    \centering
    \begin{tabular}{lccccc}
        \toprule
        & \multicolumn{5}{c}{Equivalent HF function evaluations $N$} \\
        \cmidrule(l){2-6}
        Method & $N=100$ & $N=1000$ & $N=10000$ & $N=20000$ & $N=30000$ \\ 
        \midrule
        GD & $2.48 \pm 0.00$ & $2.48 \pm 0.00$ & $0.62 \pm 0.00$ & $0.43 \pm 0.00$ & $0.34 \pm 0.00$ \\
        CD & $\mathbf{1.48 \pm 0.00}$ & $1.48 \pm 0.00$ & $0.70 \pm 0.00$ & $0.49 \pm 0.00$ & $0.40 \pm 0.00$ \\
        NAG & $2.48 \pm 0.00$ & $2.48 \pm 0.00$ & $0.33 \pm 0.00$ & $0.16 \pm 0.00$ & $0.11 \pm 0.00$ \\
        FS-SSD & $2.47 \pm 0.00$ & $2.45 \pm 0.00$ & $2.26 \pm 0.00$ & $2.08 \pm 0.00$ & $1.93 \pm 0.00$ \\
        SPSA & $1.93 \pm 0.00$ & $\mathbf{0.66 \pm 0.00}$ & $0.20 \pm 0.00$ & $0.15 \pm 0.00$ & $0.12 \pm 0.00$ \\
        GS & $2.47 \pm 0.00$ & $2.46 \pm 0.00$ & $2.36 \pm 0.00$ & $2.25 \pm 0.00$ & $2.15 \pm 0.00$ \\
        HF-SSD & $1.99 \pm 0.05$ & $0.75 \pm 0.01$ & $0.40 \pm 0.21$ & $0.37 \pm 0.21$ & $0.20 \pm 0.06$ \\
        BF-SSD & $2.00 \pm 0.08$ & $0.68 \pm 0.02$ & $\mathbf{0.17 \pm 0.00}$ & $\mathbf{0.11 \pm 0.00}$ & $\mathbf{0.09 \pm 0.00}$ \\
        VR-SSD & $2.47 \pm 0.00$ & $2.09 \pm 0.01$ & $0.62 \pm 0.00$ & $0.43 \pm 0.00$ & $0.35 \pm 0.00$ \\
        \bottomrule
    \end{tabular}
    \caption{Performance values (mean $\pm$ std over 10 runs) showing the objective function for different optimization methods at various HF function evaluations $N$ with $\ell=20$ and $c=0.9$. The minimum values in each \emph{column} are highlighted in bold.}
    \label{tab:worst_performance_table1}
\end{table}


\begin{table}[ht!]
\centering
\caption{Comparison of SSD methods for different values of $\ell$ (Mean $\pm$ Std at $N=20,000$). Bold values indicate the minimum mean for each SSD method, i.e., across each row.}
\label{tab:worst-ell-performance}
\begin{tabular}{@{}lcccc@{}}
\toprule
Method & $\ell=20$ & $\ell=50$ & $\ell=100$ & $\ell=200$ \\ \midrule
FS-SSD & $2.0787 \pm 0.0013$ & $1.6621 \pm 0.0003$ & $1.2942 \pm 0.0022$ & $\mathbf{0.9473 \pm 0.0027}$ \\
HF-SSD & $0.3696 \pm 0.2109$ & $\mathbf{0.1482 \pm 0.0098}$ & $0.1574 \pm 0.0157$ & $0.3696 \pm 0.2368$ \\
BF-SSD & $\mathbf{0.1143 \pm 0.0005}$ & $0.1226 \pm 0.0015$ & $0.1260 \pm 0.0032$ & $0.1298 \pm 0.0013$ \\
VR-SSD & $0.4309 \pm 0.0025$ & $0.4281 \pm 0.0013$ & $0.4238 \pm 0.0009$ & $\mathbf{0.4226 \pm 0.0013}$ \\ \bottomrule
\end{tabular}
\end{table}

\subsection{Zeroth-Order Optimization for Machine Learning Problems}
\label{ssec:zo-for-ml}

Next, we contrast the BF-SSD optimization results against other competing zeroth-order methods. Besides showing the advantages of the BF-SSD, we also show that it is often convenient to design a cheap LF model in many machine learning problems that can be leveraged to accelerate the convergence. 

\subsubsection{Dual Form of Kernel Ridge Regression}
\label{ssec:kernel-ridge}
Consider a kernel ridge regression problem as follows. By the representer theorem, given data $\{(\bm x_i, y_i)\}_{i=1}^D$ and a kernel function $\kappa:\R^{\tilde{m}}\times\R^{\tilde{m}}\to\R$, the goal is to find the coefficients $\tilde{\bm \alpha}$ such that
\begin{equation}
\label{eq:kernel-ridge-obj}
f_\textup{predict}(\bm x) = \sum_{i=1}^D \tilde{\alpha}_i k(\bm x, \bm x_i).
\end{equation}
One way to compute the coefficients is to solve the dual form of the kernel ridge regression,
\begin{equation}
\label{eq:kernel-ridge-opt}
\tilde{\bm\alpha}^* = \argmin_{\bm\alpha} \bm \alpha^T\bm K\bm \alpha - 2\langle \bm \alpha,\bm y\rangle + \lambda\lVert\bm\alpha\rVert^2,
\end{equation}
where $\bm K$ is the kernel matrix with $[\bm K]_{i, j} = \kappa(\bm x_i, \bm x_j)$, $[\bm y]_i = y_i$, and $\lambda$ is a positive scalar regularization parameter. The solution of Equation~\eqref{eq:kernel-ridge-opt} can be explicitly represented as 
\begin{equation}
\tilde{\bm\alpha }^* = (\bm K + \lambda\bm I)^{-1}\bm y.
\end{equation}
However, solving the explicit solution involves inverting the matrix $\bm K + \lambda\bm I$, which takes $\mathcal{O}(D^3)$ and can be extremely expensive when $D$ is large. When $D$ is sufficiently large, evaluating the function in Equation~\eqref{eq:kernel-ridge-obj} takes $\mathcal{O}(D^2)$ and becomes expensive. Therefore, an alternative approach to solve this problem is to build a low-rank approximation (surrogate) of the kernel matrix $\bm K$, which we adopt for evaluating the LF objective. To do this, we employ the Nystr\"oem method, which finds a subset $\mathcal{S}\subset [1,\dots,D]$ with size $l\ll D$ and builds the kernel surrogate $\tilde{\bm K} = \bm K[:,\mathcal{S}](\bm K[\mathcal{S},\mathcal{S}])^{-1}\bm K[\mathcal{S}, :]$. By implementing the Nystr\"oem method, the complexity of evaluating the objective function is reduced to $\mathcal{O}(lD)$. 
Therefore, the ratio of computational cost between HF and LF function evaluation is $D/l$. 

\begin{figure}[ht!]
    \centering
    \includegraphics[width=0.5\linewidth]{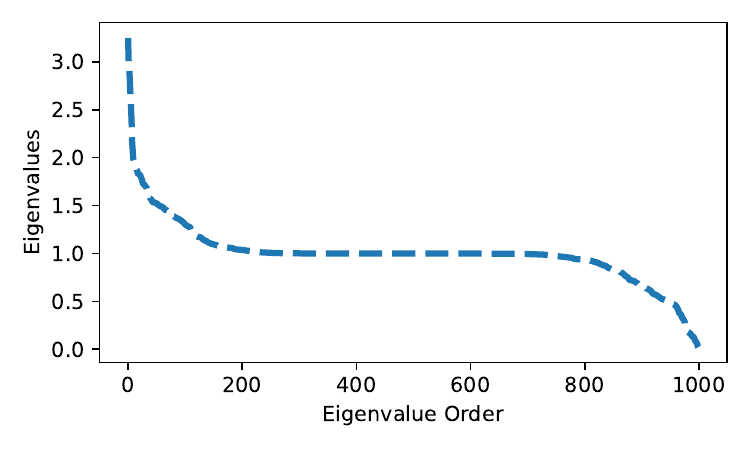}
    \caption{The eigenvalues of the kernel matrix implemented in Equation~\eqref{eq:kernel-ridge-opt}.}
    \label{fig:kernel-eigen}
\end{figure}

We pretend the problem in a black-box format, where access to the HF function is only available through an API, thus hiding $\bm y$ (and/or $\bm K$) in Equation~\eqref{eq:kernel-ridge-opt} and making derivative information inaccessible. In this case, we assume the values of $\bm y$ are unavailable for privacy reasons. For the regression data, we select the first $D=1,\!000$ samples from the California housing dataset provided in the scikit-learn library~\citep{pedregosa2011scikit}. A subset $\mathcal{S}$ with size $l=10$ is randomly selected for the Nystr\"oem approximation. We use a Gaussian (RBF) kernel with lengthscale $1.0$ to generate the corresponding kernel matrix $\bm{K}$. Figure~\ref{fig:kernel-eigen} shows the decay of eigenvalues for $\bm{K}$, with a rapid drop, especially within the first 100 eigenvalues, due to the Gaussian kernel's properties. This fast decay motivates our focus on cases where the values of $\ell$ are below 100. The starting point $\bm x_0$ is set at the origin $\bm 0$.

\begin{figure}[ht!]

    \begin{minipage}{0.45\textwidth}
        \includegraphics[width=\textwidth]{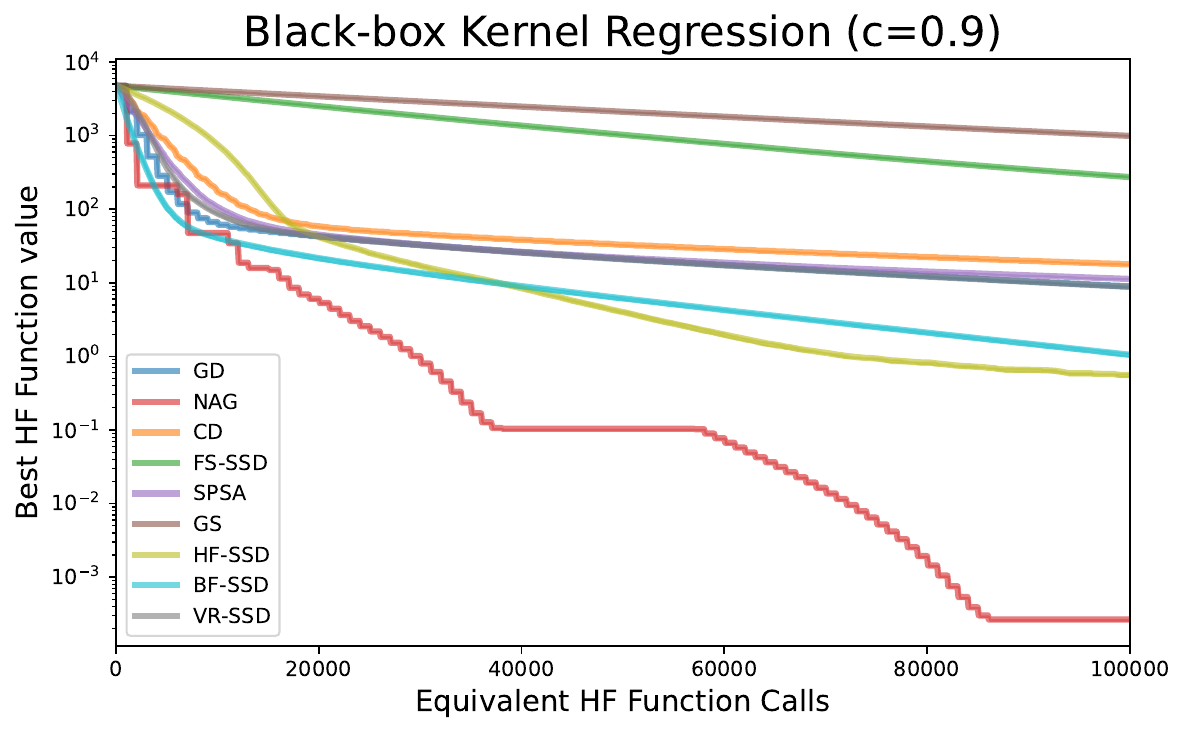}
        \subcaption{}\label{fig:kernel-ell100-c0.9}
    \end{minipage}%
    \hfill
    \begin{minipage}{0.45\textwidth}
        \includegraphics[width=\textwidth]{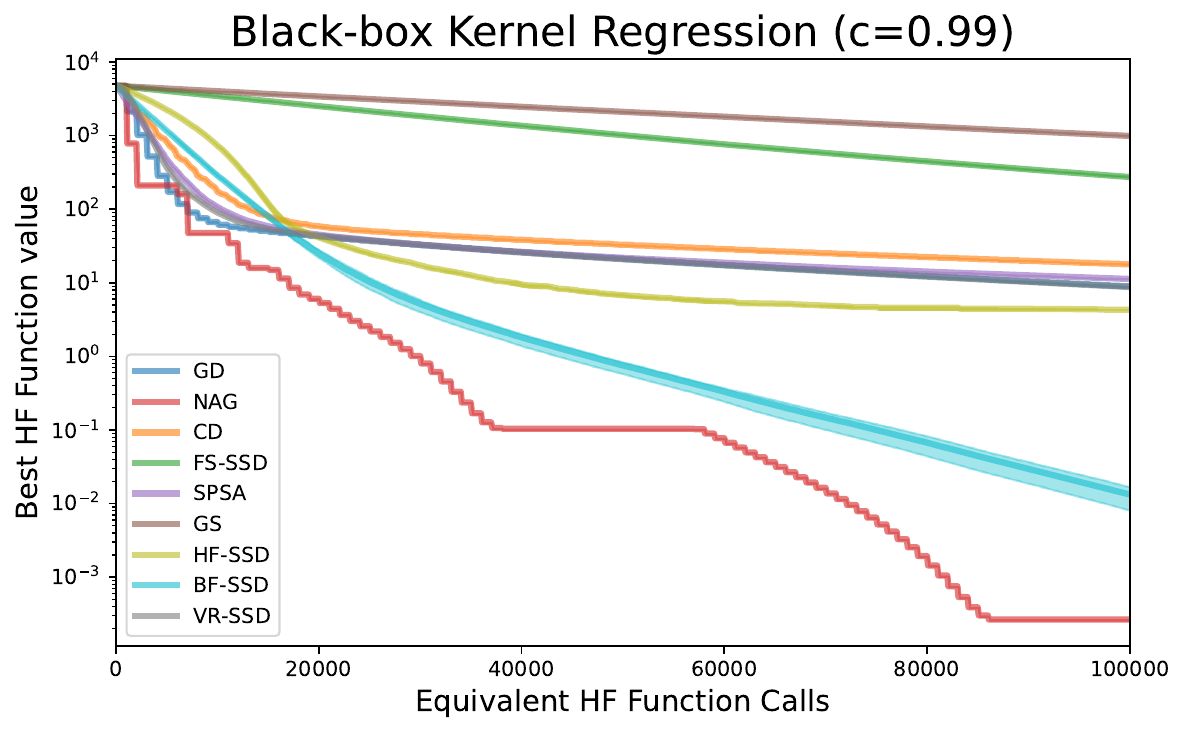}
        \subcaption{}\label{fig:kernel-ell100-c0.99}
    \end{minipage}

    \caption{Similar to Figure~\ref{fig:worst}, we compare the optimizer performances with varying parameters $\ell = 100$ and $c\in\{0.9, 0.99\}$. The corresponding results are presented with their titles indicating the specific choices. The shadow regions are the area between the best and the worst behavior over 10 trials.}
    \label{fig:kernel}
\end{figure}

The results of kernel ridge regression are shown in Figure~\ref{fig:kernel}, with values of $c$ chosen from 0.9 and 0.99, and the value of $\ell$ is fixed as 100. According to these results, BF-SSD shows advantages over other methods except NAG and HF-SSD in Figure~\ref{fig:kernel-ell100-c0.9}. When the backtracking factor $c$ decreases, the step sizes determined by the backtracking method become more conservative, leading to suboptimal results, especially for BF-SSD. We also implement different combinations of $c$ and $\ell$ and collect the SSD performances in Table~\ref{tab:kernel_performance_table}. The results suggest that BF-SSD outperforms other SSD methods for most cases, and larger values of $\ell$ and $c$ improve the performance of BF-SSD.

\begin{table}[ht!]
    \centering
    \renewcommand{\arraystretch}{1.3}
    \begin{tabular}{cccccc}
        \toprule
        \textbf{$c$} & \textbf{$\ell$} & \textbf{FS-SSD} & \textbf{HF-SSD} & \textbf{VR-SSD} & \textbf{BF-SSD} \\ \midrule 
        \multirow{3}{*}{0.9}  
         & 10  & $3500.91 \pm 5.74$ & $9.49 \pm 0.21$ & $23.31 \pm 0.83$ & $\mathbf{8.18 \pm 0.60}$ \\ 
         & 50  & $1015.82 \pm 5.90$ & $\mathbf{4.01 \pm 0.13}$ & $20.87 \pm 0.55$ & $6.13 \pm 0.22$ \\ 
         & 100 & $270.95 \pm 7.07$ & $\mathbf{5.49 \pm 0.18}$ & $21.56 \pm 0.69$ & $6.22 \pm 0.25$ \\ 
         \midrule
        \multirow{3}{*}{0.95} 
         & 10  & $3493.88 \pm 11.27$ & $18.39 \pm 0.28$ & $23.20 \pm 0.30$ & $\mathbf{2.94 \pm 0.49}$ \\ 
         & 50  & $1013.22 \pm 8.76$ & $5.33 \pm 0.35$ & $21.44 \pm 1.37$ & $\mathbf{2.22 \pm 0.21}$ \\ 
         & 100 & $270.36 \pm 2.62$ & $6.15 \pm 0.17$ & $21.02 \pm 0.98$ & $\mathbf{2.28 \pm 0.10}$ \\ 
         \midrule
        \multirow{3}{*}{0.99} 
         & 10  & $3503.99 \pm 3.77$ & $29.55 \pm 0.05$ & $23.35 \pm 0.38$ & $\mathbf{1.40 \pm 0.29}$ \\ 
         & 50  & $1010.92 \pm 7.17$ & $6.95 \pm 0.19$ & $20.90 \pm 0.31$ & $\mathbf{0.75 \pm 0.12}$ \\ 
         & 100 & $273.64 \pm 3.41$ & $6.26 \pm 0.08$ & $21.38 \pm 0.46$ & $\mathbf{0.79 \pm 0.24}$ \\ 
         \bottomrule
    \end{tabular}
    \caption{Black-box kernel ridge regression HF function values (mean $\pm$ std) for FS-SSD, HF-SSD, VR-SSD, and BF-SSD at various combinations of $\ell$ and $c$ at $N=50,000$. Considering uncertainties, the minimum values in each row are highlighted in bold.}
    \label{tab:kernel_performance_table}
\end{table}


\subsubsection{Black-box Adversarial Attack on MNIST}
\label{sssec:black-box}

In practice, especially in explainable AI (XAI), researchers have found that many deep learning models are not robust to data noise. Specifically, if test data is contaminated by a small perturbation imperceptible to humans, many previously well-performing deep learning models fail to produce reasonable results~\citep{goodfellow2014explaining}. Generating such biased noise to confuse a trained neural network model is usually referred to as ``attack'' in adversarially robust training. This need not be a ``black hat'' activity, as it can be used as part of hardening a system to prevent these attacks in the future.

There are primarily two types of attacks: one is a white-box attack, in which we have knowledge of the model and inject the adversarial noise to confuse the given model. The standard approach under this scenario is to generate the shift in pixel space based on the gradient of the objective function to maximize the loss. The other type of attack is called a black-box attack, in which one does not have knowledge of the trained model and would like to generate adversarial data from it. The black-box scenario is more difficult due to the missing knowledge, and one way to solve it is to treat this problem as a black box optimization. To generate an adversarial sample for a given data instance $\bm x^\dagger\in\R^D$, with $D$ representing the number of pixels in the given image, a common formulation  of the adversarial attack is to find a noise sample  $\bm x^*$ solving 
\begin{equation}
\label{eq:ad_attack}
\bm x^* = \argmax_{\lVert\bm x\rVert\leq\varepsilon} \mathcal{L}\left(g(\bm x + \bm x^\dagger), y^\dagger\right),
\end{equation}
where $y^\dagger$ is the correct label of $\bm x^\dagger$, $\mathcal{L}$ is the attack loss function, and $g$ represents the model for attack. Following the adversarial attack paradigm of \citet{carlini2017towards} and its black-box extension \citep{chen2017zoo}, we use a soft version of the problem \eqref{eq:ad_attack} as follows: 
\begin{equation}
\bm x^* = \argmin_{\bm x} -\text{CE}\left(g(\bm x + \bm x^\dagger), {y^\dagger}\right) + \tilde{\tau}\lVert\bm x\rVert^2,
\end{equation}
where the cross entropy loss $\text{CE}$ is assigned as the attack loss and $g(\cdot)$ outputs the probabilities of different classes, usually using a softmax function for normalization. $\tilde{\tau}$ is a variable balancing the attack loss function $\text{CE}$ and the attack norm. The optimization goal is to find a small shift $\bm x$ in pixel space so that the output results are greatly changed.

In this study, we utilized two convolutional neural network (CNN) architectures to model the HF and LF representations for classification tasks on the MNIST dataset with $60,\!000$ training data and $10,\!000$ testing data. The HF model was a deeper CNN consisting of two convolutional layers, the first with 32 filters and the second with 64 filters, both using $5 \times 5$ kernels, followed by ReLU activations and $2 \times 2$ max-pooling. The flattened output from the convolutional layers ($7 \times 7 \times 64$) was connected to a fully connected layer with 1024 neurons, followed by a 10-class output layer. In contrast, the LF model employed a simplified architecture with a single convolutional layer containing 2 filters and a $3 \times 3$ kernel, followed by ReLU activation and $2 \times 2$ max-pooling. The output ($13 \times 13 \times 2$) was flattened and passed through a fully connected layer with 16 neurons, leading to a 10-class output layer with log-softmax activation. The HF model was designed to provide high-capacity representations, while the LF model served as a lightweight alternative for computational efficiency. The LF model was trained using knowledge distillation \citep{hinton2015distilling}, leveraging only 1000 training samples and 1000 evaluations of the HF function. Knowledge distillation is a technique where a smaller, simpler model (the student) learns to replicate the outputs of a larger, more complex model (the teacher), effectively transferring knowledge while reducing computational costs. The classification accuracy for the HF and LF CNNs are $99.02\%$ and $82.21\%$, respectively. There are $27,\!562$ parameters for the LF CNN and $3,\!274,\!634$ parameters for the HF CNN. We estimate the ratio between HF and LF computational costs as $3274634/27562\approx 118.8$. The images in MNIST dataset are $28\times 28$ with a single channel, hence the dimension is $D=784$. The starting points are initialized as the origin point for all experiments. 

\begin{figure}[ht!]

    \begin{minipage}{0.45\textwidth}
        \includegraphics[width=\textwidth]{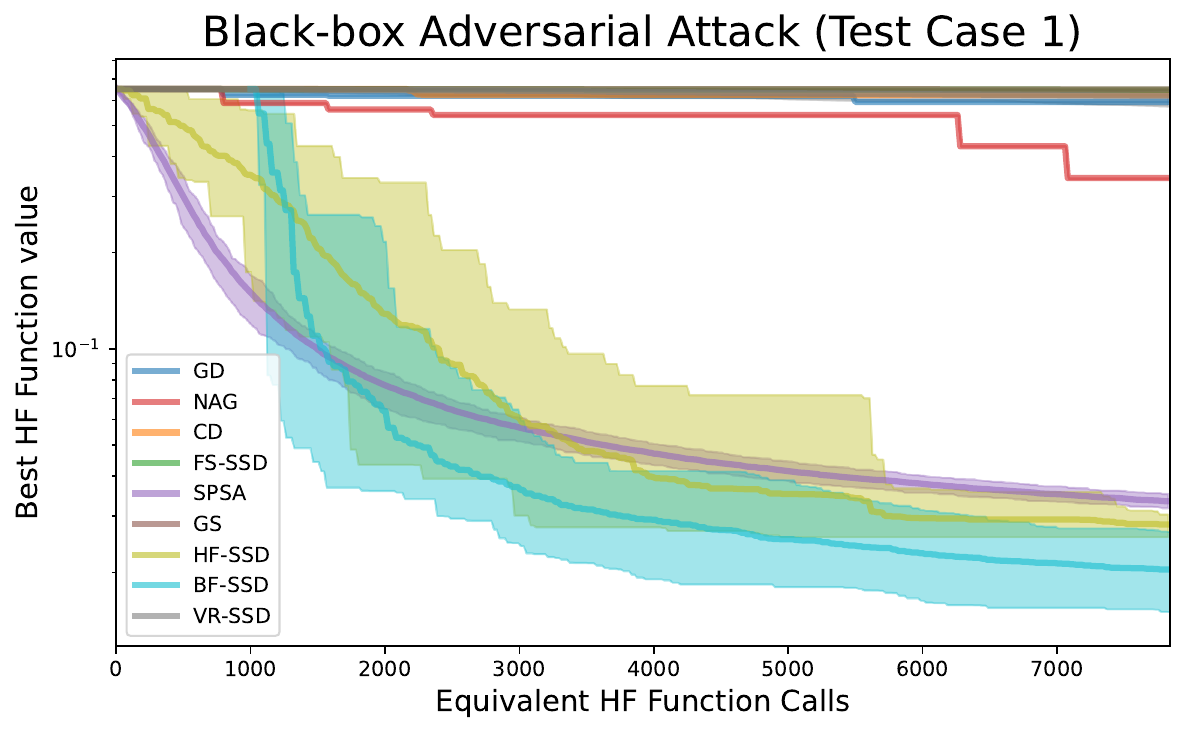}
        \subcaption{}\label{fig:adversarial-5}
    \end{minipage}%
    \hfill
    \begin{minipage}{0.45\textwidth}
        \includegraphics[width=\textwidth]{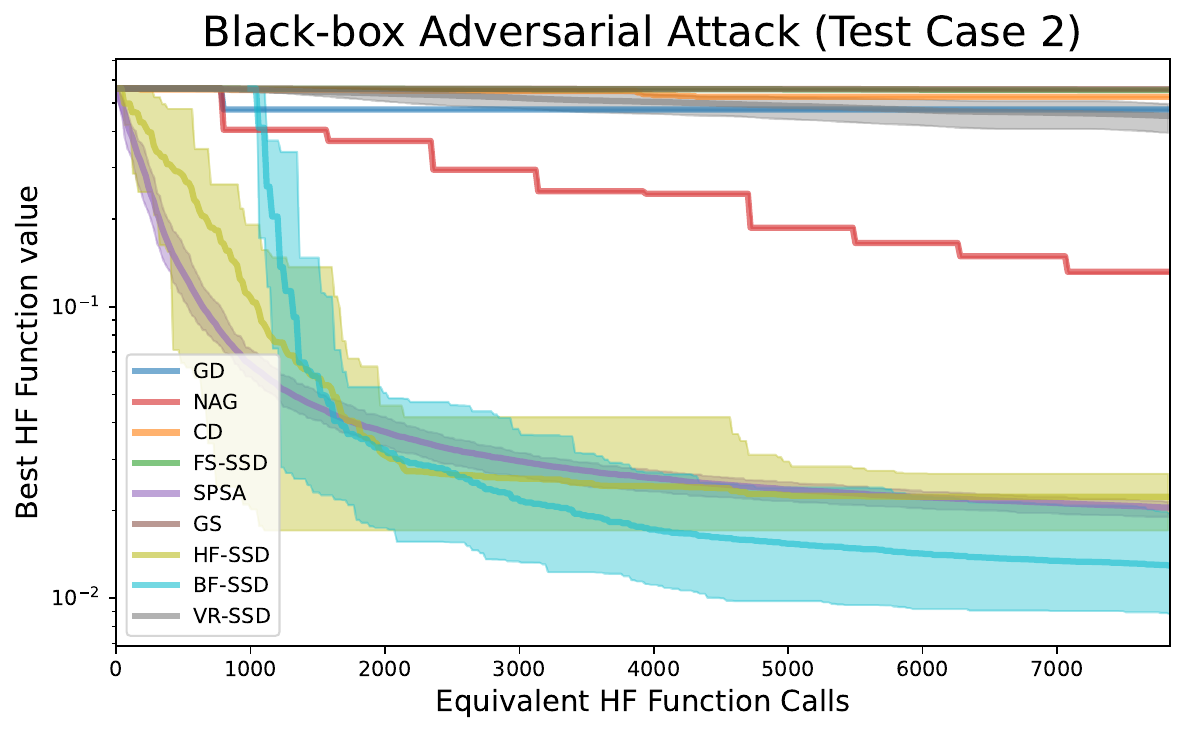}
        \subcaption{}\label{fig:adversarial-3}
    \end{minipage}%

    \caption{Optimization performances according to different attack targets. The images and their attack noises are presented in Figure~\ref{fig:adversarial-idx8-idx18}.}
    \label{fig:adversarial}
\end{figure}

\begin{figure}[ht!]
    \centering
    \begin{minipage}{0.32\textwidth}
        \centering
        \includegraphics[width=\textwidth]{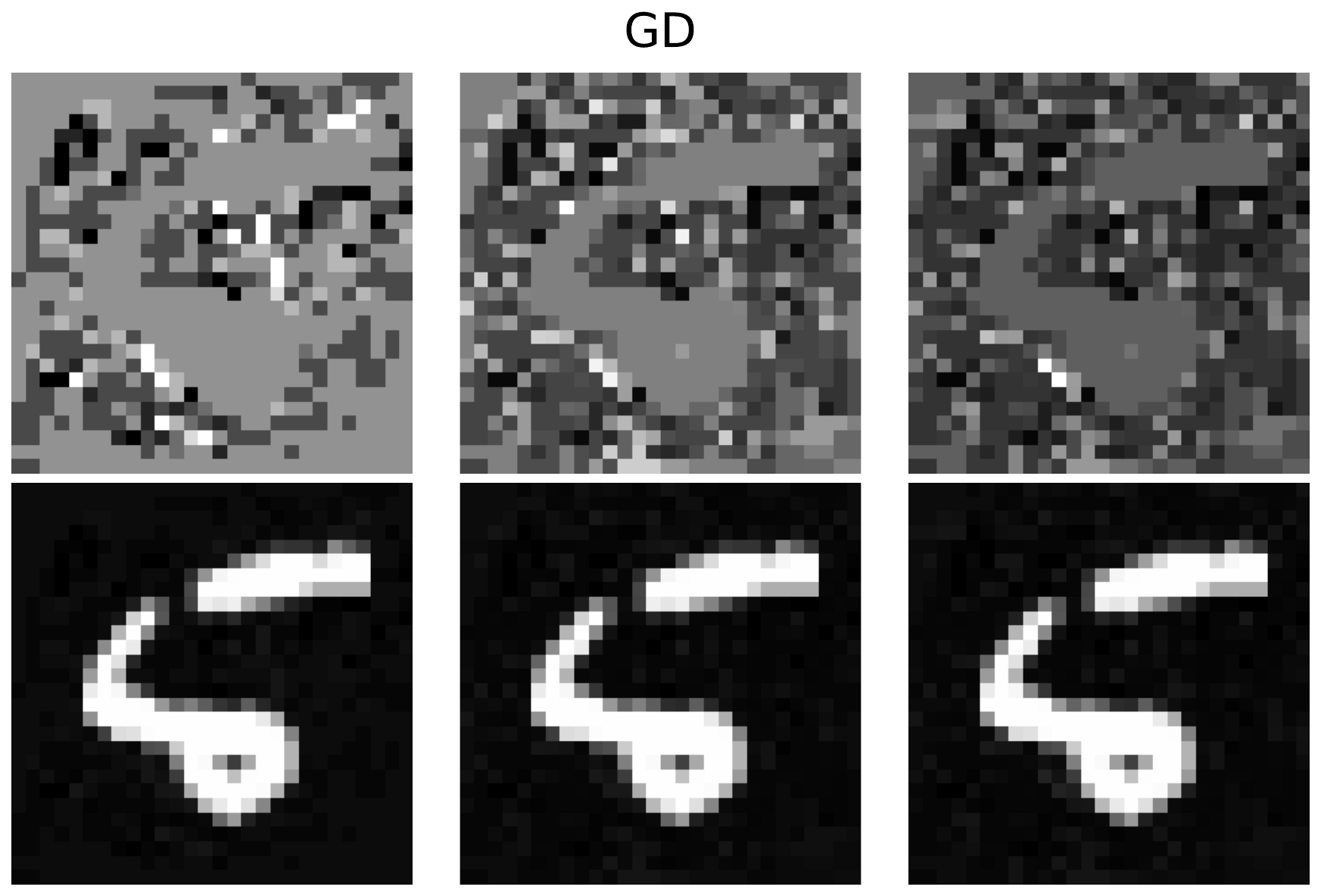}
        \subcaption{GD (Label=5, Predict=5)}\label{fig:adv-idx8-gd}
    \end{minipage}
    \hfill
    \begin{minipage}{0.32\textwidth}
        \centering
        \includegraphics[width=\textwidth]{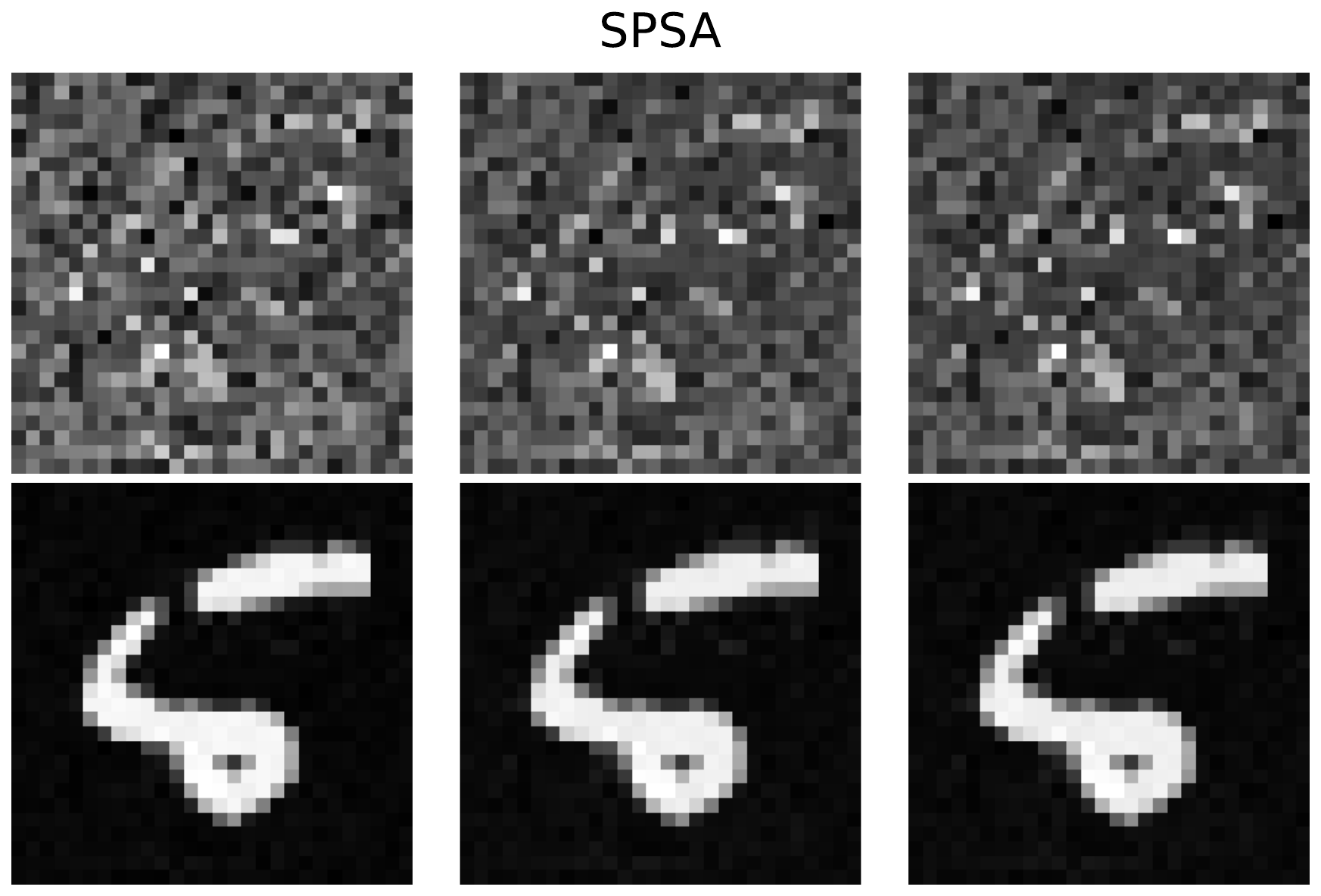}
        \subcaption{SPSA (Label=5, Predict=6)}\label{fig:adv-idx8-spsa}
    \end{minipage}
    \hfill
    \begin{minipage}{0.32\textwidth}
        \centering
        \includegraphics[width=\textwidth]{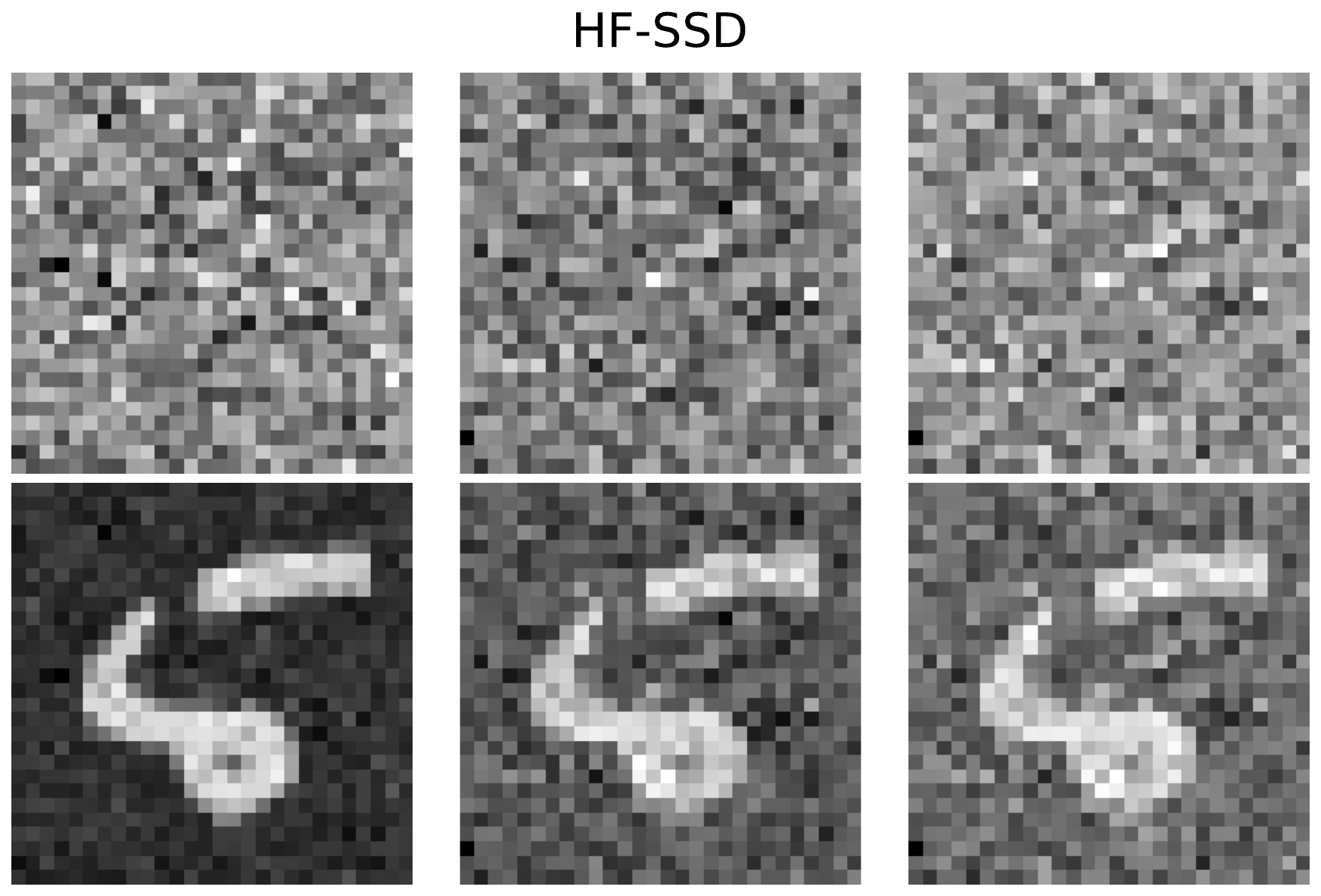}
        \subcaption{HF-SSD (Label=5, Predict=6)}\label{fig:adv-idx8-hfssd}
    \end{minipage}
    \vskip\baselineskip
    \begin{minipage}{0.32\textwidth}
        \centering
        \includegraphics[width=\textwidth]{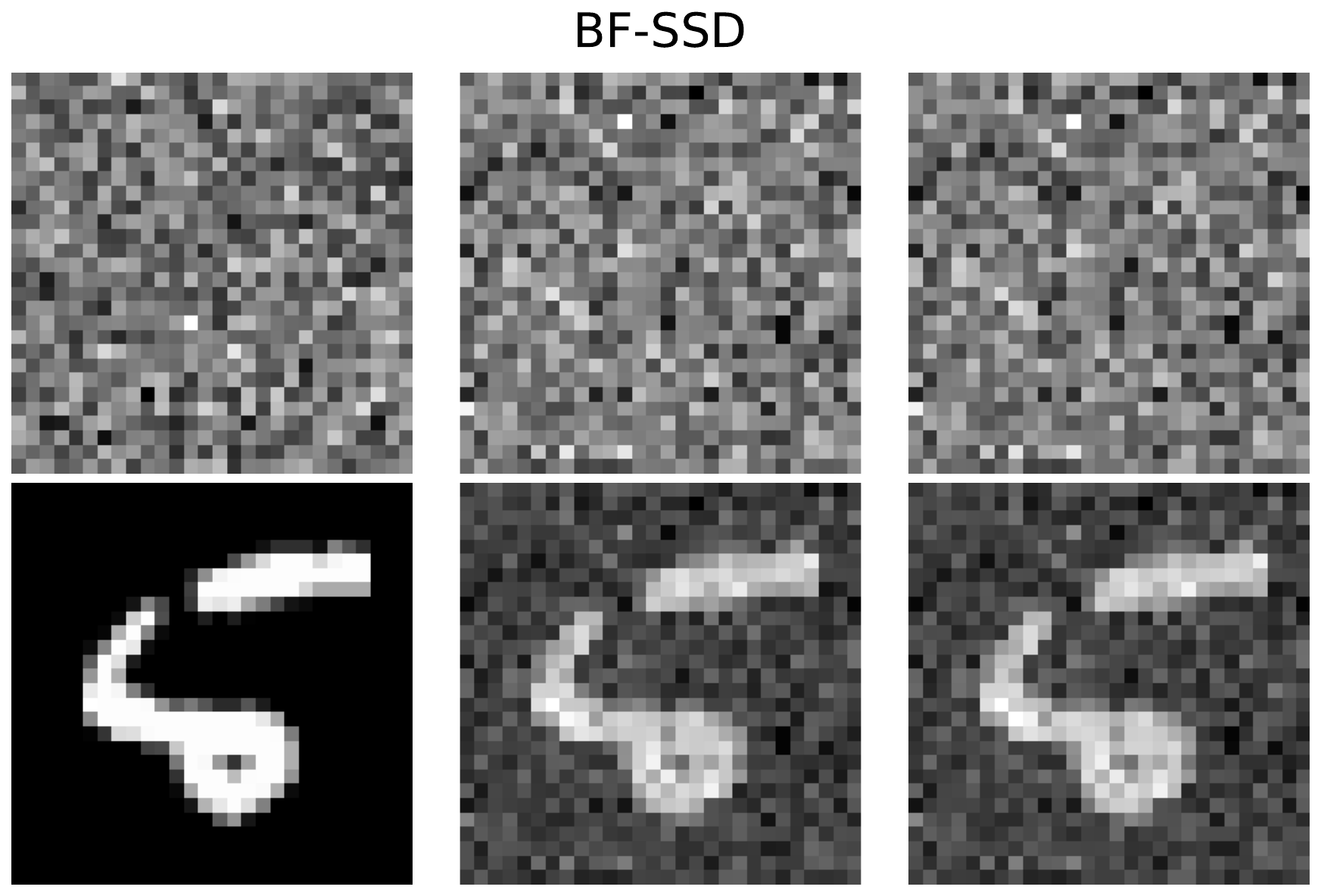}
        \subcaption{BF-SSD (Label=5, Predict=6)}\label{fig:adv-idx8-bfssd}
    \end{minipage}
    \hfill
    \begin{minipage}{0.32\textwidth}
        \centering
        \includegraphics[width=\textwidth]{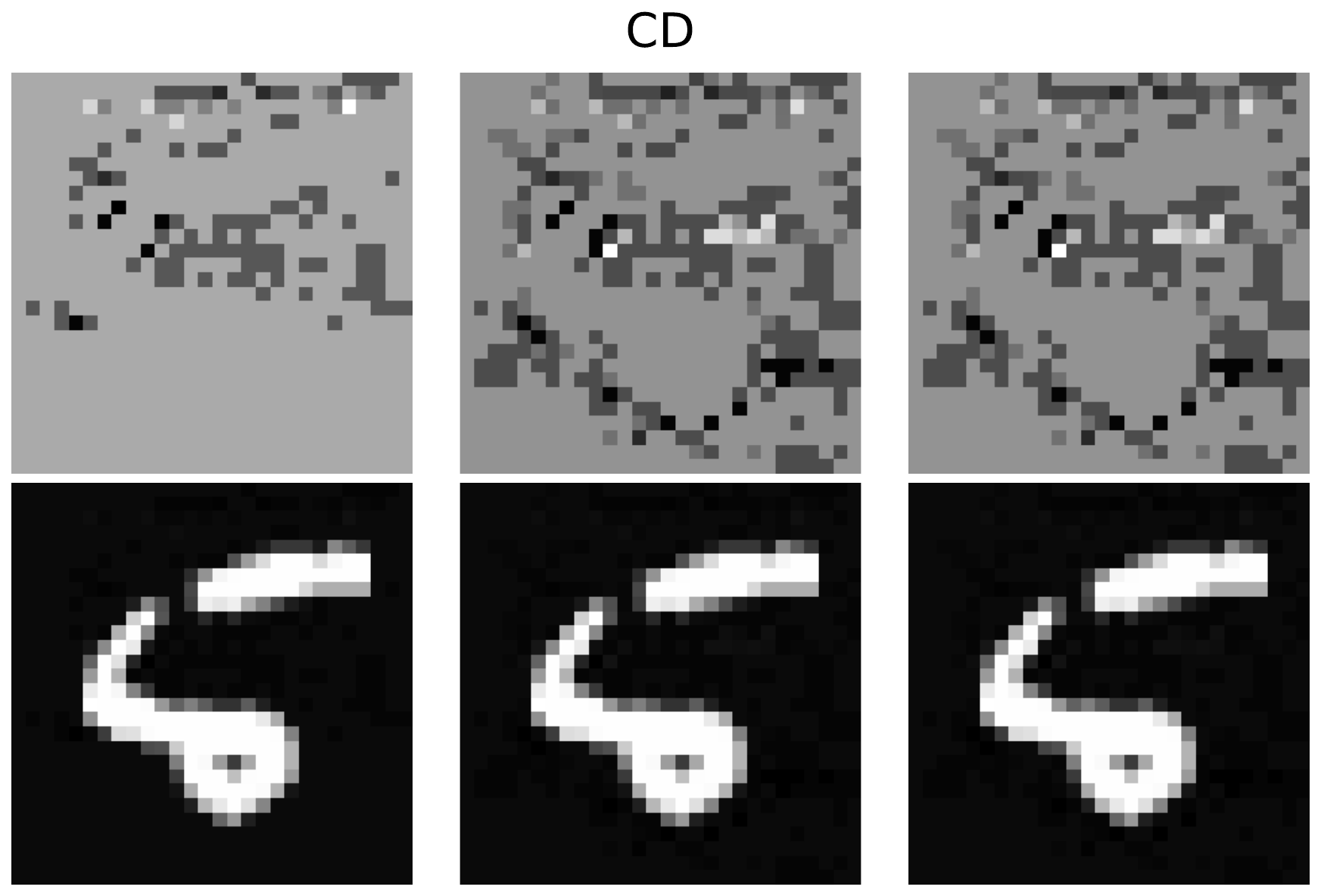}
        \subcaption{CD (Label=5, Predict=5)}\label{fig:adv-idx8-cd}
    \end{minipage}
    \hfill
    \begin{minipage}{0.32\textwidth}
        \centering
        \includegraphics[width=\textwidth]{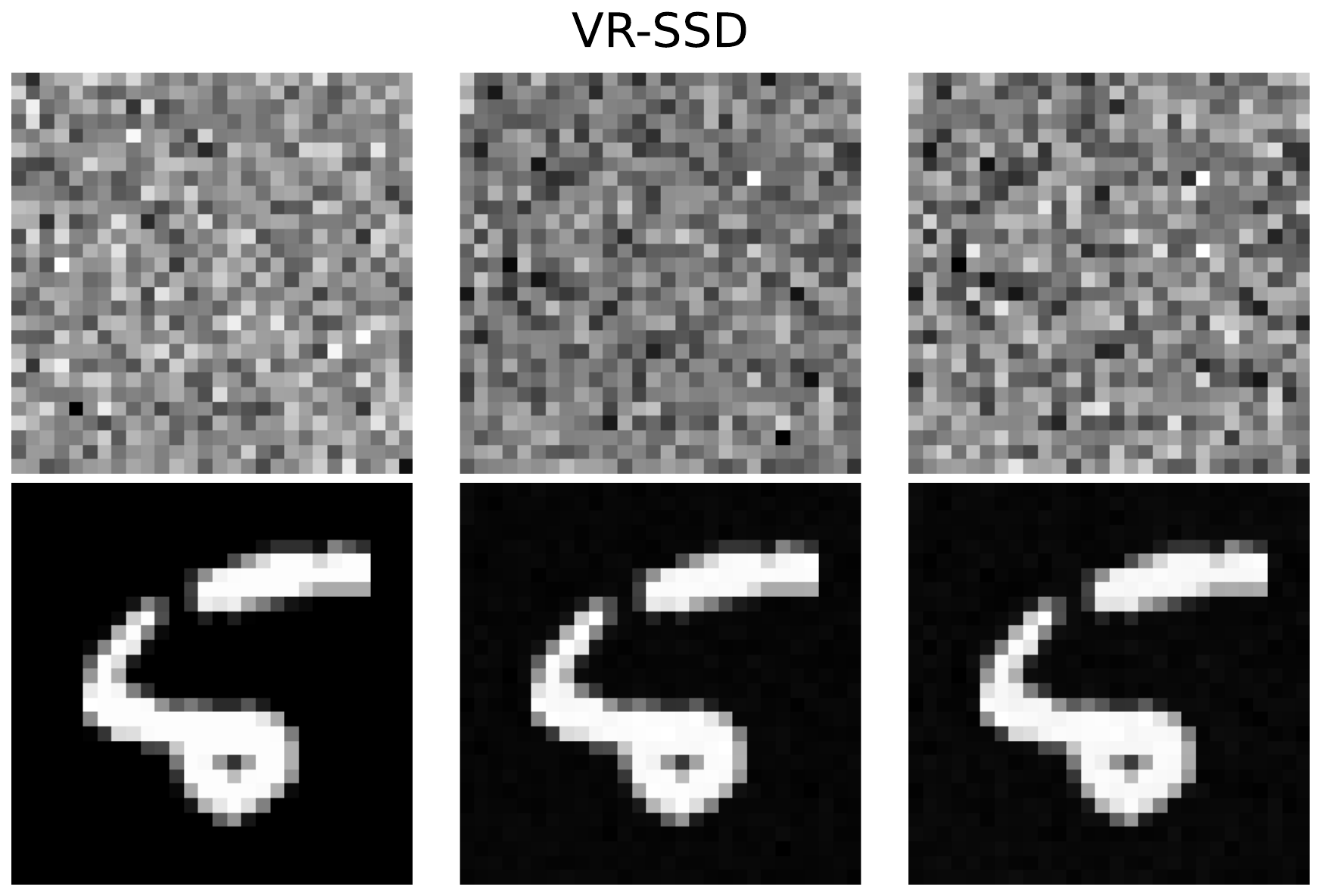}
        \subcaption{VR-SSD (Label=5, Predict=5)}\label{fig:adv-idx8-ssdsag}
    \end{minipage}
    \vskip\baselineskip
    \begin{minipage}{0.32\textwidth}
        \centering
        \includegraphics[width=\textwidth]{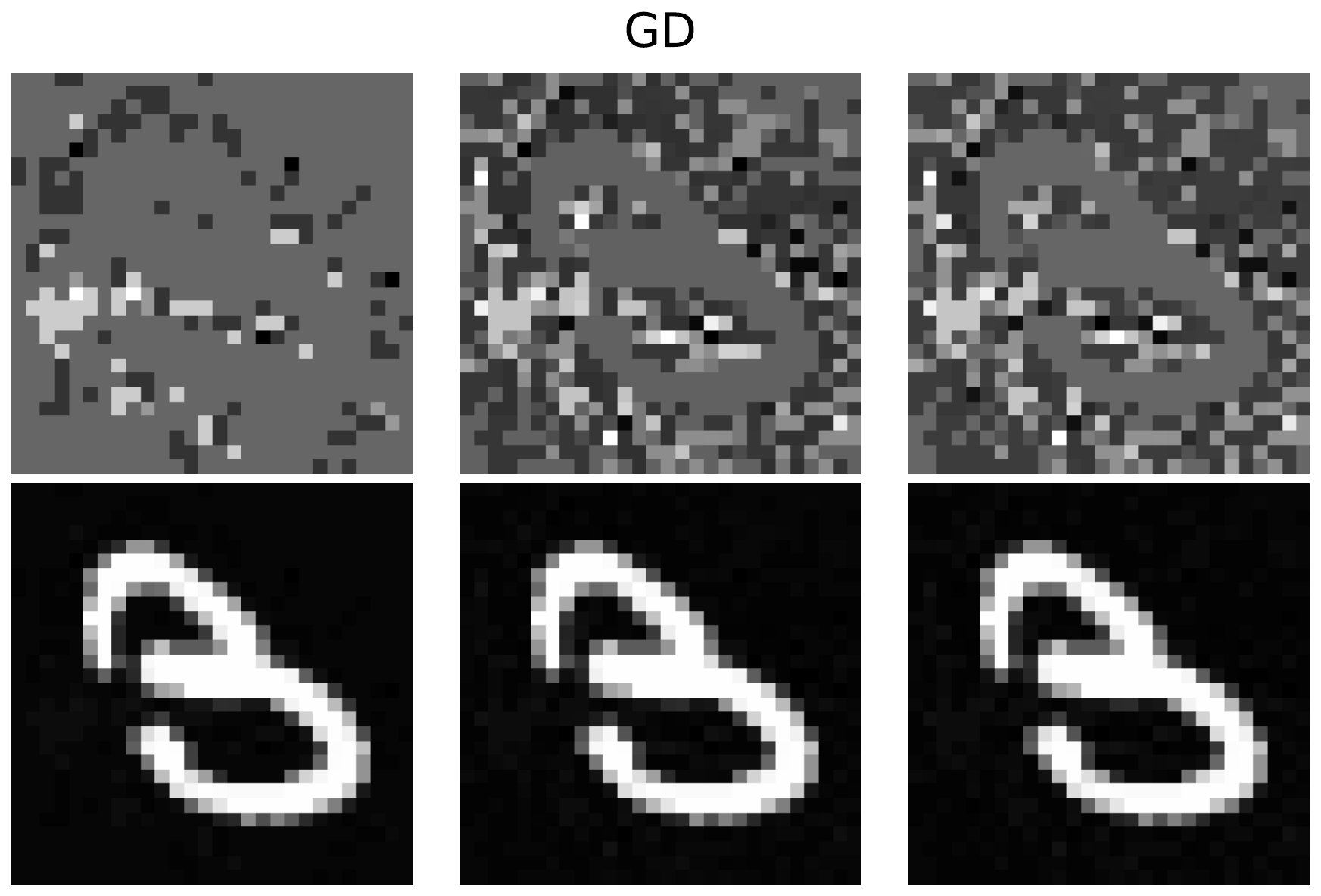}
        \subcaption{GD (Label=3, Predict=3)}\label{fig:adv-idx18-gd}
    \end{minipage}
    \hfill
    \begin{minipage}{0.32\textwidth}
        \centering
        \includegraphics[width=\textwidth]{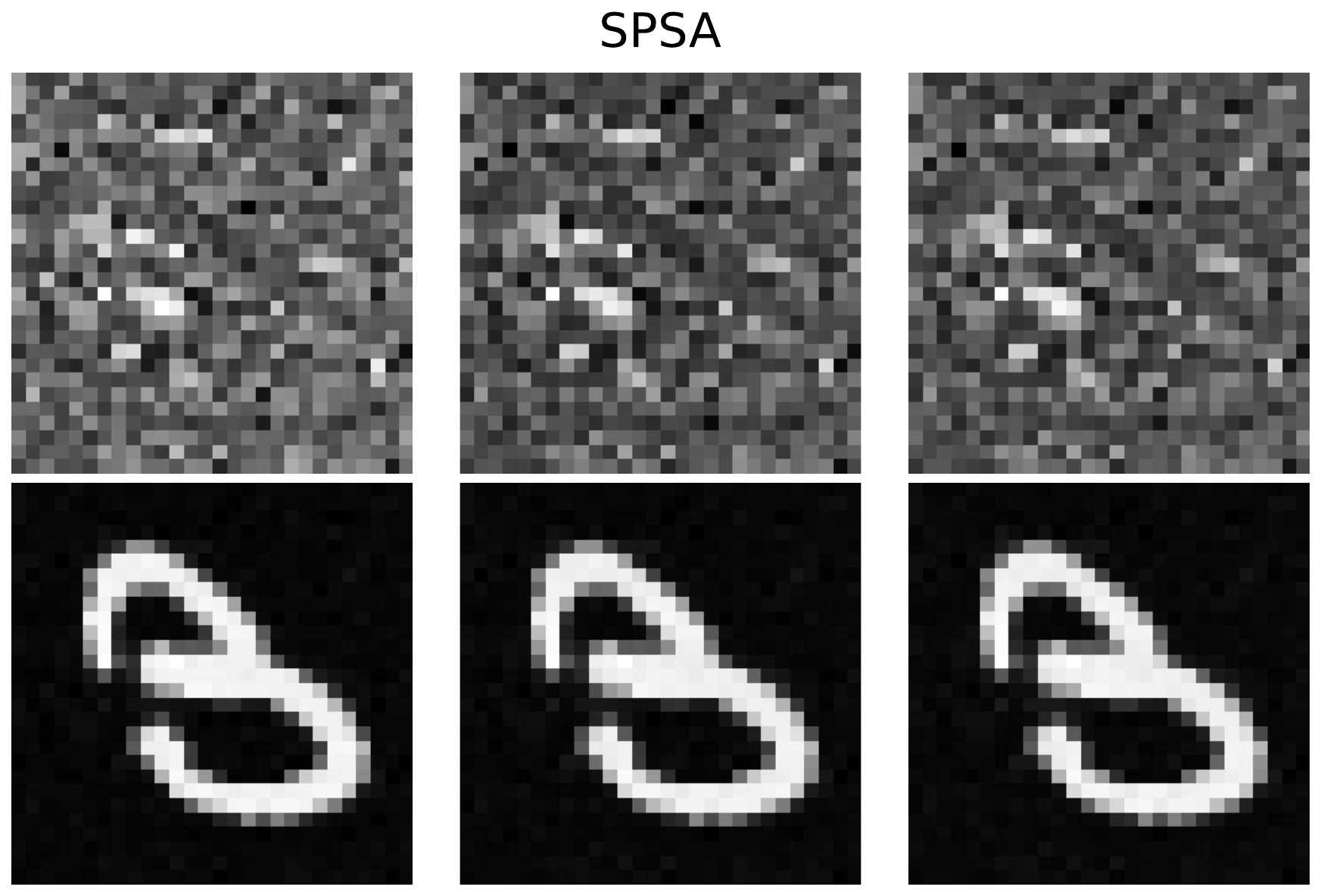}
        \subcaption{SPSA (Label=3, Predict=8)}\label{fig:adv-idx18-spsa}
    \end{minipage}
    \hfill
    \begin{minipage}{0.32\textwidth}
        \centering
        \includegraphics[width=\textwidth]{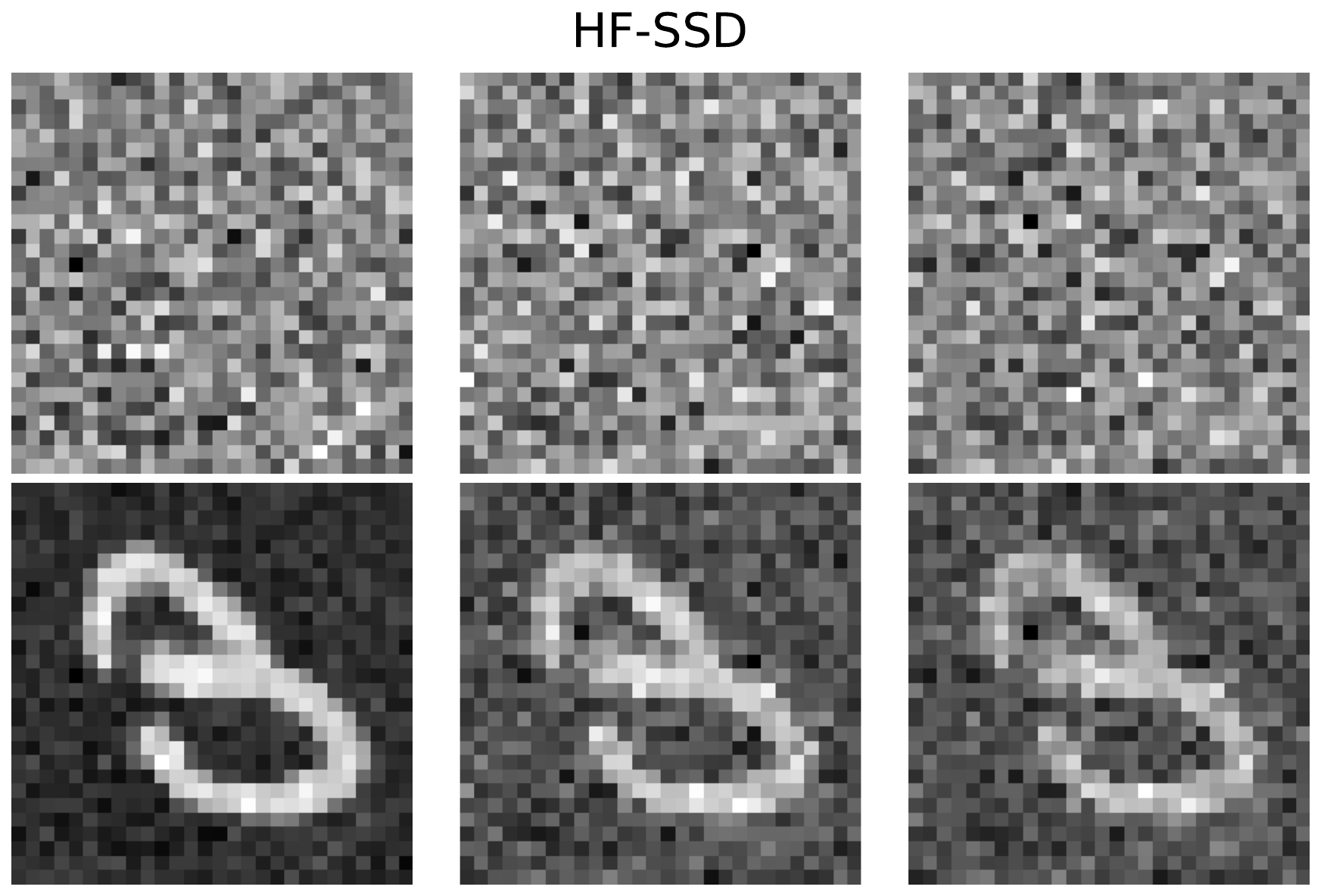}
        \subcaption{HF-SSD (Label=3, Predict=8)}\label{fig:adv-idx18-hfssd}
    \end{minipage}
    \vskip\baselineskip
    \begin{minipage}{0.32\textwidth}
        \centering
        \includegraphics[width=\textwidth]{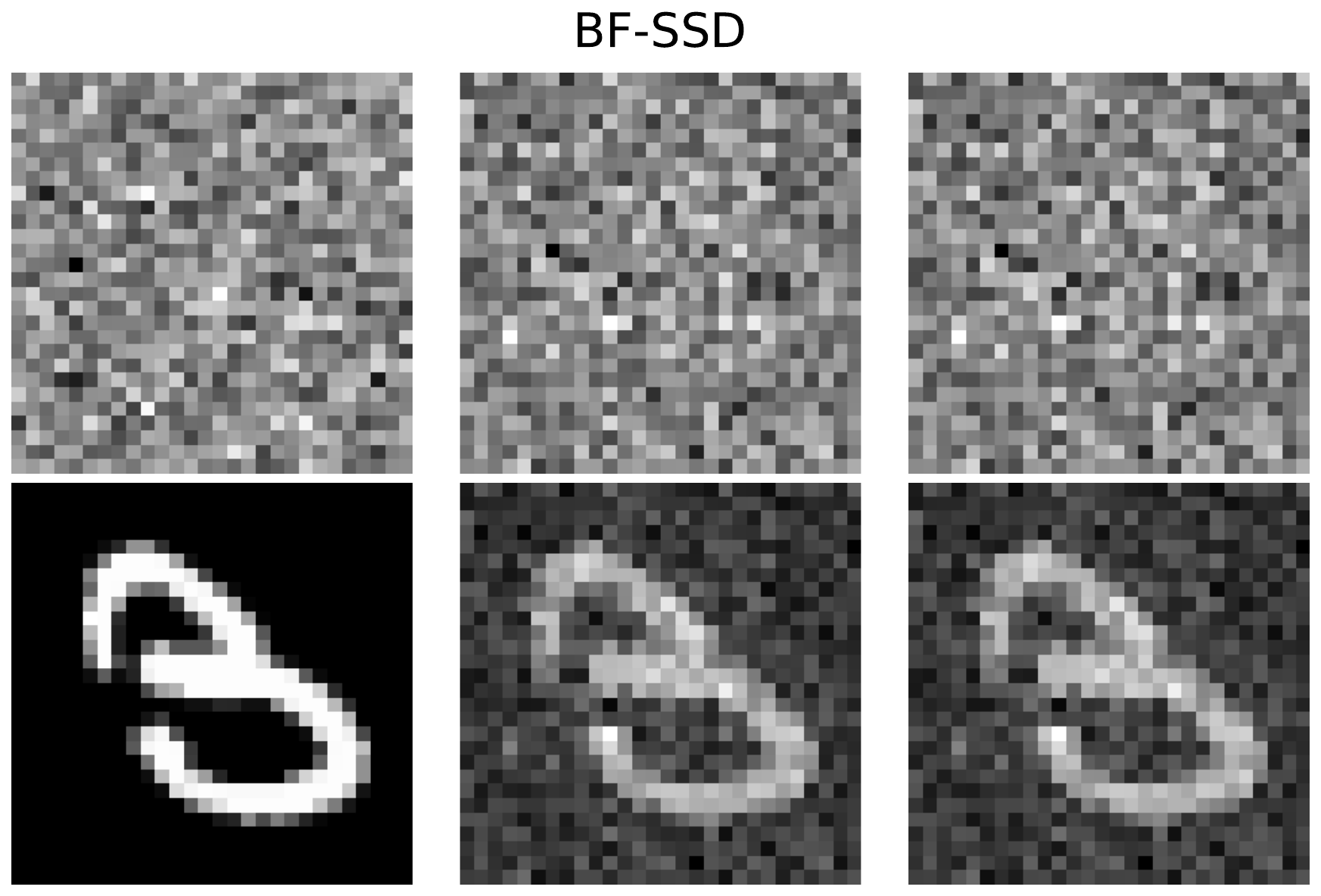}
        \subcaption{BF-SSD (Label=3, Predict=8)}\label{fig:adv-idx18-bfssd}
    \end{minipage}
    \hfill
    \begin{minipage}{0.32\textwidth}
        \centering
        \includegraphics[width=\textwidth]{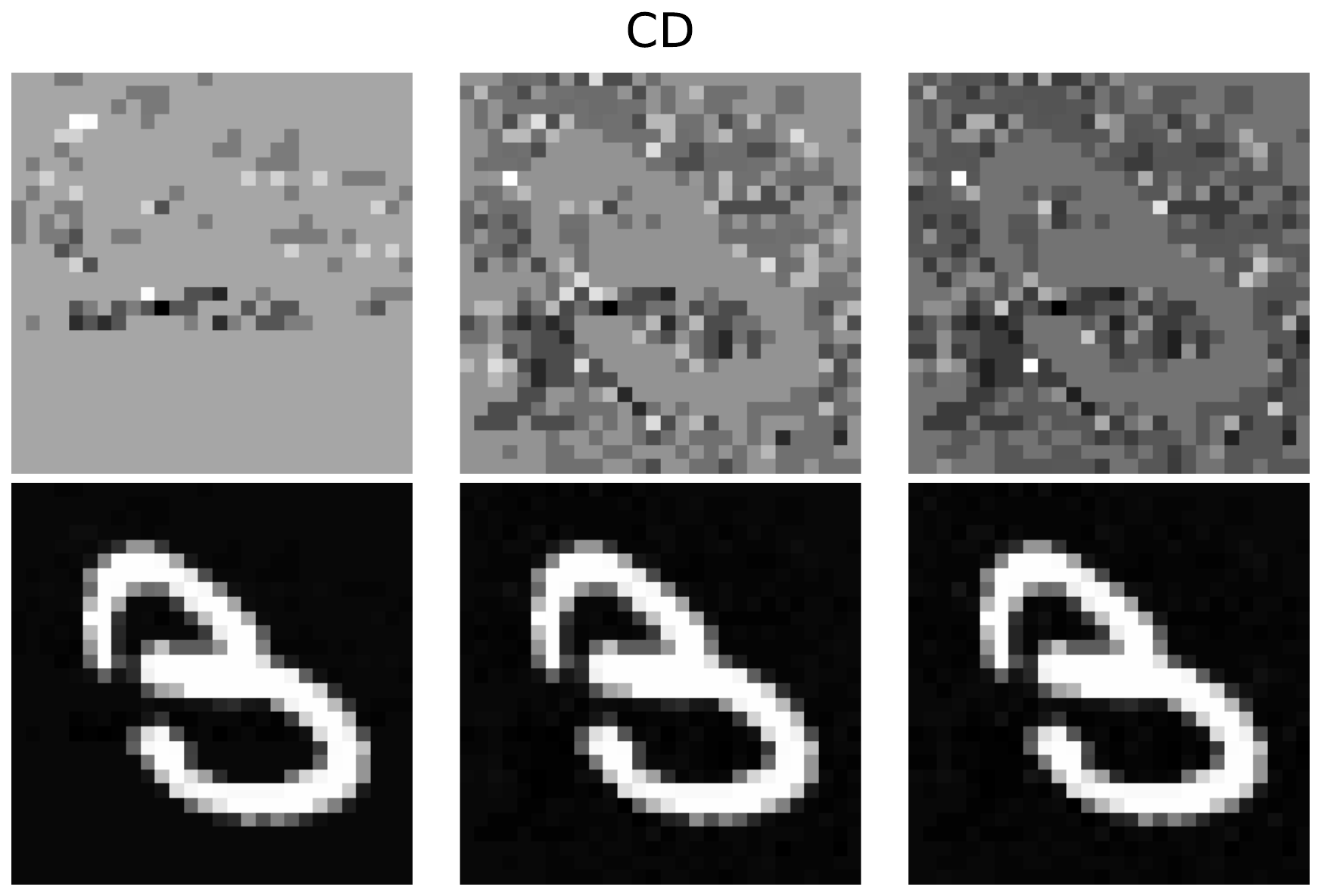}
        \subcaption{CD (Label=3, Predict=3)}\label{fig:adv-idx18-cd}
    \end{minipage}
    \hfill
    \begin{minipage}{0.32\textwidth}
        \centering
        \includegraphics[width=\textwidth]{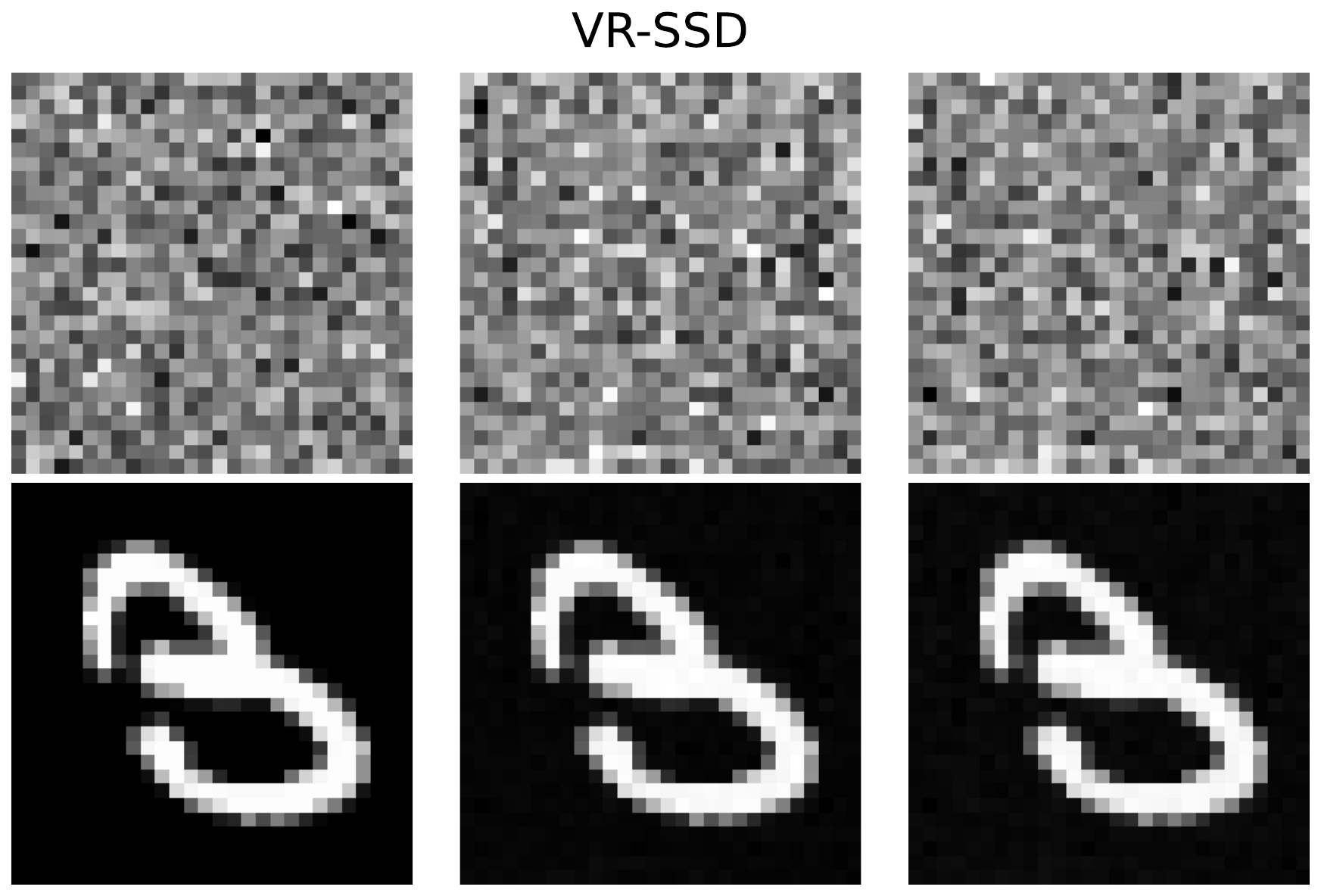}
        \subcaption{VR-SSD (Label=3, Predict=8)}\label{fig:adv-idx18-ssdsag}
    \end{minipage}
    \caption{Adversarial examples for MNIST sample \#8 (top two rows) and MNIST sample \#18 (bottom two rows) using different methods at $N = 2,\!000, 5,\!000,$ and $7,\!000$.}
    \label{fig:adversarial-idx8-idx18}
\end{figure}

Figure~\ref{fig:adversarial} illustrates the convergence of various zeroth-order methods on two test images. For the SSD methods (including the line search version), the parameters are set to $\ell = 50$ and $\alphaMax = 2.0$. Since BF-SSD uses 1,000 HF evaluations for knowledge distillation training, it begins at $N = 1,\!000$. The convergence results demonstrate that HF-SSD outperforms other methods in this task. Additionally, HF-SSD, BF-SSD, and SPSA exhibit clear advantages over other methods, underscoring the importance of tuning suitable step sizes for the optimization process.

In Figure~\ref{fig:adversarial-idx8-idx18}, we present the adversarially attacked test images generated by different optimization approaches for $N = 2,\!000$, $5,\!000$, and $7,\!000$. For the first test image (a-f), only HF-SSD, BF-SSD, and SPSA successfully flip the output of the HF model under limited HF evaluations ($N\leq 7,\!000$). Similarly, for the second test image (g-l), HF-SSD, BF-SSD, SPSA, and VR-SSD succeed in flipping the HF model output. However, in both cases, we observe that HF-SSD (Figure~\ref{fig:adv-idx8-hfssd} and Figure~\ref{fig:adv-idx18-hfssd}) and BF-SSD (Figure~\ref{fig:adv-idx8-bfssd} and Figure~\ref{fig:adv-idx18-bfssd}) tend to blur the images more than SPSA (Figure~\ref{fig:adv-idx8-spsa} and Figure~\ref{fig:adv-idx18-spsa}). This behavior may result from differences in sampling strategies, such as Haar measure sampling versus Hadamard sampling. From an adversarial attack perspective, a successful attack should flip the model's output without excessively blurring the image. In this regard, SPSA outperforms HF-SSD and BF-SSD, although its loss function remains higher than that of the other methods. The reason for this observation requires further investigation, but it is out of the scope of this work.

\subsubsection{Soft Prompting Black-box Language Model}
\label{ssec:tuning}

Fine-tuning pre-trained models like BERT or GPT has become a cornerstone of modern natural language processing (NLP). These models, trained on massive corpora, achieve state-of-the-art performance across a wide range of downstream tasks when adapted using task-specific fine-tuning. However, traditional fine-tuning involves updating millions or even billions of parameters, making it computationally expensive and prone to overfitting, especially in low-resource settings. To address these challenges, soft prompting has emerged as a lightweight and efficient alternative. Instead of modifying the model’s internal parameters, soft prompting introduces learnable embeddings (soft prompts) that are prepended to the input sequence, enabling task adaptation with minimal computational cost. This approach is particularly appealing for tasks requiring minimal intervention in the model's architecture while leveraging its pre-trained knowledge.

Despite the efficiency of soft prompting, its practical applicability faces challenges when dealing with black-box models where gradients with respect to the model parameters are inaccessible. For instance, many commercial APIs or proprietary models only provide access to predictions or loss values, making gradient-based optimization infeasible. In such scenarios, zeroth-order optimization becomes a crucial tool. Specifically, in this section, we consider a black-box, pre-trained language classifier $f_c:\mathbb{R}^{L_t \times 768} \to [0, 1]$, a pre-trained tokenizer $f_t:\textup{str} \to \mathbb{R}^{L_t \times 768}$, where \texttt{str} is any string of arbitrary length, and the sequence length $L_t$ is a positive integer up to 512 representing the length of the embedding. The goal  is to find a soft prompt $\bm{x}^\ast \in \mathbb{R}^{768}$ such that
\begin{equation}
\label{eq:soft-prompt}
\bm{x}^\ast = \argmin_{\bm{x}} \mathbb{E}_{(\bm{z}, y)}\big[\textup{CE}(f_c(\textup{cat}[\bm{x}, f_t(\bm{z})]), y)\big],
\end{equation}
where $\textup{CE}(\cdot,\cdot)$ is the cross-entropy loss function, and the dataset $(\bm{z}, y) \in \texttt{str} \times \{0, 1\}$. This particular formulation addresses a binary sentiment analysis task where a given string is classified as expressing a positive ($1$) or negative ($0$) sentiment. Since the classifier $f_c$ is pre-trained and treated as a black box, gradient information for the loss function is unavailable, necessitating the use of zeroth-order optimization to solve the problem.

For the pre-trained classifier and tokenizer, we employed the BERT model, a state-of-the-art transformer-based architecture trained on large corpora. Specifically, we focused on a simplified version of BERT, named \texttt{DistilBERT}. Sentiment analysis on the \texttt{aclImdb} dataset was used as a soft prompting task. This dataset comprises movie reviews categorized into positive and negative sentiments, forming a binary classification problem. A $D=768$ dimension soft prompt $\bm x$ is considered as the input. The transformer's parameters were kept frozen to focus optimization on the soft prompt, reducing the degrees of freedom and computational overhead. For computational convenience, we focus on small scale in this example.
The HF model, as described in Equation~\eqref{eq:soft-prompt}, was evaluated using 10 samples from \texttt{aclImdb} to approximate the expectation, while the LF model leveraged only 2 samples that were randomly selected from them. Consequently, the evaluation cost ratio between HF and LF was 5:1.

We set the initial starting point at the origin. We let $\ell=50$, $c=0.99$, and for the methods without line search we chose a fixed step size of $1\times 10^{-2}$. The $y$-axis in the following figure represents the average cross-entropy loss. Figure~\ref{fig:tuning} illustrates the performances of different competing methods with the BF-SSD demonstrating notable advantages.

\begin{figure}[ht!]
    \centering
    \includegraphics[width=0.8\textwidth]{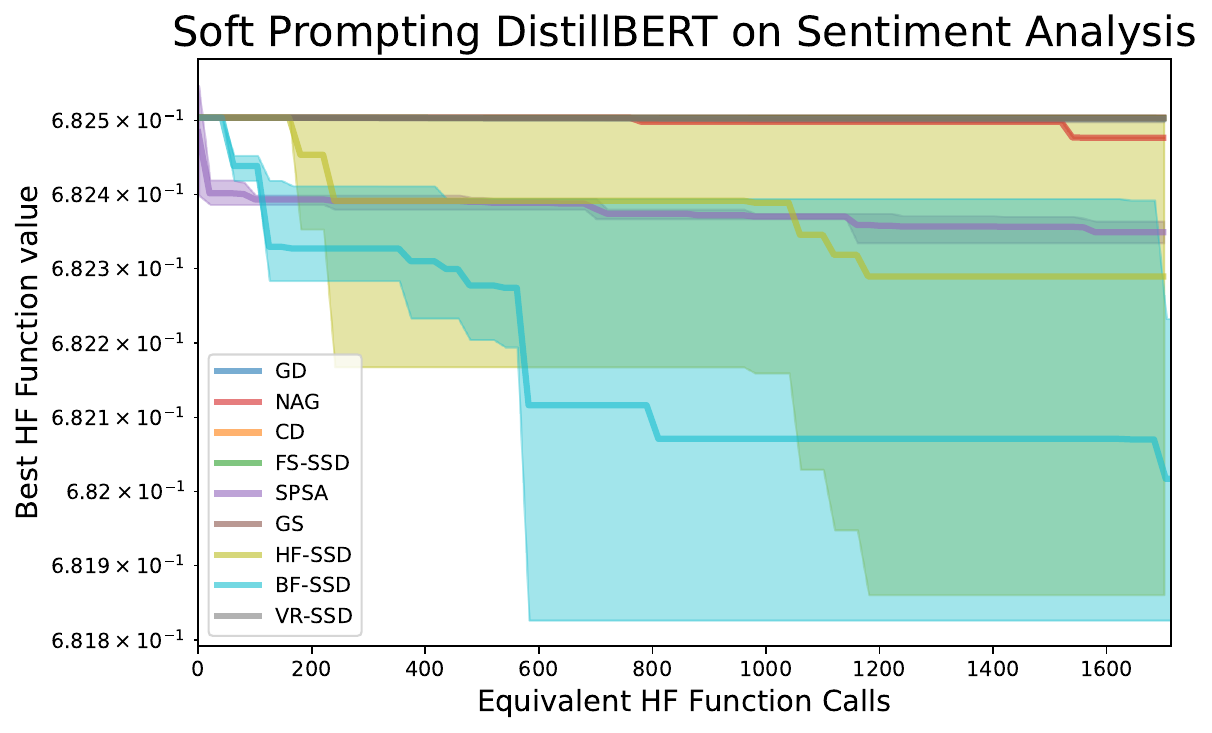}
    \caption{Average cross entropy loss for various zeroth-order optimizers. The BF-SSD method achieves competitive performance while requiring substantially fewer expensive HF function evaluations.}
    \label{fig:tuning}
\end{figure}

\section{Conclusion}
\label{sec:conclusion}

In this work, we proposed and analyzed a bi-fidelity line search scheme designed to accelerate the convergence of zeroth-order optimization algorithms. By constructing a surrogate model using both computationally expensive high-fidelity (HF) and inexpensive low-fidelity (LF) objective function evaluations, our approach enables an efficient backtracking line search on the surrogate to determine suitable step sizes. This method significantly reduces the reliance on costly HF evaluations, a common bottleneck in zeroth-order methods. We established theoretical convergence guarantees for this scheme under standard assumptions and subsequently integrated it into the stochastic subspace descent framework, yielding the Bi-Fidelity Stochastic Subspace Descent (BF-SSD) algorithm.

The practical efficacy of BF-SSD was demonstrated through comprehensive experiments across diverse applications, including a synthetic optimization benchmark, dual-form kernel ridge regression, black-box adversarial attacks, and the fine-tuning of transformer-based language models. Across these tasks, BF-SSD consistently outperformed relevant baseline methods, including standard gradient descent, coordinate descent, SPSA, and a purely high-fidelity SSD variant, particularly when comparing solution quality achieved for a given budget of HF evaluations.

These findings underscore the potential of leveraging bi-fidelity information within stochastic subspace methods to effectively address large-scale, high-dimensional optimization problems where function evaluations are expensive. While the current analysis focuses on deterministic subspace steps, future work could extend the theoretical guarantees to encompass the stochastic nature of gradient estimation within the SSD framework. Overall, BF-SSD presents a promising and computationally efficient optimization tool for a variety of challenging real-world applications.




\bibliography{references}

\begin{thebibliography}{54}
\providecommand{\natexlab}[1]{#1}
\providecommand{\url}[1]{\texttt{#1}}
\expandafter\ifx\csname urlstyle\endcsname\relax
  \providecommand{\doi}[1]{doi: #1}\else
  \providecommand{\doi}{doi: \begingroup \urlstyle{rm}\Url}\fi

\bibitem[Barzilai \& Borwein(1988)Barzilai and Borwein]{BarzilaiBorwein1988}
Jonathan Barzilai and Jonathan Borwein.
\newblock Two-point step size gradient methods.
\newblock \emph{IMA Journal of Numerical Analysis}, 8\penalty0 (1):\penalty0 141--148, 01 1988.

\bibitem[Brilli et~al.(2024)Brilli, Kimiaei, Liuzzi, and Lucidi]{brilli2024worst}
Andrea Brilli, Morteza Kimiaei, Giampaolo Liuzzi, and Stefano Lucidi.
\newblock Worst case complexity bounds for linesearch-type derivative-free algorithms.
\newblock \emph{Journal of Optimization Theory and Applications}, 203\penalty0 (1):\penalty0 419--454, 2024.

\bibitem[Carlini \& Wagner(2017)Carlini and Wagner]{carlini2017towards}
Nicholas Carlini and David Wagner.
\newblock Towards evaluating the robustness of neural networks.
\newblock In \emph{2017 IEEE Symposium on Security and Privacy}, pp.\  39--57, 2017.

\bibitem[Cartis \& Scheinberg(2018)Cartis and Scheinberg]{cartis2018global}
Coralia Cartis and Katya Scheinberg.
\newblock Global convergence rate analysis of unconstrained optimization methods based on probabilistic models.
\newblock \emph{Mathematical Programming}, 169:\penalty0 337--375, 2018.

\bibitem[Chen et~al.(2023)Chen, Zhang, Jia, Diffenderfer, Liu, Parasyris, Zhang, Zhang, Kailkhura, and Liu]{chen2023deepzero}
Aochuan Chen, Yimeng Zhang, Jinghan Jia, James Diffenderfer, Jiancheng Liu, Konstantinos Parasyris, Yihua Zhang, Zheng Zhang, Bhavya Kailkhura, and Sijia Liu.
\newblock Deepzero: Scaling up zeroth-order optimization for deep model training.
\newblock \emph{arXiv preprint arXiv:2310.02025}, 2023.

\bibitem[Chen et~al.(2017)Chen, Zhang, Sharma, Yi, and Hsieh]{chen2017zoo}
Pin-Yu Chen, Huan Zhang, Yash Sharma, Jinfeng Yi, and Cho-Jui Hsieh.
\newblock Zoo: Zeroth order optimization based black-box attacks to deep neural networks without training substitute models.
\newblock In \emph{Proceedings of the 10th ACM workshop on artificial intelligence and security}, pp.\  15--26, 2017.

\bibitem[Cheng(2025)]{cheng2025multi}
Nuojin Cheng.
\newblock \emph{Multi-Fidelity Uncertainty Quantification and Optimization}.
\newblock PhD thesis, University of Colorado at Boulder, 2025.

\bibitem[Cheng \& Doostan(2025)Cheng and Doostan]{cheng2025langevin}
Nuojin Cheng and Alireza Doostan.
\newblock Langevin bi-fidelity importance sampling for failure probability estimation.
\newblock \emph{arXiv preprint arXiv:2503.17796}, 2025.

\bibitem[Cheng et~al.(2024{\natexlab{a}})Cheng, Malik, De, Becker, and Doostan]{cheng2024bi}
Nuojin Cheng, Osman~Asif Malik, Subhayan De, Stephen Becker, and Alireza Doostan.
\newblock Bi-fidelity variational auto-encoder for uncertainty quantification.
\newblock \emph{Computer Methods in Applied Mechanics and Engineering}, 421:\penalty0 116793, 2024{\natexlab{a}}.

\bibitem[Cheng et~al.(2024{\natexlab{b}})Cheng, Malik, Xu, Becker, Doostan, and Narayan]{cheng2024subsampling}
Nuojin Cheng, Osman~Asif Malik, Yiming Xu, Stephen Becker, Alireza Doostan, and Akil Narayan.
\newblock Subsampling of parametric models with bifidelity boosting.
\newblock \emph{SIAM/ASA Journal on Uncertainty Quantification}, 12\penalty0 (2):\penalty0 213--241, 2024{\natexlab{b}}.

\bibitem[Conn et~al.(2000)Conn, Gould, and Toint]{conn2000trust}
Andrew~R. Conn, Nicholas I.~M. Gould, and Philippe~L. Toint.
\newblock \emph{Trust Region Methods}.
\newblock Society for Industrial and Applied Mathematics (SIAM), Philadelphia, PA, 2000.
\newblock \doi{10.1137/1.9780898719857}.

\bibitem[Cutler et~al.(2014)Cutler, Walsh, and How]{cutler2014reinforcement}
Mark Cutler, Thomas~J Walsh, and Jonathan~P How.
\newblock Reinforcement learning with multi-fidelity simulators.
\newblock In \emph{2014 IEEE International Conference on Robotics and Automation (ICRA)}, pp.\  3888--3895. IEEE, 2014.

\bibitem[De \& Doostan(2022)De and Doostan]{de2022neural}
Subhayan De and Alireza Doostan.
\newblock Neural network training using l1-regularization and bi-fidelity data.
\newblock \emph{Journal of Computational Physics}, 458:\penalty0 111010, 2022.

\bibitem[De et~al.(2020)De, Maute, and Doostan]{de2020bi}
Subhayan De, Kurt Maute, and Alireza Doostan.
\newblock Bi-fidelity stochastic gradient descent for structural optimization under uncertainty.
\newblock \emph{Computational Mechanics}, 66:\penalty0 745--771, 2020.

\bibitem[De et~al.(2023)De, Reynolds, Hassanaly, King, and Doostan]{de2023bi}
Subhayan De, Matthew Reynolds, Malik Hassanaly, Ryan~N King, and Alireza Doostan.
\newblock Bi-fidelity modeling of uncertain and partially unknown systems using deeponets.
\newblock \emph{Computational Mechanics}, 71\penalty0 (6):\penalty0 1251--1267, 2023.

\bibitem[de~Montbrun \& Gerchinovitz(2024)de~Montbrun and Gerchinovitz]{demontbrun2024certified}
\'Etienne de~Montbrun and Sébastien Gerchinovitz.
\newblock Certified multifidelity zeroth-order optimization.
\newblock \emph{SIAM/ASA Journal on Uncertainty Quantification}, 12\penalty0 (4):\penalty0 1135--1164, December 2024.
\newblock \doi{10.1137/22M1178832}.

\bibitem[Duchi et~al.(2011)Duchi, Hazan, and Singer]{duchi2011adaptive}
John Duchi, Elad Hazan, and Yoram Singer.
\newblock Adaptive subgradient methods for online learning and stochastic optimization.
\newblock \emph{Journal of machine learning research}, 12\penalty0 (7), 2011.

\bibitem[Goodfellow et~al.(2014)Goodfellow, Shlens, and Szegedy]{goodfellow2014explaining}
Ian~J. Goodfellow, Jonathon Shlens, and Christian Szegedy.
\newblock Explaining and harnessing adversarial examples.
\newblock \emph{arXiv preprint arXiv:1412.6572}, 2014.

\bibitem[Gorodetsky et~al.(2020)Gorodetsky, Geraci, Eldred, and Jakeman]{gorodetsky2020generalized}
Alex~A. Gorodetsky, Gianluca Geraci, Michael~S. Eldred, and John~D Jakeman.
\newblock A generalized approximate control variate framework for multifidelity uncertainty quantification.
\newblock \emph{Journal of Computational Physics}, 408:\penalty0 109257, 2020.

\bibitem[Grundvig(2023)]{grundvig2023line}
Dane Grundvig.
\newblock Line search based optimization using function approximations with tunable accuracy.
\newblock Master's thesis, Rice University, 2023.

\bibitem[Hinton et~al.(2014)Hinton, Vinyals, and Dean]{hinton2015distilling}
Geoffrey Hinton, Oriol Vinyals, and Jeff Dean.
\newblock Distilling the knowledge in a neural network.
\newblock In \emph{NIPS 2014 Deep Learning Workshop}, 2014.
\newblock arXiv preprint arXiv:1503.02531.

\bibitem[Johnson \& Zhang(2013)Johnson and Zhang]{SVRG2013}
Rie Johnson and Tong Zhang.
\newblock Accelerating stochastic gradient descent using predictive variance reduction.
\newblock \emph{Advances in Neural Information Processing Systems}, 26, 2013.

\bibitem[Kandasamy et~al.(2016)Kandasamy, Dasarathy, Oliva, Schneider, and P{\'o}czos]{kandasamy2016gaussian}
Kirthevasan Kandasamy, Gautam Dasarathy, Junier~B Oliva, Jeff Schneider, and Barnab{\'a}s P{\'o}czos.
\newblock Gaussian process bandit optimisation with multi-fidelity evaluations.
\newblock \emph{Advances in Neural Information Processing Systems}, 29, 2016.

\bibitem[Karimi et~al.(2016)Karimi, Nutini, and Schmidt]{karimi2016linear_PL}
Hamed Karimi, Julie Nutini, and Mark Schmidt.
\newblock Linear convergence of gradient and proximal-gradient methods under the {Polyak-Lojasiewicz} condition.
\newblock In \emph{Joint European conference on machine learning and knowledge discovery in databases}, pp.\  795--811. Springer, 2016.

\bibitem[Kolda et~al.(2003)Kolda, Lewis, and Torczon]{kolda2003direct}
Tamara~G. Kolda, Robert~Michael Lewis, and Virginia Torczon.
\newblock Optimization by direct search: New perspectives on some classical and modern methods.
\newblock \emph{SIAM Review}, 45\penalty0 (3):\penalty0 385--482, 2003.
\newblock \doi{10.1137/S003614450242889}.

\bibitem[Kozak et~al.(2019)Kozak, Becker, Doostan, and Tenorio]{kozak2019stochastic}
David Kozak, Stephen Becker, Alireza Doostan, and Luis Tenorio.
\newblock Stochastic subspace descent.
\newblock \emph{arXiv preprint arXiv:1904.01145}, 2019.

\bibitem[Kozak et~al.(2021)Kozak, Becker, Doostan, and Tenorio]{kozak2021stochastic}
David Kozak, Stephen Becker, Alireza Doostan, and Luis Tenorio.
\newblock A stochastic subspace approach to gradient-free optimization in high dimensions.
\newblock \emph{Computational Optimization and Applications}, 79\penalty0 (2):\penalty0 339--368, 2021.

\bibitem[Kozak et~al.(2023)Kozak, Molinari, Rosasco, Tenorio, and Villa]{kozak2023zeroth}
David Kozak, Cesare Molinari, Lorenzo Rosasco, Luis Tenorio, and Silvia Villa.
\newblock Zeroth-order optimization with orthogonal random directions.
\newblock \emph{Mathematical Programming}, 199\penalty0 (1):\penalty0 1179--1219, 2023.

\bibitem[Liu et~al.(2020)Liu, Chen, Kailkhura, Zhang, Hero~III, and Varshney]{liu2020primer}
Sijia Liu, Pin-Yu Chen, Bhavya Kailkhura, Gaoyuan Zhang, Alfred~O Hero~III, and Pramod~K Varshney.
\newblock A primer on zeroth-order optimization in signal processing and machine learning: Principals, recent advances, and applications.
\newblock \emph{IEEE Signal Processing Magazine}, 37\penalty0 (5):\penalty0 43--54, 2020.

\bibitem[Liu et~al.(2024)Liu, Zhu, Gong, Cheng, Hsieh, and You]{liu2024sparse}
Yong Liu, Zirui Zhu, Chaoyu Gong, Minhao Cheng, Cho-Jui Hsieh, and Yang You.
\newblock Sparse {MeZO}: Less parameters for better performance in zeroth-order {LLM} fine-tuning.
\newblock \emph{arXiv preprint arXiv:2402.15751}, 2024.

\bibitem[Mahsereci \& Hennig(2017)Mahsereci and Hennig]{Mahsereci2017b}
Maren Mahsereci and Philipp Hennig.
\newblock Probabilistic line searches for stochastic optimization.
\newblock \emph{Journal of Machine Learning Research}, 18\penalty0 (119):\penalty0 1--59, 2017.

\bibitem[Malladi et~al.(2023)Malladi, Gao, Nichani, Damian, Lee, Chen, and Arora]{malladi2023fine}
Sadhika Malladi, Tianyu Gao, Eshaan Nichani, Alex Damian, Jason~D Lee, Danqi Chen, and Sanjeev Arora.
\newblock Fine-tuning language models with just forward passes.
\newblock \emph{Advances in Neural Information Processing Systems}, 36:\penalty0 53038--53075, 2023.

\bibitem[Nesterov()]{nesterov269method}
Y~Nesterov.
\newblock A method of solving a convex programming problem with convergence rate $o(1/k^2)$.
\newblock \emph{Proceedings of the USSR Academy of Sciences}, 269:\penalty0 3.

\bibitem[Nesterov(2013)]{nesterov2013introductory}
Yurii Nesterov.
\newblock \emph{Introductory lectures on convex optimization: A basic course}, volume~87.
\newblock Springer Science \& Business Media, 2013.

\bibitem[Nesterov \& Spokoiny(2017)Nesterov and Spokoiny]{nesterov2017random}
Yurii Nesterov and Vladimir Spokoiny.
\newblock Random gradient-free minimization of convex functions.
\newblock \emph{Foundations of Computational Mathematics}, 17\penalty0 (2):\penalty0 527--566, 2017.

\bibitem[Ng \& Willcox(2014)Ng and Willcox]{ng2014multifidelity}
Leo W.~T. Ng and Karen~E. Willcox.
\newblock Multifidelity approaches for optimization under uncertainty.
\newblock \emph{International Journal for numerical methods in Engineering}, 100\penalty0 (10):\penalty0 746--772, 2014.

\bibitem[Nguyen et~al.(2025)Nguyen, Scheinberg, and Tran]{Nguyen_Scheinberg_2025stochasticStepSize}
Lam~M. Nguyen, Katya Scheinberg, and Trang~H. Tran.
\newblock Stochastic {ISTA/FISTA} adaptive step search algorithms for convex composite optimization.
\newblock \emph{Journal of Optimization Theory and Applications}, 205\penalty0 (1):\penalty0 10, 2025.

\bibitem[Nocedal \& Wright(1999)Nocedal and Wright]{nocedal1999numerical}
Jorge Nocedal and Stephen~J Wright.
\newblock \emph{Numerical optimization}.
\newblock Springer, 1999.

\bibitem[Paquette \& Scheinberg(2020)Paquette and Scheinberg]{paquette2020stochastic}
Courtney Paquette and Katya Scheinberg.
\newblock A stochastic line search method with expected complexity analysis.
\newblock \emph{SIAM Journal on Optimization}, 30\penalty0 (1):\penalty0 349--376, 2020.
\newblock \doi{10.1137/18M1216792}.

\bibitem[Pedregosa et~al.(2011)Pedregosa, Varoquaux, Gramfort, Michel, Thirion, Grisel, Blondel, Prettenhofer, Weiss, Dubourg, et~al.]{pedregosa2011scikit}
Fabian Pedregosa, Ga{\"e}l Varoquaux, Alexandre Gramfort, Vincent Michel, Bertrand Thirion, Olivier Grisel, Mathieu Blondel, Peter Prettenhofer, Ron Weiss, Vincent Dubourg, et~al.
\newblock Scikit-learn: Machine learning in python.
\newblock \emph{Journal of machine learning research}, 12\penalty0 (Oct):\penalty0 2825--2830, 2011.

\bibitem[Peherstorfer et~al.(2018)Peherstorfer, Willcox, and Gunzburger]{peherstorfer2018survey}
Benjamin Peherstorfer, Karen Willcox, and Max Gunzburger.
\newblock Survey of multifidelity methods in uncertainty propagation, inference, and optimization.
\newblock \emph{SIAM Review}, 60\penalty0 (3):\penalty0 550--591, 2018.

\bibitem[Polyak(1987)]{polyak1987introduction}
Boris~T. Polyak.
\newblock \emph{Introduction to optimization}.
\newblock New York, Optimization Software,, 1987.

\bibitem[Shahriari et~al.(2015)Shahriari, Swersky, Wang, Adams, and Freitas]{shahriari2015bayesian}
Bobak Shahriari, Kevin Swersky, Ziyu Wang, Ryan~P. Adams, and Nando~De Freitas.
\newblock Taking the human out of the loop: A review of {Bayesian} optimization.
\newblock \emph{Proceedings of the IEEE}, 104\penalty0 (1):\penalty0 148--175, January 2015.
\newblock \doi{10.1109/JPROC.2015.2494218}.

\bibitem[Spall(1992)]{spall1992multivariate}
James~C. Spall.
\newblock Multivariate stochastic approximation using a simultaneous perturbation gradient approximation.
\newblock \emph{IEEE transactions on automatic control}, 37\penalty0 (3):\penalty0 332--341, 1992.

\bibitem[Spall(1998)]{spall1998implementation}
James~C. Spall.
\newblock Implementation of the simultaneous perturbation algorithm for stochastic optimization.
\newblock \emph{IEEE Transactions on aerospace and electronic systems}, 34\penalty0 (3):\penalty0 817--823, 1998.

\bibitem[Srinivas \& Patnaik(1994)Srinivas and Patnaik]{srinivas1994genetic}
Mandavilli Srinivas and Lalit~M. Patnaik.
\newblock Genetic algorithms: A survey.
\newblock \emph{Computer}, 27\penalty0 (6):\penalty0 17--26, June 1994.
\newblock \doi{10.1109/2.294849}.

\bibitem[Sun et~al.(2022{\natexlab{a}})Sun, He, Qian, Zhou, Huang, and Qiu]{sun2022bbtv2}
Tianxiang Sun, Zhengfu He, Hong Qian, Yunhua Zhou, Xuan-Jing Huang, and Xipeng Qiu.
\newblock {BBTv2}: Towards a gradient-free future with large language models.
\newblock In \emph{Proceedings of the 2022 Conference on Empirical Methods in Natural Language Processing}, pp.\  3916--3930, 2022{\natexlab{a}}.

\bibitem[Sun et~al.(2022{\natexlab{b}})Sun, Shao, Qian, Huang, and Qiu]{sun2022black}
Tianxiang Sun, Yunfan Shao, Hong Qian, Xuanjing Huang, and Xipeng Qiu.
\newblock Black-box tuning for language-model-as-a-service.
\newblock In \emph{International Conference on Machine Learning}, pp.\  20841--20855. PMLR, 2022{\natexlab{b}}.

\bibitem[Takeno et~al.(2020)Takeno, Fukuoka, Tsukada, Koyama, Shiga, Takeuchi, and Karasuyama]{takeno2020multi}
Shion Takeno, Hitoshi Fukuoka, Yuhki Tsukada, Toshiyuki Koyama, Motoki Shiga, Ichiro Takeuchi, and Masayuki Karasuyama.
\newblock Multi-fidelity {Bayesian} optimization with max-value entropy search and its parallelization.
\newblock In \emph{International Conference on Machine Learning}, pp.\  9334--9345. PMLR, 2020.

\bibitem[Wu et~al.(2020)Wu, Toscano-Palmerin, Frazier, and Wilson]{wu2020practical}
Jian Wu, Saul Toscano-Palmerin, Peter~I. Frazier, and Andrew~Gordon Wilson.
\newblock Practical multi-fidelity {Bayesian} optimization for hyperparameter tuning.
\newblock In \emph{Uncertainty in Artificial Intelligence}, pp.\  788--798. PMLR, 2020.

\bibitem[Yu et~al.(2024)Yu, Zhou, Wang, Li, and Huang]{yu2024subzero}
Ziming Yu, Pan Zhou, Sike Wang, Jia Li, and Hua Huang.
\newblock Subzero: Random subspace zeroth-order optimization for memory-efficient llm fine-tuning.
\newblock \emph{arXiv preprint arXiv:2410.08989}, 2024.

\bibitem[Yue \& Meerbergen(2013)Yue and Meerbergen]{yue2013accelerating}
Yao Yue and Karl Meerbergen.
\newblock Accelerating optimization of parametric linear systems by model order reduction.
\newblock \emph{SIAM Journal on Optimization}, 23\penalty0 (2):\penalty0 1344--1370, 2013.

\bibitem[Zhang et~al.(2021)Zhang, Xie, Ji, Zhu, and Zheng]{zhang2021multi}
Xinshuai Zhang, Fangfang Xie, Tingwei Ji, Zaoxu Zhu, and Yao Zheng.
\newblock Multi-fidelity deep neural network surrogate model for aerodynamic shape optimization.
\newblock \emph{Computer Methods in Applied Mechanics and Engineering}, 373:\penalty0 113485, 2021.

\bibitem[Zhang et~al.(2024)Zhang, Li, Hong, Li, Zhang, Zheng, Chen, Lee, Yin, Hong, Wang, Liu, and Chen]{zhang2024revisiting}
Yihua Zhang, Pingzhi Li, Junyuan Hong, Jiaxiang Li, Yimeng Zhang, Wenqing Zheng, Pin-Yu Chen, Jason~D. Lee, Wotao Yin, Mingyi Hong, Zhangyang Wang, Sijia Liu, and Tianlong Chen.
\newblock Revisiting zeroth-order optimization for memory-efficient {LLM} fine-tuning: A benchmark, 2024.

\end{thebibliography}
\bibliographystyle{tmlr}


\newpage
\appendix
\section{Proof of Lemma~\ref{lm:surrogate-bound}}
\label{apdx:lmm-pf}
\begin{proof}
We let $n_k$ evaluations positioned at equispaced points between $\bm x_k$ and $\bm x_k + \alphaMax\bm v_k$, each sub-interval has length $\alphaMax/n_k$. We also define $\psi_k(\alpha)\coloneqq \varphi(\alpha) - f^\LF(\bm x + \alpha\bm v_k)$. For each sub-interval, we define the surrogate $\interceptSurrogate(\alpha; n_k)$ as a linear function connecting these values.

WLOG, we prove the bound in Equation~\eqref{eq:surrogate-bound} holds in the interval $\alpha\in[0, h]$ with $h=\alphaMax/n_k$ and this result can be extended to other sub-intervals. The surrogate $\psi(\alpha)$ is defined as
\begin{equation}
\interceptSurrogate(\alpha;n_k) \coloneqq \frac{h-\alpha}{h}\psi(0) + \frac{\alpha}{h}\psi(h),\quad \alpha\in[0,h],
\end{equation}
and similar definitions of $\interceptSurrogate(\alpha;n_k)$ hold when $\alpha$ in other sub-intervals. Such linear approximation $\interceptSurrogate(\alpha)$ satisfies
\begin{equation}
\label{eq:linear-inter}
\lvert \varphi(\alpha) - \HFSurrogate(\alpha; n_k)\rvert = 
\vert \psi(\alpha) - \interceptSurrogate(\alpha;n_k)\vert = \bigg\vert \frac{h-\alpha}{h}\left(\psi(0)-\psi(\alpha)\right) + \frac{\alpha}{h}\left(\psi(h)-\psi(\alpha)\right)\bigg\vert,
\end{equation}
for any $\alpha\in[0,h]$. Since the Lipschitz constant of $\psi(\alpha)$ is strictly controlled by $W$ and the fact that $\bm v_k$ is a unit vector, Equation~\eqref{eq:linear-inter} satisfies
\begin{equation}
\lvert \varphi(\alpha) - \HFSurrogate(\alpha; n_k)\rvert \leq W\frac{\alphaMax/n_k-\alpha}{\alphaMax/n_k}(\alpha-0) + W\frac{\alpha}{\alphaMax/n_k}(\alphaMax/n_k-\alpha) \leq \frac{W \alphaMax}{2n_k},\quad\forall \alpha\in[0,h].
\end{equation}
Since we have 
\begin{equation}
    n_k \geq \frac{W\Lip(1+c)\alphaMax}{c\beta\lVert\bm v_k\rVert^2}, 
\end{equation}
the sup-norm is bounded as 
\begin{equation}
    \lvert \varphi(\alpha) - \HFSurrogate(\alpha; n_k)\rvert \leq \frac{c\beta\lVert\bm v_k\rVert^2}{2(1+c)\Lip} = \frac{\lVert \bm v_k\rVert^2}{2} \min\left\{\frac{c}{(1+c)^2\Lip}, \frac{c\beta}{(1+c)\Lip}, \beta \alphaMax\right\},
\end{equation}
where the last equality stems from the fact that $\beta\leq 1/2$ and $\alphaMax \geq c/(c\Lip + \Lip)$.
\end{proof}

\section{Single-fidelity SSD with Line Search}
\label{apdx:sf-ssd-ls}

\subsection{Assuming Strong-convexity}
\begin{assumption}
\label{asp:ssd}
Assume the objective function $f^\HF$ and algorithm satisfies the following conditions
\begin{enumerate}
    \item $\bm P_k\in\R^{D\times\ell}$ are independent random matrices such that $\mathbb{E}[\bm P_k\bm P_k^T] = \bm I_d$ and $\bm P_k^T\bm P_k = (D/\ell)\bm I_\ell$ with $D>\ell$;
    \item Objective function $f^\HF:\R^D\to\R$ attains its minimum $f^*$ and $\nabla f^\HF$ is $\Lip$-Lipschitz continuous;
    \item Objective function $f^\HF:\R^D\to\R$ is $\gamma$-strongly convex; note $\gamma \le \Lip$.
\end{enumerate}
\end{assumption}
\begin{theorem}
\label{thm:sf-sc}
(Single fidelity) With the assumptions of \ref{asp:ssd},  SSD with line search (either exact line search or backtracking) converges in the sense that $f(\bm x_k)\overset{a.s.}{\to}f^*$ and $f(\bm x_k)\overset{L^1}{\to}f^*$.

In particular,
\begin{equation}
\mathbb{E}[f(\bm x_{k+1})] - f^* \leq  \omega^{k+1}(f(\bm{x}_0) - f^*),
\end{equation}
for $\omega\in(0,1)$ where
\begin{align*}
\omega = \begin{cases} 1-\frac{\gamma\ell}{D\Lip} & \text{exact line search} \\
1-\min\left\{2\gamma\beta \alphaMax,\frac{2\ell c\gamma\beta}{D\Lip}\right\} & \text{backtracking}.
\end{cases}
\end{align*}

\end{theorem}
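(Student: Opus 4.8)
The plan is to reduce the entire statement to a single per-step conditional contraction of the form $\mathbb{E}\!\left[f(\bm x_{k+1}) - f^* \mid \bm x_k\right] \le \omega\,(f(\bm x_k) - f^*)$, with the two values of $\omega$ corresponding to the two line-search rules, and then to recover the geometric bound, the $L^1$ convergence, and the almost-sure convergence as corollaries. Writing $\bm v_k = \bm P_k\bm P_k^\top \nabla f(\bm x_k)$ and abbreviating $\bm g := \nabla f(\bm x_k)$, the first step is to record three identities that follow directly from the two properties of $\bm P_k$ in Assumption~\ref{asp:ssd}: the pointwise relation $\langle \bm g, \bm v_k\rangle = \|\bm P_k^\top \bm g\|^2 = (\ell/D)\|\bm v_k\|^2$ (using $\bm P_k^\top\bm P_k = (D/\ell)\bm I_\ell$), together with the expectation identities $\mathbb{E}[\langle \bm g, \bm v_k\rangle] = \|\bm g\|^2$ and $\mathbb{E}[\|\bm v_k\|^2] = (D/\ell)\|\bm g\|^2$ (using $\mathbb{E}[\bm P_k\bm P_k^\top] = \bm I_D$). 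The second ingredient is the descent lemma for $\Lip$-smooth functions, $\varphi(\alpha) = f(\bm x_k - \alpha\bm v_k) \le f(\bm x_k) - \alpha\langle \bm g, \bm v_k\rangle + \tfrac{\alpha^2\Lip}{2}\|\bm v_k\|^2$, exactly as used in the proof of Theorem~\ref{thm:convergence}.

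For the exact line search I would bound $f(\bm x_{k+1}) = \min_\alpha \varphi(\alpha)$ above by the minimum over $\alpha$ of the quadratic right-hand side of the descent lemma; this minimum is attained at $\alpha^\star = \langle \bm g,\bm v_k\rangle/(\Lip\|\bm v_k\|^2)$ and equals $f(\bm x_k) - \langle\bm g,\bm v_k\rangle^2/(2\Lip\|\bm v_k\|^2)$. The pointwise identity $\langle\bm g,\bm v_k\rangle = (\ell/D)\|\bm v_k\|^2$ collapses the ratio $\langle\bm g,\bm v_k\rangle^2/\|\bm v_k\|^2$ to $(\ell/D)\langle\bm g,\bm v_k\rangle$, so that $f(\bm x_k) - f(\bm x_{k+1}) \ge \tfrac{\ell}{2D\Lip}\langle\bm g,\bm v_k\rangle$ pointwise. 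Taking the conditional expectation over $\bm P_k$ and using $\mathbb{E}[\langle\bm g,\bm v_k\rangle]=\|\bm g\|^2$ yields an expected decrease of $\tfrac{\ell}{2D\Lip}\|\bm g\|^2$, and strong convexity supplies the Polyak--Lojasiewicz inequality $\|\bm g\|^2 \ge 2\gamma(f(\bm x_k)-f^*)$, which turns the gradient decrease into the contraction with $\omega = 1 - \gamma\ell/(D\Lip)$.

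For the backtracking case the key is a \emph{deterministic} lower bound on the accepted step. Substituting the pointwise identity into the descent lemma shows that the Armijo test $\varphi(\alpha) \le f(\bm x_k) - \beta\alpha\|\bm v_k\|^2$ is guaranteed whenever $\alpha \le \underline{\alpha} := 2(\ell/D - \beta)/\Lip$, and crucially this threshold does not depend on the realization of $\bm P_k$ (the $\|\bm v_k\|^2$ factors cancel). Hence the backtracking either accepts $\alphaMax$ outright or halts one multiplication past the threshold, giving $\alpha_k \ge \min\{\alphaMax,\, c\,\underline{\alpha}\}$ deterministically. Combining this lower bound with the accepted Armijo inequality, $f(\bm x_k) - f(\bm x_{k+1}) \ge \beta\alpha_k\|\bm v_k\|^2 \ge \beta\min\{\alphaMax,\, c\,\underline{\alpha}\}\|\bm v_k\|^2$, taking the conditional expectation so that the deterministic factor pulls out of $\mathbb{E}[\|\bm v_k\|^2]=(D/\ell)\|\bm g\|^2$, and applying the Polyak--Lojasiewicz inequality produces the two-branch value of $\omega$, whose two entries of the $\min$ correspond to the ``immediate acceptance'' and ``backtracked'' branches.

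To pass from the one-step contraction to the theorem I would iterate via the tower property to obtain $\mathbb{E}[f(\bm x_{k+1})] - f^* \le \omega^{k+1}(f(\bm x_0) - f^*)$ with $\omega\in(0,1)$ (positivity uses $\gamma\le\Lip$, assumed, and $\beta$ small enough that $\underline{\alpha}>0$, e.g.\ $\beta\le\ell/(2D)$). Since $f(\bm x_k)-f^*\ge 0$, this bound is precisely $L^1$ convergence. For almost-sure convergence I would use that both line-search rules are genuine descent methods, so $f(\bm x_k)$ is monotone non-increasing along every sample path and bounded below by $f^*$, hence converges almost surely to some limit $f_\infty\ge f^*$; monotone convergence applied to the $L^1$ bound forces $\mathbb{E}[f_\infty - f^*]=0$, so $f_\infty = f^*$ almost surely. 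I expect the main obstacle to lie in the backtracking branch: one must verify that the acceptance threshold $\underline{\alpha}$ is genuinely deterministic (this hinges on the exact cancellation in $\langle\bm g,\bm v_k\rangle=(\ell/D)\|\bm v_k\|^2$, which is special to the Haar-type projection), and then carefully separate the randomness of $\alpha_k$ from that of $\|\bm v_k\|^2$ before taking expectations, since a naive argument that treats $\alpha_k$ as independent of $\bm P_k$ would be invalid.
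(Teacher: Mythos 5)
Your proposal is correct and follows essentially the same route as the paper's proof: the descent lemma combined with the projection identities $\langle \bm g,\bm v_k\rangle=(\ell/D)\lVert\bm v_k\rVert^2$ and $\mathbb{E}[\bm P_k\bm P_k^\top]=\bm I_D$, a deterministic lower bound on the accepted backtracking step, the Polyak--Lojasiewicz inequality from strong convexity, and iteration via the tower property. The only differences are cosmetic: your acceptance threshold $\underline{\alpha}=2(\ell/D-\beta)/\Lip$ and your retention of the $D/\ell$ factor in $\mathbb{E}[\lVert\bm v_k\rVert^2]$ give a marginally sharper constant than the paper's (which suffices for the stated $\omega$), and you spell out the almost-sure convergence argument that the paper leaves implicit.
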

\begin{proof}
Define the filtration $\mathcal{F}_k\coloneqq \sigma(\bm P_1,\dots,\bm P_{k-1})$ and $\mathcal{F}_1=\{\emptyset, \Omega_P\}$, with $\Omega_P$ as the sample space. By Lipschitz continuity, we have
\begin{equation}
\label{eq:lip}
    f(\bm x_{k+1})\leq f(\bm x_k) + \nabla f(\bm x_k)^T(\bm x_{k+1} - \bm x_k) + \frac{\Lip}{2}\lVert\bm x_{k+1} - \bm x_k\rVert^2.
\end{equation}
By defining $f_e(\bm x) \coloneqq f(\bm x) - f^*$ and plugging $\bm x_{k+1} = \bm x_k - \alpha_k\bm P_k\bm P_k^T\nabla f(\bm x_k)$, Equation~\eqref{eq:lip} yields
\begin{equation}
\label{eqn:quad-form}
\begin{aligned}
f_e(\bm x_{k+1}) - f_e(\bm x_k) 
&\leq -\alpha_k\langle \nabla f(\bm x_k), \bm P_k\bm P_k^T\nabla f(\bm x_k)\rangle + \frac{\alpha_k^2\Lip}{2}\langle \bm P_k\bm P_k^T\nabla f(\bm x_k), \bm P_k\bm P_k^T\nabla f(\bm x_k)\rangle \\
&= -\alpha_k\langle \nabla f(\bm x_k), \bm P_k\bm P_k^T\nabla f(\bm x_k)\rangle + \frac{D\alpha_k^2\Lip}{2\ell}\langle \nabla f(\bm x_k), \bm P_k\bm P_k^T\nabla f(\bm x_k)\rangle \\
&= \left(-\alpha_k+\frac{D\alpha_k^2\Lip}{2\ell}\right)\langle \nabla f(\bm x_k), \bm P_k\bm P_k^T\nabla f(\bm x_k)\rangle,
\end{aligned}
\end{equation}
where the fact $\bm P_k\bm P_k^T\bm P_k\bm P_k^T = (D/\ell)\bm P_k\bm P_k^T$ is applied. We have two line search approaches to determine the step size $\alpha_k$,
\begin{enumerate}
    \item Exact line search: 
    \begin{equation}
    \label{eq:exact-line-sear}
    \alpha_k = \argmin_\alpha f(\bm x_k - \alpha\bm P_k\bm P_k^T\nabla f(\bm x_k));
    \end{equation}
    \item Backtracking: for some fixed $\alphaMax>0$, $\beta\in(0,\ell/2d)$, and $c\in(0,1)$, 
    \begin{equation}
    \label{eq:backtracking}
    \begin{aligned}
    \alpha_k &= \max_{m\in\mathbb{N}} c^m\alphaMax\\
    \text{s.t.}\quad
    f(\bm x_k - c^m\alphaMax\bm P_k\bm P_k^T\nabla f(\bm x_k))&\leq f(\bm x^k) - \beta c^m \alphaMax\lVert \bm P_k\bm P_k^T\nabla f(\bm x^k)\rVert^2.
    \end{aligned}
    \end{equation}
\end{enumerate}
We will prove the convergence for two line search methods separately. All the following analyses hold for any $\bm P_k$ satisfying Assumption~\ref{asp:ssd}.

\paragraph{Exact line search}
According to Equation~\eqref{eq:exact-line-sear}, the exact line search method can find the optimal $\alpha_k$ such that the quadratic term in Equation~\eqref{eqn:quad-form} yields $-\alpha_k+D\alpha_k^2\Lip/2\ell \leq -\ell/(2d\Lip)$ for any $\bm P_k$, thereby
\begin{equation}
f_e(\bm x_{k+1}) - f_e(\bm x_k) \leq -\frac{\ell}{2d\Lip}\langle \nabla f(\bm x_k), \bm P_k\bm P_k^T\nabla f(\bm x_k)\rangle \quad \forall \bm P_k.
\end{equation}
With condition on the current filtration $\mathcal{F}_k$, the conditional expectation on both sides turn to
\begin{equation}
\label{eqn:exp-pf}
\begin{aligned}
\mathbb{E}[f_e(\bm x_{k+1})\vert\mathcal{F}_k]
&\leq -\frac{\ell}{2d\Lip}\mathbb{E}\left[\langle\nabla f(\bm x_k),\bm P_k\bm P_k^T\nabla f(\bm x_k)\rangle\vert\mathcal{F}_k\right] + f_e(\bm x_k)\\
&= -\frac{\ell}{2d\Lip}\lVert \nabla f(\bm x_k)\rVert^2 + f_e(\bm x_k),
\end{aligned}
\end{equation}
where the equality is from the fact $\mathbb{E}[\bm P_k\bm P_k^T\vert\mathcal{F}_k]=\bm I_d$. By invoking the Polyak-Lojasiewicz inequality,
\begin{equation}
\label{eqn:pl}
\begin{aligned}
\mathbb{E}[f_e(\bm x_{k+1})\vert\mathcal{F}_k] 
\leq -\frac{\gamma\ell}{D\Lip}f_e(\bm x_k) + f_e(\bm x_k) 
= \left(1-\frac{\gamma\ell}{D\Lip}\right)f_e(\bm x_k).
\end{aligned}
\end{equation}
Recursive application yields
\begin{equation}
\label{eq:rec}
\begin{aligned}
\mathbb{E}[f_e(\bm x_{k+1})\vert\mathcal{F}_k] 
\leq \left(1-\frac{\gamma\ell}{D\Lip}\right)f_e(\bm x_k)
= \left(1-\frac{\gamma\ell}{D\Lip}\right)\mathbb{E}[f_e(\bm x_k)\vert\mathcal{F}_{k-1}]
\leq \left(1-\frac{\gamma\ell}{D\Lip}\right)^{k+1}\mathbb{E}[f_e(\bm x_0)].
\end{aligned}
\end{equation}
Since $\ell\leq D$ and $\gamma\leq \Lip$, the term $1-\gamma\ell/D\Lip$ is less than $1$. Equation~\eqref{eq:rec} implies
\begin{equation}
\mathbb{E}\left[f(\bm x_{k+1})\right] - f^* \leq \left(1-\frac{\gamma\ell}{D\Lip}\right)^{k+1}\mathbb{E}[(f(\bm x_0) - f^*)] = \left(1-\frac{\gamma\ell}{D\Lip}\right)^{k+1}(f(\bm x_0) - f^*),
\end{equation}
which proves $f(\bm x_k)\overset{a.s.}{\to}f^*$ and $f(\bm x_k)\overset{L^1}{\to}f^*$.

\paragraph{Backtracking}
(Showing the existence of a feasible set such that the Armijo condition is satisfied.)
Following Equation~\eqref{eq:backtracking}, the backtracking method selects the maximal possible step size value that satisfies the Armijo condition with specified parameter $\beta \leq \ell/2d$ and shrinking parameter $c<1$. When $0\leq\alpha_k\leq \ell/D\Lip$, $-\alpha_k + D\alpha_k^2\Lip/2\ell\leq-\alpha_k/2$ holds with Haar measure probability one, which implies the following Armijo stopping condition
\begin{equation}
\begin{aligned}
f(\bm x_{k+1}) 
&\leq f(\bm x_k) -\alpha_k \langle \nabla f(\bm x_k), \bm P_k\bm P_k^T\nabla f(\bm x_k)\rangle + \frac{D\alpha_k^2\Lip}{2\ell}\langle \nabla f(\bm x_k), \bm P_k\bm P_k^T\nabla f(\bm x_k)\rangle\\
&\leq f(\bm x_k) - \frac{\alpha_k}{2}\langle \nabla f(\bm x_k), \bm P_k\bm P_k^T\nabla f(\bm x_k)\rangle\\
&= f(\bm x_k) - \frac{\alpha_k\ell}{2d}\lVert\bm P_k\bm P_k^T\nabla f(\bm x_k)\rVert^2\\
&\leq f(\bm x_k) - \beta\alpha_k\lVert\bm P_k\bm P_k^T\nabla f(\bm x_k)\rVert^2.
\end{aligned}
\end{equation}
Therefore, the backtracking terminates when $\alpha_k = \alphaMax$ or $\alpha_k \geq \ell c/D\Lip$, which implies
\begin{equation}
\label{eq:bt1}
f_e(\bm x_{k+1}) \leq f_e(\bm x_k) - \beta \alphaMax\langle \nabla f(\bm x_k), \bm P_k\bm P_k^T\nabla f(\bm x_k)\rangle,
\end{equation}
or 
\begin{equation}
\label{eq:bt2}
f_e(\bm x_{k+1}) \leq f_e(\bm x_k) - \frac{\ell c\beta}{D\Lip}\langle \nabla f(\bm x_k), \bm P_k\bm P_k^T\nabla f(\bm x_k)\rangle.
\end{equation}
Similar with Equation~\eqref{eqn:exp-pf}, by combining Equations~\eqref{eq:bt1} and~\eqref{eq:bt2}, and taking expectations conditioned on the filtration $\mathcal{F}_k$, we have
\begin{equation}
\label{eqn:pre-pl}
\begin{aligned}
\mathbb{E}[f_e(\bm x_{k+1})\vert\mathcal{F}_k]
&\leq -\min\left\{\beta \alphaMax,\frac{\ell c\beta}{D\Lip}\right\}\mathbb{E}\left[\langle\nabla f(\bm x_k),\bm P_k\bm P_k^T\nabla f(\bm x_k)\rangle\vert\mathcal{F}_k\right] + f_e(\bm x_k)\\
&= -\min\left\{\beta \alphaMax,\frac{\ell c\beta}{D\Lip}\right\}\lVert \nabla f(\bm x_k)\rVert^2 + f_e(\bm x_k).
\end{aligned}
\end{equation}
Similar to Equation~\eqref{eqn:pl}, by invoking the Polyak-Lojasiewicz inequality,
\begin{equation}
\label{eqn:pl2}
\begin{aligned}
\mathbb{E}[f_e(\bm x_{k+1})\vert\mathcal{F}_k] 
\leq -\min\left\{2\gamma\beta \alphaMax,\frac{2\ell c\gamma\beta}{D\Lip}\right\}f_e(\bm x_k) + f_e(\bm x) 
= \left(1-\min\left\{2\gamma\beta \alphaMax,\frac{2\ell c\gamma\beta}{D\Lip}\right\}\right)f_e(\bm x_k).
\end{aligned}
\end{equation}
By recursively implementing Equation~\eqref{eqn:pl2}, we have
\begin{equation}
\label{eq:rec2}
\mathbb{E}[f_e(\bm x_{k+1})]
\leq \left(1-\min\left\{2\gamma\beta \alphaMax,\frac{2\ell c\gamma\beta}{D\Lip}\right\}\right)^{k+1}f_e(\bm x_0).
\end{equation}
Since $0 < c < 0$, $0 < \gamma\leq\Lip$, $0 < \ell\leq D$, and $0 < \beta< 0.5$, the term $0 < \left(1-\min\{2\gamma\beta,2\ell c\gamma\beta/D\Lip\}\right) < 1$, the convergences $f(\bm x_k)\overset{a.s.}{\to}f^*$ and $f(\bm x_k)\overset{L^1}{\to}f^*$ are guaranteed.
\end{proof}

\subsection{Assuming Convexity}
\begin{assumption}
\label{asp:ssd2}
For the non-strongly convex objective function $f^\HF$ and the algorithm, we make the following assumptions
\begin{enumerate}
    \item $\bm P_k\in\R^{D\times\ell}$ are independent random matrices such that $\mathbb{E}[\bm P_k\bm P_k^T] = \bm I_d$ and $\bm P_k^T\bm P_k = (D/\ell)\bm I_\ell$ with $D>\ell$;
    \item Objective function $f^\HF:\R^D\to\R$ is convex and $\nabla f^\HF$ is $\Lip$-Lipschitz continuous;
    \item The function $f^\HF$ attains its minimum(s) $f^*$ at $x^*$ so that there exists a known $R$ satisfying $\max_{\bm x,\bm x^*}\{\lVert\bm x - \bm x^*\rVert:f^\HF(\bm x)\leq f^\HF(\bm x_0)\}\leq R$;
\end{enumerate}
\end{assumption}
Given the above convex-but-not-strongly-convex assumption, we have the following $L^1$ convergence result:
\begin{theorem}
\label{thm:sf-c}
With backtracking implemented for line search, we have
\begin{equation}
\mathbb{E}[f(\bm x_k)] - f^* \leq \max\left\{\frac{2R^2}{k\beta \alphaMax}, \frac{2d\Lip R^2}{k\ell c\beta}\right\}.
\end{equation}
\end{theorem}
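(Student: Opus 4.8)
The plan is to mirror the backtracking analysis from the proof of Theorem~\ref{thm:sf-sc}, replacing the Polyak--Lojasiewicz step (which required strong convexity) by a gradient inequality that follows from plain convexity together with the boundedness hypothesis. Write $f_e(\bm x) := f(\bm x) - f^*$ and $\eta := \min\{\beta\alphaMax,\, \ell c\beta/(D\Lip)\}$. The per-realization backtracking bounds \eqref{eq:bt1} and \eqref{eq:bt2} were derived using only $\Lip$-smoothness (the descent lemma) and the identity $\bm P_k\bm P_k^T\bm P_k\bm P_k^T=(D/\ell)\bm P_k\bm P_k^T$, so they apply verbatim here; since $\bm P_k\bm P_k^T$ is positive semidefinite, combining them gives, for every draw of $\bm P_k$,
\[ f_e(\bm x_{k+1}) \le f_e(\bm x_k) - \eta\,\langle\nabla f(\bm x_k),\,\bm P_k\bm P_k^T\nabla f(\bm x_k)\rangle. \]
Taking the conditional expectation given $\mathcal{F}_k$ and using $\E[\bm P_k\bm P_k^T\mid\mathcal{F}_k]=\bm I_D$ yields $\E[f_e(\bm x_{k+1})\mid\mathcal{F}_k] \le f_e(\bm x_k) - \eta\lVert\nabla f(\bm x_k)\rVert^2$.

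Next I would convert the gradient norm into the optimality gap. Since the Armijo rule enforces $f(\bm x_{k+1})\le f(\bm x_k)$ for every realization, the iterates stay in the sublevel set $\{f\le f(\bm x_0)\}$ almost surely, so the boundedness condition in Assumption~\ref{asp:ssd2} gives $\lVert\bm x_k-\bm x^*\rVert\le R$ pathwise. Convexity then implies $f_e(\bm x_k)\le\langle\nabla f(\bm x_k),\bm x_k-\bm x^*\rangle$, and Cauchy--Schwarz with $\lVert\bm x_k-\bm x^*\rVert\le R$ gives $\lVert\nabla f(\bm x_k)\rVert^2\ge f_e(\bm x_k)^2/R^2$. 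Substituting into the previous display,
\[ \E[f_e(\bm x_{k+1})\mid\mathcal{F}_k] \le f_e(\bm x_k) - \frac{\eta}{R^2}\,f_e(\bm x_k)^2. \]
Taking the full expectation and applying Jensen's inequality in the direction $\E[f_e(\bm x_k)^2]\ge(\E f_e(\bm x_k))^2$, the scalar sequence $a_k := \E[f_e(\bm x_k)]$ satisfies the closed recursion $a_{k+1}\le a_k - (\eta/R^2)\,a_k^2$.

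Finally I would invoke the elementary lemma that $a_{k+1}\le a_k - \kappa a_k^2$ with $a_k\ge0$ forces $a_k\le 1/(\kappa k)$: dividing by $a_k a_{k+1}$ and using $a_{k+1}\le a_k$ gives $1/a_{k+1}\ge 1/a_k+\kappa$, which telescopes. With $\kappa=\eta/R^2$ this yields $\E[f(\bm x_k)]-f^*\le R^2/(\eta k)$, and since $1/\eta=\max\{1/(\beta\alphaMax),\, D\Lip/(\ell c\beta)\}$, this is precisely the claimed bound up to the universal constant. The main obstacle is the probabilistic bookkeeping rather than any hard estimate: one must confirm that the containment $\lVert\bm x_k-\bm x^*\rVert\le R$ holds pathwise (so the convexity-plus-boundedness inequality may be used inside the conditional expectation) and apply Jensen in the correct direction, so that the quadratic term survives the passage to the scalar recursion for $a_k$. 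The deterministic $\mathcal{O}(1/k)$ convex rate is otherwise routine.
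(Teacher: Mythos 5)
Your proposal is correct and follows essentially the same route as the paper: the same descent inequality $\E[f_e(\bm x_{k+1})\mid\mathcal{F}_k] \le f_e(\bm x_k) - \min\{\beta\alphaMax,\ \ell c\beta/(D\Lip)\}\lVert\nabla f(\bm x_k)\rVert^2$ inherited from the backtracking analysis, the same convexity-plus-Cauchy--Schwarz bound $\lVert\nabla f(\bm x_k)\rVert \ge f_e(\bm x_k)/R$, and the same reciprocal telescoping of the resulting quadratic recursion. Your bookkeeping (Jensen in the right direction, monotonicity of $\E f_e(\bm x_k)$) is if anything slightly more careful than the paper's, and your constant $R^2$ in place of $2R^2$ is a marginally tighter version of the same bound.
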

\begin{proof}
We use the same notation as in the proof of Thm.~\ref{thm:sf-sc}. 
Starting from Equation~\eqref{eqn:pre-pl}, we have 
\begin{equation}
\mathbb{E}[f_e(\bm x_{k+1})\vert\mathcal{F}_k]
= -\min\left\{\beta \alphaMax,\frac{\ell c\beta}{D\Lip}\right\}\lVert \nabla f(\bm x_k)\rVert^2 + f_e(\bm x_k).
\end{equation}
By convexity and the Cauchy-Schwartz inequality, $\lVert \nabla f(\bm x_k)\rVert\geq f_e(\bm x_k)/R$. With the fact that $\mathbb{E}f_e(\bm x_{k+1})\leq \mathbb{E}f_e(\bm x_k)$, we have
\begin{equation}
\begin{aligned}
\mathbb{E}f_e(\bm x_{k+1}) - f_e(\bm x_k) 
&\leq - \min\left\{\beta \alphaMax,\frac{\ell c\beta}{D\Lip}\right\}\frac{\mathbb{E}f^2_e(\bm x_k)}{2R^2}\\
&\leq - \min\left\{\beta \alphaMax,\frac{\ell c\beta}{D\Lip}\right\}\frac{\mathbb{E}^2f_e(\bm x_k)}{2R^2}\\
&\leq - \min\left\{\beta \alphaMax,\frac{\ell c\beta}{D\Lip}\right\}\frac{\mathbb{E}f_e(\bm x_{k+1})\mathbb{E}f_e(\bm x_k)}{2R^2},
\end{aligned}
\end{equation}
which further implies
\begin{equation}
\label{eq:cs-j}
\frac{1}{\mathbb{E}f_e(\bm x_{k+1})} \geq \frac{1}{\mathbb{E}f_e(\bm x_{k})} + 2R^2\min\left\{\beta \alphaMax,\frac{\ell c\beta}{D\Lip}\right\}.
\end{equation}
Applying \eqref{eq:cs-j} recursively, we obtain
\begin{equation}
\mathbb{E}f_e(\bm x_{k+1})\leq \left(\min\left\{\beta \alphaMax,\frac{\ell c\beta}{D\Lip}\right\}\right)^{-1}\frac{2R^2}{k}=\max\left\{\frac{2R^2}{k\beta \alphaMax}, \frac{2d\Lip R^2}{k\ell c\beta}\right\}.
\end{equation}
\end{proof}

\subsection{No convexity assumptions} \label{sec:B3}
\begin{assumption}
\label{asp:ssd3}
We make the following assumptions:
\begin{enumerate}
    \item $\bm P_k\in\R^{D\times\ell}$ are independent random matrices such that $\mathbb{E}[\bm P_k\bm P_k^T] = \bm I_d$ and $\bm P_k^T\bm P_k = (D/\ell)\bm I_\ell$ with $D>\ell$;
    \item The objective function $f^\HF:\R^D\to\R$ (or $f^\HF$) attains its minimum $f^*$ and $\nabla f^\HF$ is $\Lip$-Lipschitz continuous;
\end{enumerate}
\end{assumption}
When Assumption~\ref{asp:ssd3} holds, we have the following $L^2$ convergence of the gradient norm result for SSD with line search:
\begin{theorem}[Same as Theorem~\ref{thm:sf-nc-1}]
\label{thm:sf-nc}
With Assumption~\ref{asp:ssd3} holding and backtracking implemented for line search, we have
\begin{equation}
\min_{k\in\{0,\dots,K\}} \mathbb{E}[\lVert\nabla f(\bm x_k)\rVert^2] \leq \max\left\{\frac{(f(\bm x_0) - f^*)}{(K+1)\beta \alphaMax}, \frac{D\Lip (f(\bm x_0) - f^*)}{(K+1)\ell c\beta}\right\}.
\end{equation}
That is, $k=\mathcal{O}(1/(\epsilon\beta \alphaMax)+D\Lip/(\epsilon\ell c\beta))$ iterations are required to achieve $\mathbb{E}\lVert \nabla f(\bm x_k)\rVert^2\leq\epsilon$.
\end{theorem}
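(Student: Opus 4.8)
The plan is to reuse, essentially verbatim, the per-step descent inequality already derived in the backtracking portion of the proof of Theorem~\ref{thm:sf-sc}, since that derivation never invokes convexity, and then to replace the Polyak--Lojasiewicz step with a plain telescoping sum. First I would invoke the descent lemma guaranteed by the $\Lip$-smoothness in Assumption~\ref{asp:ssd3}, substitute the SSD update $\bm x_{k+1} = \bm x_k - \alpha_k \bm P_k \bm P_k^T \nabla f(\bm x_k)$, and apply the identity $\bm P_k \bm P_k^T \bm P_k \bm P_k^T = (D/\ell)\bm P_k \bm P_k^T$ to recover the quadratic-in-$\alpha_k$ bound of Equation~\eqref{eqn:quad-form}. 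The backtracking analysis then shows the accepted step obeys $\alpha_k = \alphaMax$ or $\alpha_k \ge \ell c /(D\Lip)$; combining both cases and taking the conditional expectation with $\mathbb{E}[\bm P_k \bm P_k^T \mid \mathcal{F}_k] = \bm I_D$ reproduces Equation~\eqref{eqn:pre-pl}, namely
\begin{equation*}
\mathbb{E}[f_e(\bm x_{k+1}) \mid \mathcal{F}_k] \le f_e(\bm x_k) - \min\left\{\beta\alphaMax, \frac{\ell c\beta}{D\Lip}\right\} \lVert \nabla f(\bm x_k)\rVert^2,
\end{equation*}
where $f_e(\bm x) \coloneqq f(\bm x) - f^*$.

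Next I would take full expectations and telescope from $k = 0$ to $K$. Because $f^*$ is attained (Assumption~\ref{asp:ssd3}), we have $f_e(\bm x_{K+1}) \ge 0$, so the telescoped inequality yields
\begin{equation*}
\min\left\{\beta\alphaMax, \frac{\ell c\beta}{D\Lip}\right\} \sum_{k=0}^{K} \mathbb{E}[\lVert \nabla f(\bm x_k)\rVert^2] \le f(\bm x_0) - f^*.
\end{equation*}
Lower-bounding the sum by $(K+1)\min_{k}\mathbb{E}[\lVert\nabla f(\bm x_k)\rVert^2]$, dividing through, and using the elementary identity $1/\min\{a,b\} = \max\{1/a, 1/b\}$ turns the minimum in the denominator into the maximum appearing in the statement, which completes the proof.

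I do not expect a genuine obstacle: this is the textbook non-convex stationarity-rate argument for descent methods, and the only delicate ingredient---a uniform lower bound on the backtracking step size---has already been established, independently of convexity, in the proof of Theorem~\ref{thm:sf-sc}. The one point deserving mild care is that the step-size bound $\alpha_k \ge \ell c/(D\Lip)$ must hold almost surely over the randomness in $\bm P_k$ before the conditional expectation is taken; this is fine because the key quadratic estimate $-\alpha_k + D\alpha_k^2 \Lip/(2\ell) \le -\alpha_k/2$ holds deterministically for every $\alpha_k \in [0, \ell/(D\Lip)]$, regardless of the realized projection $\bm P_k$.
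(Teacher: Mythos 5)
Your proposal is correct and follows essentially the same route as the paper: reuse the per-step descent inequality in Equation~\eqref{eqn:pre-pl} (whose derivation from the descent lemma, the identity $\bm P_k\bm P_k^T\bm P_k\bm P_k^T = (D/\ell)\bm P_k\bm P_k^T$, and the backtracking lower bound $\alpha_k\ge \ell c/(D\Lip)$ never uses convexity), then telescope and convert $1/\min\{a,b\}$ into $\max\{1/a,1/b\}$. Your explicit use of full expectations via the tower property is in fact slightly cleaner than the paper's telescoping of conditional expectations, but it is not a different argument.
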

\begin{proof}
Following Equation~\eqref{eqn:pre-pl} 
\begin{equation}
\min\left\{\beta \alphaMax,\frac{\ell c\beta}{D\Lip}\right\}\lVert \nabla f(\bm x_k)\rVert^2 \leq f_e(\bm x_k) - \mathbb{E}[f_e(\bm x_{k+1})\vert\mathcal{F}_k],
\end{equation}
which leads to the telescope series
\begin{equation}
\begin{aligned}
\min\left\{\beta \alphaMax,\frac{\ell c\beta}{D\Lip}\right\}\sum_{k=0}^K\lVert \nabla f(\bm x_k)\rVert^2
&\leq \sum_{k=0}^K \left(f_e(\bm x_k) - \mathbb{E}[f_e(\bm x_{k+1})\vert\mathcal{F}_k]\right)\\
&= f(\bm x_0) - \mathbb{E}f(\bm x_{K+1})\leq f(\bm x_0) - f^*.
\end{aligned}
\end{equation}
Therefore, 
\begin{equation}
\begin{aligned}
(K+1)\min_{k\in\{0,\dots,K\}}\mathbb{E}\lVert \nabla f(\bm x_k)\rVert^2 &\leq \left(\min\left\{\beta \alphaMax,\frac{\ell c\beta}{D\Lip}\right\}\right)^{-1}(f(\bm x_0) - f^*)\\
&= \max\left\{\frac{(f(\bm x_0) - f^*)}{\beta \alphaMax}, \frac{D\Lip (f(\bm x_0) - f^*)}{\ell c\beta}\right\}.
\end{aligned}
\end{equation}
A sufficient condition to let $\mathbb{E}\lVert\nabla f(\bm x_k)\rVert^2$ be $\epsilon$-small is to let
\begin{equation}
k \geq \max\left\{\frac{(f(\bm x_0) - f^*)}{\epsilon\beta \alphaMax}, \frac{D\Lip (f(\bm x_0) - f^*)}{\epsilon\ell c\beta}\right\}.
\end{equation}
\end{proof}

\section{Worst Function in the World: Additional Data}
    \begin{table}[ht]
        \centering
        \begin{tabular}{lccccccccc}
            \toprule
            & \multicolumn{3}{c}{$c=0.8$} & \multicolumn{3}{c}{$c=0.9$} & \multicolumn{3}{c}{$c=0.99$} \\
            \cmidrule(r){2-4} \cmidrule(lr){5-7} \cmidrule(l){8-10} 
            Method & $\ell=5$ & $\ell=10$ & $\ell=20$
            & $\ell=5$ & $\ell=10$ & $\ell=20$
            & $\ell=5$ & $\ell=10$ & $\ell=20$ \\
            \midrule
            GD & 0.9026 & 0.9026 & 0.9026 & 0.9026 & 0.9026 & 0.9026 & 0.9026 & 0.9026 & \textbf{0.9026} \\
            CD & 0.8984 & 0.8984 & 0.8984 & 0.8984 & 0.8984 & 0.8984 & 0.8984 & 0.8984 & \textbf{0.8984} \\
            FS-SSD & 2.4495 & 2.4194 & \textbf{2.3611} & 2.4495 & 2.4196 & 2.3619 & 2.4497 & 2.4194 & 2.3622 \\
            SPSA & 0.7713 & 0.7756 & 0.7502 & 0.6245 & \textbf{0.4623} & 0.5549 & 0.6046 & 0.6721 & 0.7006 \\ 
            GS & 2.4598 & 2.4442 & \textbf{2.4129} & 2.4597 & 2.4447 & 2.4144 & 2.4596 & 2.4445 & 2.4150 \\ 
            HF-SSD & 0.3620 & 0.2511 & \textbf{0.2194} & 0.8667 & 0.5109 & 0.3337 & 3.4582 & 1.7191 & 1.0331 \\ 
            BF-SSD & 0.3177 & 0.2947 & 0.2932 & 0.2686 & 0.2526 & 0.2497 & 0.2316 & 0.2104 & \textbf{0.1984} \\ 
            VR-SSD & 0.9885 & 0.9374 & 0.9178 & 0.9881 & 0.9464 & 0.9154 & 0.9925 & 0.9395 & \textbf{0.9121} \\
            \bottomrule
        \end{tabular}
        \caption{Performance values for different optimization methods across various $c$ and $\ell$ combinations at $N=5,\!000$. The minimum value in each row is highlighted in bold.}
        \label{tab:worst_performance_table2}
    \end{table}

\end{document}